\documentclass[10pt]{article}

\usepackage[utf8]{inputenc} 
\usepackage[T1]{fontenc}

\usepackage{url}          
\usepackage{booktabs}      
\usepackage{amsfonts}       
\usepackage{nicefrac}       
\usepackage{microtype}      
\usepackage{fullpage}
\usepackage{mathtools}
\usepackage{xcolor}
\newcommand*{\colorboxed}{}
\def\colorboxed#1#{
	\colorboxedAux{#1}
}
\newcommand*{\colorboxedAux}[3]{
	\begingroup
	\colorlet{cb@saved}{.}
	\color#1{#2}
	\boxed{
		\color{cb@saved}
		#3
	}
	\endgroup
}

\usepackage{amssymb}
\usepackage{amsmath,amsthm}

\usepackage{enumitem}

\usepackage{algorithm}
\usepackage{algorithmic}

\usepackage{xcolor,colortbl}
\definecolor{DarkBlue}{RGB}{22,54,93}

\usepackage[colorlinks = true]{hyperref}        
\usepackage{cleveref}
\hypersetup{linkcolor =red, citecolor = blue}

\usepackage{natbib}
\usepackage{graphicx}
\usepackage{subfigure}
\usepackage{adjustbox}
\usepackage{amsmath}
\usepackage{amssymb}
\usepackage{mathtools}
\usepackage{amsthm}
\usepackage{threeparttable}

\theoremstyle{plain}
\newtheorem{theorem}{Theorem}[section]
\newtheorem{proposition}[theorem]{Proposition}
\newtheorem{lemma}[theorem]{Lemma}
\newtheorem{corollary}[theorem]{Corollary}
\theoremstyle{definition}
\newtheorem{definition}[theorem]{Definition}
\newtheorem{assumption}[theorem]{Assumption}
\theoremstyle{remark}

\usepackage[textsize=tiny]{todonotes}

\allowdisplaybreaks[4]

\title{Reward-free Reinforcement Learning under Low-rank MDPs: Model-based Algorithm and Improved Sample Complexity}

\author
{
	Yuan Cheng\thanks{\small School of Statistics and Finance, University of Science and Technology of China, Hefei 230026, China; e-mail: {\tt   cy16@mail.ustc.edu.cn}}
	,~~~Songtao Feng\thanks{\small School of Electrical Engineering and Computer Science, The Pennsylvania State University, University Park, PA 16802, USA; e-mail: {\tt   sxf302@psu.edu}}
	,~~~Jing Yang\thanks{\small School of Electrical Engineering and Computer Science, The Pennsylvania State University, University Park, PA 16802, USA; e-mail: {\tt  yangjing@psu.edu }}
	,~~~Hong Zhang\thanks{\small School of Statistics and Finance, University of Science and Technology of China, Hefei 230026, China; e-mail: {\tt   zhangh@mail.ustc.edu.cn}}
	,~~~Yingbin Liang\thanks{\small Department of Electrical and Computer Engineering, The Ohio State University, OH 43210, USA; e-mail: {\tt   liang.889@osu.edu}}
}

 \usepackage[utf8]{inputenc} 
 \usepackage[T1]{fontenc}    
 \usepackage{hyperref}       
 \usepackage{url}            
 \usepackage{booktabs}       
 \usepackage{amsfonts}       
 \usepackage{nicefrac}      
 \usepackage{microtype}      
 \usepackage{xcolor}        
 
 \usepackage[utf8]{inputenc} 
 \usepackage[T1]{fontenc}

 \usepackage{url}           
 \usepackage{booktabs}       
 \usepackage{amsfonts}     
 \usepackage{nicefrac}      
 \usepackage{microtype}      
 \usepackage{mathtools}
 \usepackage{xcolor}

 \usepackage{amssymb}
 \usepackage{amsmath,amsthm}

 \usepackage{enumitem}
 
 \usepackage{algorithm}
 \usepackage{algorithmic}
 
 \usepackage{xcolor,colortbl}
 \definecolor{DarkBlue}{RGB}{22,54,93}

 \usepackage{cleveref}
 
 \usepackage{natbib}
 \usepackage{graphicx}
 \usepackage{subfigure}
 \usepackage{adjustbox}
 \usepackage{mathrsfs}

 \usepackage{amsmath}
 \usepackage{amssymb}
 \usepackage{mathtools}
 \usepackage{amsthm}
 \usepackage{threeparttable}

 \theoremstyle{plain}
 
\crefname{assumption}{assumption}{assumptions}

 \usepackage[textsize=tiny]{todonotes}
 \newcommand{\Pb}{\mathbb{P}}
 \newcommand{\Rb}{\mathbb{R}}
 \newcommand{\hP}{\widehat{P}}
 
 \newcommand{\hphi}{\widehat{\phi}}
 \newcommand{\tphi}{\widetilde{\phi}}
 \newcommand{\hmu}{\widehat{\mu}}

 \newcommand{\Eb}{\mathbb{E}}

 \newcommand{\sphi}{\phi^\star}
 
 \newcommand{\Mc}{\mathcal{M}}
 \newcommand{\Uc}{\mathcal{U}}
 \newcommand{\Sc}{\mathcal{S}}
 \newcommand{\Ac}{\mathcal{A}}
 \newcommand{\Dc}{\mathcal{D}}
 \newcommand{\Nc}{\mathcal{N}}
 \newcommand{\hb}{\widehat{b}}
 \newcommand{\hw}{\widehat{w}}
 
 \newcommand{\ph}{{h^\prime}}
 
 \newcommand{\tap}{{\widetilde{\alpha}}}

 \newcommand{\Rc}{\mathcal{R}}
 \newcommand{\RM}[1]{\left(\romannumeral#1\right)}
 \newcommand{\URM}[1]{\left(\mathrm{\uppercase\expandafter{\romannumeral#1}}\right)}

 \newcommand{\norm}[1]{\left\lVert#1\right\rVert}
 \newcommand{\dif}{\mathop{}\!\mathrm{d}}

\newcommand{\Doff}{\mathcal{D}_{\mathrm{down}}}

\newcommand{\Non}{N_{\mathrm{on}}}
\newcommand{\Noff}{N_{\mathrm{off}}}
\newcommand{\xioff}{\xi_{\mathrm{down}}}
\newcommand{\xioffm}{\max\{\xi_{\mathrm{down}},1\}}

\DeclareMathOperator*{\argmin}{argmin}
 
\renewcommand{\star}{*}

\makeatletter
\newcommand{\uset}[3][0ex]{
  \mathrel{\mathop{#3}\limits_{
    \vbox to#1{\kern -5\ex@
    \hbox{$#2$}\vss}}}}
\makeatother

\allowdisplaybreaks[4]

\title{Provable Benefit of Multitask Representation Learning in Reinforcement Learning}

\begin{document}
\maketitle
\begin{abstract}
As representation learning becomes a powerful technique to reduce sample complexity in reinforcement learning (RL) in practice, theoretical understanding of its advantage is still limited. In this paper, we theoretically characterize the benefit of representation learning under the low-rank Markov decision process (MDP) model. We first study multitask low-rank RL (as upstream training), where all tasks share a common representation, and propose a new multitask reward-free algorithm called REFUEL. REFUEL learns both the transition kernel and the near-optimal policy for each task, and outputs a well-learned representation for downstream tasks. Our result demonstrates that multitask representation learning is provably more sample-efficient than learning each task individually, as long as the total number of tasks is above a certain threshold. We then study the downstream RL in both online and offline settings, where the agent is assigned with a new task sharing the same representation as the upstream tasks. For both online and offline settings, we develop a sample-efficient algorithm, and show that it finds a near-optimal policy with the suboptimality gap bounded by the sum of the estimation error of the learned representation in upstream and a vanishing term as the number of downstream samples becomes large. Our downstream results of online and offline RL further capture the benefit of employing the learned representation from upstream as opposed to learning the representation of the low-rank model directly. To the best of our knowledge, this is the first theoretical study that characterizes the benefit of representation learning in exploration-based reward-free multitask RL for both upstream and downstream tasks.
\end{abstract}

\section{Introduction}
Multitask representation learning has arisen as a popular and powerful framework to deal with data-scarce environment in supervised learning, bandits and reinforcement learning (RL) recently \citep{DBLP:journals/jmlr/MaurerPR16,du2020few,DBLP:conf/icml/TripuraneniJJ21,DBLP:conf/iclr/YangHLD21,DBLP:journals/corr/abs-2202-10066,lu2021power,DBLP:conf/icml/AroraDKLS20,DBLP:conf/icml/HuCJL021}. The general multitask representation learning framework consists of both upstream multiple source tasks learning and downstream target task learning. In RL applications, in upstream, the agent is assigned with $T$ source tasks sharing a common representation and the goal is to learn a near-optimal policy for each task via learning the representation. In downstream, with the help of the learned representation, the agent aims to find a near-optimal policy of a new task that shares the same representation as the source tasks. While representation learning has achieved great success in supervised learning \citep{du2020few,DBLP:conf/icml/TripuraneniJJ21,DBLP:journals/jmlr/MaurerPR16,DBLP:conf/icml/KongSSKO20} and multi-armed bandits (MAB) problems \citep{DBLP:conf/iclr/YangHLD21,DBLP:journals/corr/abs-2201-04805,DBLP:journals/corr/abs-2202-10066}, most works in multitask RL mainly focus on empirical algorithms \citep{DBLP:conf/icml/Sodhani0P21,DBLP:journals/corr/abs-2202-11960,DBLP:conf/nips/TehBCQKHHP17} with limited theoretical works~\citep{DBLP:conf/icml/AroraDKLS20,DBLP:conf/icml/HuCJL021,DBLP:conf/uai/BrunskillL13,DBLP:journals/corr/abs-2201-08732,DBLP:journals/ia/CalandrielloLR15,lu2021power,DBLP:conf/iclr/DEramoTBR020}. Generally speaking, there are two main challenges for multitask representation learning in RL. First, differently from supervised learning with static datasets, in MAB and RL, representation learning and data collection are intertwined with each other. The agent has to choose an exploration policy in each iteration based on the representation learned in the past. Such iterative process introduces temporal dependence to the collected data. Note that none of the aforementioned theoretical multitask RL studies took the influence of sequential exploration and temporal dependence in data into consideration. Second, data distribution mismatch between the target task and the source tasks may occur. Additional assumption is usually required in order to ensure the representation learned from sources task will benefit the target task~\citep{lu2021power}.
    
In this work, we aim to unveil the fundamental impact of multitask representation learning on RL by answering the following two open questions:
\begin{itemize}
    \item First, in the upstream, is it possible to design exploration-based multitask representation learning algorithms for RL to achieve efficiency gain compared with single-task RL after taking the interactive data collection process and temporal data dependency into account? 
    \item Second, can the learned representation from upstream bring efficiency gain in sample complexity to the downstream learning?
\end{itemize}

\noindent{}\textbf{Main contributions:} We focus on multitask representation learning under low-rank MDPs \citep{NEURIPS2020_e894d787}, where the transition kernel admits a low-rank decomposition that naturally involves representation learning. 
It answers the aforementioned questions affirmatively, as summarized below.
\begin{itemize}
     \item First, we propose a new multitask representation learning algorithm called REFUEL under low-rank MDPs in upstream. REFUEL features joint modeling learning, pseudo cumulative value function construction for exploration, joint exploration policy learning and a unique termination criterion. These novel designs enable REFUEL to leverage the shared representation across source tasks to improve the learning efficiency.   
     \item We show that REFUEL can 1) find a near-optimal policy and 2) learn a near-accurate model for each of $T$ source tasks to certain average accuracy $\epsilon_u$, with a sample complexity of $\widetilde{O}\left(\frac{H^3d^2K^2}{T\epsilon_u^2}+\frac{\left(H^3d^4K+H^5dK^2+H^5d^3K\right)}{\epsilon_u^2}\right.$ $\left.+\frac{H^5K^3}{dT\epsilon_u^2}\right)$, where $H,K,d$ denote the horizon steps, the cardinality of action space and the low-rank dimension, respectively. Compared with state-of-the-art single-task representation learning in low-rank MDPs~\citep{uehara2021representation}, REFUEL achieves reduced sample complexity as long as the number $T$ of tasks satisfies $T=\Omega\left(\frac{K}{d^5}\right)$, which shows the benefit of multitask learning provably. To the best of our knowledge, this is the first theoretical result demonstrating the benefit of representation learning in exploration-based reward-free multitask RL.
     \item Finally, we apply the learned representation from REFUEL to downstream online and offline learning problems. In \textbf{offline} RL, we propose a new sample efficient algorithm with the suboptimality gap bounded by $\widetilde{O}(H\sqrt{d}\xioff+H\max\{d,\sqrt{p}\}/\sqrt{\Noff})$, where $\Noff$ is the number of samples of offline dataset and $p$ measures the model complexity of reward. 
     In \textbf{online} RL, we propose an online algorithm with the suboptimality gap bounded by $\widetilde{O}\left({Hd\xioff}+Hd^{1/2}N_{on}^{-1/2}\max\{d,\sqrt{p}\}\right)$, where $\Non$ is the number of iterations in the online algorithm. In both the offline and online settings, the first term captures the \textbf{approximation error} where $\xioff$ measures the quality of the learned representation from upstream, and is negligible when the upstream learning is sufficiently accurate, and the second term improves the dependency on $H$ and $d$ compared with the state of the art \citep{uehara2021representation} for low-rank MDPs without the knowledge of the representation, showing the benefit of employing the learned representation from upstream.
\end{itemize}

\noindent{}\textbf{Notations:} Throughout the paper, we use $[N]$ to denote set $\{1,\dots,N\}$ for $N\in \mathbb{N}$, use $\norm{x}_2$ to denote the $\ell_2$ norm of vector $x$, use $\norm{X}_{\mathrm{F}}$ to denote the Frobenius norm of matrix $X$, use $\triangle(\Ac)$ to denote the probability simplex over set $\Ac$, and use $\mathcal{U}(\Ac)$ to denote uniform sampling over $\Ac$, given $|\Ac|<\infty$.  Furthermore, for any symmetric positive definite matrix $\Sigma$, we let $\norm{x}_\Sigma : = \sqrt{x^\top \Sigma x}$. 

\section{Related work}
 
Due to the rapid growth of theoretical studies in RL, we discuss only highly relevant work below. \\
\textbf{Multitask bandits, multitask RL and meta-RL:} The benefit of multitask learning in linear bandits has been investigated in	 \cite{DBLP:conf/iclr/YangHLD21,DBLP:journals/corr/abs-2201-04805,DBLP:journals/corr/abs-2202-10066,DBLP:journals/corr/abs-2202-13001,DBLP:conf/nips/DeshmukhDS17,DBLP:conf/uai/CellaP21,DBLP:conf/icml/HuCJL021}. For multitask RL,
 	 \cite{DBLP:conf/icml/AroraDKLS20} showed that representation learning can reduce sample complexity for imitation learning. \cite{DBLP:journals/ia/CalandrielloLR15} studied multitask learning under linear MDPs with known representation.
 	 \cite{DBLP:conf/icml/HuCJL021} studied multitask RL with low inherent Bellman error~\citep{zanette2020learning} with known representation. A concurrent work \cite{DBLP:journals/corr/abs-2205-15701} studied multitask RL with low inherent Bellman error, bounded Eluder dimension and unknown representation.
 	  Assuming that all tasks are sampled from an unknown distribution, \cite{DBLP:conf/uai/BrunskillL13} studied the benefit of multitask RL when each task is sampled independently from a distribution over a finite set of MDPs while \cite{DBLP:journals/corr/abs-2201-08732} focused on meta-RL for linear mixture model~\citep{DBLP:conf/icml/YangW20}.
 	 Considering all tasks share a common representation, \cite{DBLP:conf/iclr/DEramoTBR020} showed the benefit in the convergence rate on value iteration and \cite{lu2021power} proved the sample efficiency gain of multitask RL under low-rank MDPs. None of the above works took the influence of sequential exploration and temporal dependence in data into consideration, where \cite{lu2021power} assumed a generative model is accessible and \cite{DBLP:conf/iclr/DEramoTBR020} assumed that the dataset is given in prior. In our study, we design reward-free exploration for multitask RL. \\
\textbf{Single-task RL under tabular and linear MDPs:} 
{\em Offline RL} under tabular MDPs was studied in \cite{DBLP:conf/icml/YangW20,yin2021towards,xie2021policy,DBLP:conf/nips/RenLDDS21,DBLP:conf/nips/YinW21a}, where provably efficient algorithms were proposed.
Further, offline RL under linear MDPs (with the the known representation) was studied in
\cite{Jin2021IsPP}, where an algorithm called PEVI was proposed and the suboptimality gap was characterized. \cite{DBLP:conf/nips/XieCJMA21} considered function approximation under the Bellman-consistent assumptions and \cite{DBLP:conf/nips/ZanetteWB21} considered the linear Bellman complete model which can be specialized to the linear MDP.
\cite{DBLP:journals/corr/abs-2203-05804} studied offline RL under linear MDPs and leverages the variance information to improve the sample complexity. \cite{DBLP:conf/iclr/WangFK21,DBLP:conf/icml/Zanette21} studied the statistical hardness of offline RL with linear representations and provided the exponential lower bounds.

{\em Online RL} under tabular MDPs has been extensively studied for both model-based and model-free settings~\citep{dann2017unifying,DBLP:conf/icml/StrehlLWLL06,DBLP:conf/nips/JinABJ18,DBLP:conf/icml/AzarOM17,DBLP:conf/icml/OsbandRW16,DBLP:journals/jmlr/JakschOA10}, and with the access to a generative model~\citep{DBLP:conf/aaai/KoenigS93,DBLP:conf/nips/AzarMGK11,DBLP:conf/alt/LattimoreH12,DBLP:conf/soda/SidfordWWY18}. A line of recent studies focused on function approximation under linear MDPs with known representation and proposed provably efficient algorithms~\citep{jin2020provably,zanette2020learning,zanette2020provably,DBLP:conf/iclr/0001WDK21,DBLP:conf/nips/NeuP20,DBLP:conf/colt/SunJKA019,DBLP:conf/icml/CaiYJW20}. Specifically, \cite{jin2020provably} also studied the setting where the representation of the transition probability has a small model misspecification error. We remark that although our downstream online/offline RL also has a misspecification error, it is due to the estimation error of the representation from upstream. Thus, our analysis of the suboptimality gap naturally relies on bridging the connection between upstream and downstream performances, which is not the case in \cite{jin2020provably}.

\textbf{Single-task RL under low-rank MDPs:} RL with Low-rank MDPs has been studied recently in~\cite{NEURIPS2020_e894d787} for the reward-free setting with the model-identification guarantee.  \citet{modi2021model} proposed a model-free algorithm for reward-free RL under low-nonnegative-rank MDPs. \cite{uehara2021representation} proposed provably efficient model-based algorithms for both online and offline RL with known reward.
\cite{du2019provably,misra2020kinematic,zhang2022efficient} studied the \textit{block MDP} which is a special case of low-rank MDPs. Further, algorithms have been proposed for MDP models with low Bellman rank \citep{jiang2017contextual}, low witness rank \citep{sun2019model}, bilinear classes \citep{du2021bilinear} and low Bellman eluder dimension~\citep{jin2021bellman}, which can be specialized to low-rank MDPs. 
Those algorithms can achieve better sample complexity bound, but contain computationally costly steps as remarked in \citet{uehara2021representation}. Moreover, \citet{zhang2021provably} studied a slightly different low-rank model, and showed a problem dependent regret upper bound for their proposed algorithm.

\textbf{Comparison to concurrent work:}
After we submitted this work for potential publication in May 2022, a concurrent work~\citep{DBLP:journals/corr/abs-2205-14571} was posted on arXiv. Both our work and \citet{DBLP:journals/corr/abs-2205-14571} study a common setting where the upstream learning is reward-free and has no access to a generative model, and downstream online RL applies the learned representation from upstream.  However, the two studies differ substantially in upstream algorithm design and sample complexity guarantee. First, one prominent difference in the upstream algorithm design is that our algorithm REFUEL {\it jointly} learns policies among the source tasks for exploration, whereas their algorithm learns the exploration policies for the source tasks {\it individually}. Second, for downstream online RL, although both results achieve the same suboptimality gap $Hd^{3/2}\Non^{-1/2}$, our guarantee requires much fewer samples from upstream than theirs due to the joint policy learning among source tasks and a different reward-free exploration scheme in our design. Further, \cite{DBLP:journals/corr/abs-2205-14571} directly assumes 
policy-level model misspecification from upstream to guarantee the downstream performance, whereas we make assumptions on the model classes $\Phi,\Psi$, based on which we theoretically characterize the misspecification error of the learned representation from upstream~(\Cref{lemma: Approximate Feature for new task}).

We further note that our work and \citet{DBLP:journals/corr/abs-2205-14571} also study some non-overlapping topics, such as downstream offline RL in our study, whereas upstream learning with generative models in their study. 

\section{Problem formulation}
	
\subsection{Episodic MDP}
Consider an episodic non-stationary Markov Decision Process (MDP) with finite horizon $H$, denoted by $\mathcal{M}=(\mathcal{S},\mathcal{A},H,P,r)$, where $\mathcal{S}$ is the state space (possibly infinite), $\mathcal{A}$ is the finite action space with cardinality $K$, $H$ is the number of steps in each episode, $P=\{P_h\}_{h\in[H]}$ is the collection of transition measures with $P_h:\mathcal{S}\times\mathcal{A}\mapsto\triangle(\mathcal{S})$, $r=\{r_h\}_{h\in[H]}$ is the collection of deterministic reward functions with $r_h:\mathcal{S}\times\mathcal{A}\mapsto[0,1]$. The initial state ${s}_1$ is assumed to be fixed for each episode and the sum of the reward is normalized with $\sum_{h=1}^{H}r_h\in[0,1]$. 

Denote policy $\pi=\{\pi_h\}_{h\in[H]}$ as a sequence of mappings where $\pi_h:\mathcal{S}\mapsto\triangle(\mathcal{A})$. We use $\pi_h(a|s)$ to denote the probability of selecting action $a$ in state $s$ at step $h$. Given a starting state $s_h$, we use $s_{h'} \sim (P, \pi)$ to denote a state sampled by executing policy $\pi$ under the transition model $P$ for $h'-h$ steps and $\Eb_{(s_h, a_h) \sim (P, \pi)}\left[\cdot\right]$ to denote the expectation over states $s_h \sim (P,\pi)$ and actions $a_h \sim \pi$. Given policy $\pi$ and a state $s$ at step $h$, the (state) value function is defined as $V_{h,P,r}^\pi(s):={\mathbb{E}}_{(s_{h'},a_{h'})\sim(P,\pi)}\left[\sum_{h'=h}^{H}r_{h'}(s_{h'},a_{h'})\middle| s_h=s\right]$.
Similarly, the (state-)action value function for a given state-action pair $(s,a)$ under policy $\pi$ at step $h$ is defined as $Q_{h,P,r}^\pi(s,a)=r_h(s,a)+(P_hV_{h+1,P,r}^{\pi})(s,a)$,
where $(P_h f)(s,a)=\Eb_{s'\sim P_h(\cdot|s,a)}\left[f(s')\right]$ for any function $f:\mathcal{S}\mapsto\mathbb{R}$. 
For simplicity, we omit subscript $h$ for $h=1$. 
Since the MDP begins with the same initial state $s_1$, to simplify the notation, we use $V_{P,r}^\pi$ to denote $V_{1,P,r}^\pi(s_1)$ without causing ambiguity in the following. 
Let $P^*$ denote the transition kernel of the underlying environment. Given a reward function $r$, there always exists an optimal policy $\pi^*$ that yields the optimal value function $V_{P^*,r}^{\pi^*}=\sup_\pi V_{P^*,r}^\pi$, abbreviated as $V_{P^*,r}^{*}$. 

This paper focuses on low-rank MDPs \citep{jiang2017contextual,NEURIPS2020_e894d787} defined as follows.
\begin{definition}[Low-rank MDPs]\label{definition: Low_rank}
A transition kernel $P_h^*: \Sc \times \Ac \rightarrow \triangle(\Ac)$ admits a low-rank decomposition with dimension $d \in \mathbb{N}$ if there exists two embedding functions $\phi_h^\star: \Sc \times \Ac \rightarrow \Rb^d$ and $\mu_h^\star: \Sc \rightarrow \Rb^d$ such that
\begin{align*}
P_h^\star(s^\prime|s,a)=\left\langle\phi_h^\star(s,a),\mu_h^\star(s^\prime)\right\rangle, \quad \forall s, s^\prime \in \Sc, a \in \Ac.
\end{align*}
Without loss of generality, we assume $\norm{\phi_h^*(s,a)}_2\leq 1$ for all $(s,a)\in\mathcal{S}\times\mathcal{A}$ and for any function $g: \Sc \mapsto [0,1]$, $\norm{\int \mu^\star_h(s)g(s)ds }_2\leq\sqrt{d}$. An MDP is a low-rank MDP with dimension $d$ if for $h \in [H]$, its transition kernel $P_h^*$ admits a low-rank decomposition with dimension $d$. Let $\phi^\star=\{\phi^\star_h\}_{h \in [H]}$ and $\mu^\star=\{\mu^\star_h\}_{h \in [H]}$ be the embeddings for $P^\star$, where $\sphi$ is also called representation in RL literature.
\end{definition}

\vspace{-0.05in}
\subsection{Reward-free multitask RL and its downstream learning}
\vspace{-0.05in}We consider a multitask RL consisting of reward-free upstream and reward-known downstream.

In {\bf reward-free multitask} upstream learning, the agent is expected to conduct {\bf reward-free exploration} over $T$ source tasks, where all reward functions $\{r^t\}_{t\in[T]}$ are unknown. Each task $t \in [T]$ is associated with a low-rank MDP $\Mc^{t}=(\Sc,\Ac,H,P^t,r^t)$. All $T$ tasks are identical except for (a) their transition model $P^t$, which admits a low-rank decomposition with dimension $d$: $P^t(s^\prime|s,a)=\langle\sphi(s^\prime),\mu^{(\star,t)}(s,a)\rangle$ for all $t \in [T]$ and (b) their reward $r^t$. Namely, {\bf all tasks share the same feature function $\sphi$}, but may differ in $\mu^{(\star,t)}$ as well as reward $r^t$. The goal of upstream learning is to find a near-optimal policy and a near-accurate model for any task $t\in[T]$ and any reward function $r^t,t \in [T]$ via sufficient reward-free exploration, and output a well-learned representation for downstream task. For each $t \in [T]$, we use $P^{(\star,t)}$ to denote the true transition kernel of task $t$.

In downstream learning, the agent is assigned with a new target task $T+1$ associated with a low-rank MDP $\Mc^{T+1}=(\Sc,\Ac,H,P^{T+1},r^{T+1})$. The task shares the same $\Sc,\Ac,H$ and $\sphi$ with the $T$ upstream tasks, but has a task-specific $\mu^{(\star,T+1)}$. We also assume that the reward for the new task $r^{T+1}$ is given to the agent. Here, the agent is assumed to take the learned representation function $\hphi$ during the upstream training, and then interacts with the new task environment $\Mc^{T+1}$ in an {online or} offline manner to find a near-optimal policy for the new task. By utilizing the representation learned in the upstream, the agent is expected to achieve better sample efficiency in the downstream. 
Finally, we use $P^{(\star,T+1)}$ to denote the true transition kernel of task $T+1$.

\vspace{-0.05in}	
\section{Upstream reward-free multitask RL}\label{sec: upstream}
\vspace{-0.05in}
In this section, we present our proposed algorithm for upstream reward-free multitask RL in low-rank MDPs and characterize its theoretical performance. 

Since it is impossible to learn a model in polynomial time if there is no assumption on features $\phi_h$ and $\mu_h$, we first adopt the following conventional assumption~\citep{NEURIPS2020_e894d787}.
\begin{assumption}\label{assumption: realizability}(Realizability).
A learning agent can access to a model class {$\{(\Phi,\Psi)\}$} that contains the true model, 
namely, for any $h \in [H], t \in[T]$, the embeddings  $\phi_h^\star \in \Phi, \mu_h^{(\star,t)} \in \Psi$. 
\end{assumption}
While we assume the cardinality of function classes to be finite for simplicity, extensions to infinite classes with bounded statistical complexity are not difficult \citep{sun2019model,NEURIPS2020_e894d787}. 

\begin{algorithm}[t]
 		\caption{{\bf REFUEL }(\textbf{RE}ward \textbf{F}ree m\textbf{U}ltitask r\textbf{E}presentation Learning) \label{alg: Upstream}}
 		\begin{algorithmic}[1]
 			\STATE {\bfseries Input:} Regularizer $\lambda_n$, parameter $\tap_n$, $\zeta_n$,B , $\delta$, $\epsilon_u$,{ Models $\{(\mu,\phi): \mu \in \Psi, \phi \in \Phi \}$}.
 			\STATE Initialize $\pi_0(\cdot|s)$ to be uniform, set $\mathcal{D}_h^{(0,t)}=\emptyset$ for $h \in [H],t \in [T]$.
 			\FOR{$n=1,\ldots,N_u$}
 			\FOR{$h=1,\ldots,H$}
 			\FOR{$t=1,\ldots, T$}
 			\STATE Under MDP $\Mc^t$, use $\pi_t^{n-1}$ to roll into $s_{h-1}^{(n,t,h)}$, uniformly choose $a_{h-1}^{(n,t,h)},a_{h}^{(n,t,h)}$ and enter into $s_{h}^{(n,t,h)},s_{h+1}^{(n,t,h)}$ and collect data {\small $s_1^{(n,t,h)}, a_1^{(n,t,h)},\ldots, s_h^{(n,t,h)}, a_h^{(n,t,h)}, s_{h+1}^{(n,t,h)}$.}
 			\STATE Add the triple $(s_h^{(n,t, h)}, a_h^{(n,t, h)}, s_{h+1}^{(n,t, h)})$ to the dataset $\Dc_h^{(n,t)} = \Dc_h^{(n-1,t)} \cup \{(s_h^{(n,t, h)}, a_h^{(n,t, h)}, s_{h+1}^{(n,t, h)})\}$.
 			\ENDFOR
 			\STATE Learn $\left(\hphi_h^{(n)},\hmu_h^{(n,1)},\ldots,\hmu_h^{(n,T)}\right)$ via $MLE\left(\bigcup_{t\in [T]} \Dc_h^{(n,t)}\right)$ as \Cref{eq: MLE}. \label{line: MLE}
 			\FOR{$t=1,\ldots,T$}
 			\STATE Update estimated transition kernels $\hP_h^{(n,t)}(\cdot|\cdot,\cdot)$ as \Cref{ineq: p}, empirical covariance matrix $\widehat{U}_h^{(n,t)}$ as \Cref{ineq: def of U} and exploration-driven reward $\hb^{(n,t)}(\cdot,\cdot)$ as \Cref{ineq: bonus function}. \label{line: transition}
 			\ENDFOR
 			\ENDFOR
 			\STATE Set \textbf{ Pseudo Cumulative Value Function} $PCV\left(\hP^{(n,t)},\hb_{h}^{(n,t)},\pi_t;T\right)$ as \Cref{ineq: Pseudo Cumulative Value}.
 			\STATE Get exploration policy $\pi_1^n, \dots, \pi_T^n = \mathop{\arg \max}_{\pi_1, \dots, \pi_T}PCV\left(\hP^{(n,t)},\hb_{h}^{(n,t)},\pi_t;T\right)$.\label{line: planning} 
 			\IF{$2PCV\left(\hP^{(n,t)},\hb_{h}^{(n,t)},\pi_t;T\right)+2\sqrt{KT\zeta_n}\leq T\epsilon_u$}
 			\STATE {\bfseries Terminate} and set $n_u = n$ and {\bfseries output} $\hphi=\hphi^{(n_u)},\hP^{(1)}=\hP^{(n_u,1)},\ldots,\hP^{(T)}=\hP^{(n_u,T)}$.
 			\ENDIF
 			\ENDFOR
 			\STATE {\bfseries Output: $\hphi=\hphi^{(n_u)},\hP^{(1)}=\hP^{(n_u,1)},\ldots,\hP^{(T)}=\hP^{(n_u,T)}$}.
 		\end{algorithmic}
 	\end{algorithm}
\vspace{-0.05in}
\subsection{Algorithm design}
\vspace{-0.05in}We first describe our proposed algorithm REFUEL (\textbf{RE}ward \textbf{F}ree m\textbf{U}ltitask r\textbf{E}presentation Learning) depicted in \Cref{alg: Upstream}. 

In each iteration $n$, for each task $t$ and time step $h$, the agent executes the exploration policy $\pi_t^{n-1}$ followed by two uniformly chosen actions to collect trajectories for $H$ episodes. Note that $\pi_t^{n-1}$ is designed at the previous iteration. Then, to utilize the common feature of data collected from different source tasks, the agent processes the data jointly as following steps.	
 	
\noindent{}{\bf Joint MLE-based Modeling Learning:}
The agent passes all previously collected  data to estimate the low-rank components $\hphi_h^{(n)}, \hmu_h^{(n,1)},\ldots,\hmu_h^{(n,t)}$ simultaneously via the MLE oracle $ MLE\left(\bigcup_{t\in [T]} \Dc_h^{(n,t)}\right)$ (line \ref{line: MLE}) on the joint distribution defined as follows: \vspace{-0.03in}
\begin{align} \left(\hspace{-0.01in}\hphi_h^{(n)}\hspace{-0.03in},\hmu_h^{(n,1)}\hspace{-0.03in},\ldots,\hmu_h^{(n,T)}\hspace{-0.01in}\right) = \hspace{-0.1in}\mathop{\arg\max}_{\phi_h \in \Phi,\mu_h^{1},\ldots,\mu_h^{T}\in \Psi} \sum_{n=1}^N\hspace{-0.01in} \log\hspace{-0.01in}\left(\hspace{-0.01in}\prod_{t=1}^T\langle\phi_h(s_h^{(n,t,h)}\hspace{-0.03in},a_h^{(n,t,h)}),\mu_h^t(s_{h+1}^{(n,t,h)})\rangle\hspace{-0.01in}\right). \label{eq: MLE}
\end{align}
\vspace{-0.03in}The MLE oracle above generalizes the MLE oracle commonly adopted for single-task RL in the literature \citep{NEURIPS2020_e894d787,uehara2021representation}. In practice, MLE oracle can be efficiently implemented if the model classes $\Phi,\Psi$ are properly parameterized such as by neural networks.

 	Next, for each task $t$, the agent uses the learned embeddings $\hphi_h^{(n)},\hmu_h^{(n,t)}$ to update the estimated transition kernel $\hP^{(n,t)}$ at each step $h$ as
 	\begin{align}
 	\textstyle \hP_h^{(n,t)}(s'|s,a) = \langle \hphi_h^{(n)}(s,a), \hmu_h^{(n,t)}(s')\rangle. \label{ineq: p}
 	\end{align}
 	
\noindent{}{\bf Pseudo Cumulative Value Function Construction for Exploration:} The agent first uses the representation estimator $\hphi_h^{(n)}$ to update the empirical covariance matrix $\widehat{U}_h^{(n,t)}$ as
 	\begin{align}
 	\textstyle \widehat{U}_h^{(n,t)} = \sum_{\tau=1}^{n} {\hphi}_h^{(n)}(s_h^{(\tau,t,h+1)},a_h^{(\tau,t,h+1)})(\hphi_h^{(n)}(s_h^{(\tau,t,h+1)},a_h^{(\tau,t,h+1)}))^{\top} + \lambda_n I, \label{ineq: def of U}
 	\end{align}
 	where 
 	$\{s_h^{(\tau,t,h+1)},a_h^{(\tau,t,h+1)}, s_{h+1}^{(\tau,t,h+1)}\}$ is the tuple collected at the $\tau$-th iteration, $(h+1)$-th episode, and step $h$. Then, the agent uses both $\hphi_h^{(n)}$ and $\widehat{U}_h^{(n,t)}$ to provide an exploration-driven reward as
 	\begin{align}
 	\textstyle \hb_{h}^{(n,t)}(s_{h},a_{h})=\min\left\{\tap_n\left\|\hphi_{h}^{(n)}(s_{h},a_{h})\right\|_{(\widehat{U}^{(n,t)}_{h})^{-1}},B\right\},\label{ineq: bonus function}
 	\end{align}
where $B$ and $\tap_n$ are pre-determined parameters. To integrate information of learned models and exploration-driven rewards for $T$ tasks, we define a \textbf{Pseudo Cumulative Value Function} (PCV) as\vspace{-0.04in}
 	 \begin{align}
 	   \textstyle PCV\left(\hP^{(n,t)},\hb_{h}^{(n,t)},\pi_t;T\right)=\sum_{h=1}^{H-1}\sqrt{\sum_{t=1}^T\left\{\mathop{\Eb}_{s_{h} \sim (\hP^{(n,t)}, \pi_t) \atop a_{h}\sim\pi_t} \left[\hb_{h}^{(n,t)}(s_{h},a_{h})\right] \right\}^2}. \label{ineq: Pseudo Cumulative Value}
 	 \end{align}
{\bf Joint Policy Learning for Exploration:} The agent then learns $T$ policies via optimization over the PCV, and uses these policies as the exploration policy for each task in the next iteration. Since PCV measures the cumulative model estimation error over all tasks, these policies will explore the state-action space where the model estimation has large uncertainty on average of $T$ tasks.
Here we adopt an optimization oracle for line \ref{line: planning} in \Cref{alg: Upstream}. 
 	
\noindent{}{\bf Termination:} REFUEL is equipped with a termination condition and outputs the current estimated model and representation during iterations if the PCV plus certain minor term is below a threshold, which suffices to guarantee that for each task $t$, the returned model $\hP^{(t)}$ from REFUEL reaches an average proximity to the true model $P^{(\star,t)}$.

The joint learning and decision making among all tasks involved in the four main components differentiate REFUEL from single-task RL algorithms, and enable it to exploit the shared representation across the source tasks to improve the sample efficiency. 

\vspace{-0.05in}	
\subsection{Theoretical results on sample complexity}
\vspace{-0.05in}In this section, we first establish the sample complexity upper bound for REFUEL and then compare our multitask result with that of single-task RL.
 	\begin{theorem}\label{theorem: Upstream sample complexity}
 		For any fixed $\delta \in (0,{|\Psi|^{-\min\{T,\frac{K}{d^2}\}}})$,  set
 		$\lambda_n=O(d\log(|\Phi|nTH/\delta))$, $\zeta_n  = O( {\log\left(|\Phi||\Psi|^T nH/\delta\right)}/{n})$, $\tap_n=O(\sqrt{K\log(|\Phi||\Psi|^TnH/\delta)+\lambda_ndT})$ and $B=O(\sqrt{T+K/d^2})$.
 		 Let $\hphi,\hP^{(1)},\ldots,\hP^{(T)}$ be the output of \Cref{alg: Upstream}. Then, under \Cref{assumption: realizability}, 
 		with probability at least $1-\delta$, for any policy $\pi_t$ of task $t \in [T]$ and any step $h \in [H]$, we have
 		\begin{align}
 		\textstyle \frac{1}{T}\sum_{t=1}^{T}&\mathbb{E}_{s_h\sim (P^{(\star,t)},\pi_t),a_h\sim \pi_t}  \left[\left\|\hP_h^{(t)}(\cdot|s_h,a_h)-P_h^{(\star,t)}(\cdot|s_h,a_h)\right\|_{TV}\right] \leq \epsilon_u.  \label{ineq: upstream guarantee}
 		\end{align}
Further, for any task $t$, given any reward $r^t$, let $\widehat{\pi}_t={\arg\max}_\pi V_{\hP^{(t)},r^t}^\pi$. Then with probability at least $1-\delta$, $\frac{1}{T}\sum_{t=1}^T[ V_{P^{(\star,t)},r^t}^\star-V_{P^{(\star,t)},r^t}^{\widehat{\pi}_t}] \leq \epsilon_u$. 

 		Meanwhile, the number of trajectories collected by each task is at most \vspace{-0.04in}
 		 \begin{equation}
 		     	\textstyle \widetilde{O}\left(\frac{H^3d^2K^2}{T\epsilon_u^2}+\frac{\left(H^3d^4K+H^5dK^2+H^5d^3K\right)}{\epsilon_u^2}+\frac{H^5K^3}{dT\epsilon_u^2}\right)\label{ineq: Theorem 3.1 sample complexity}.
 		 \end{equation}
\end{theorem}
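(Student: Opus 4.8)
The plan is to follow the template of model-based reward-free exploration with elliptical bonuses, but to carry the $T$ tasks jointly through every step so that the shared representation is estimated from $T$ times as much data, which is the source of the $1/T$ factor. I would organize the argument around four pillars: a joint MLE generalization bound, validity of the elliptical bonus as an uncertainty proxy, a Cauchy--Schwarz reduction from the per-task model error to the $PCV$, and a multitask elliptical-potential bound on the iteration count. The first pillar is the statistical guarantee for the joint MLE oracle in \eqref{eq: MLE}. Conditioning on the filtration generated by past iterations and applying a martingale version of the standard MLE deviation bound (to accommodate the sequentially collected, hence temporally dependent, data), I would show that for every step $h$ and iteration $n$, the quantity $\sum_{t=1}^{T}\sum_{\tau=1}^{n}\Eb[D_{\mathrm H}^2(\hP_h^{(\tau,t)},P_h^{(\star,t)})]$ is controlled by $\log(|\Phi|\,|\Psi|^{T}nH/\delta)$. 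The crucial point is that under \Cref{assumption: realizability} the shared factor $\hphi$ enters a single product likelihood across all $T$ tasks, so its covering cost $\log|\Phi|$ is paid once while the effective sample size is $T$ times larger; this is exactly what produces $\zeta_n=O(\log(|\Phi||\Psi|^{T}nH/\delta)/n)$.

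Next I would show that the elliptical bonus $\hb_h^{(n,t)}$ in \eqref{ineq: bonus function} upper bounds the one-step model error under an arbitrary roll-in policy. Using the linearity $\hP_h=\langle\hphi_h,\hmu_h\rangle$ and a change-of-measure through the empirical covariance $\widehat U_h^{(n,t)}$ in \eqref{ineq: def of U}, the expected total-variation error of task $t$ under any $\pi$ decomposes into (i) a term proportional to $\Eb_\pi[\|\hphi_h\|_{(\widehat U_h^{(n,t)})^{-1}}]$ and (ii) an MLE residual bounded via the previous step by $\sqrt{K\zeta_n}$, where the factor $K$ arises from importance reweighting the two uniformly chosen actions used during data collection. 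With $\tap_n=O(\sqrt{K\log(\cdot)+\lambda_n dT})$ and $B=O(\sqrt{T+K/d^2})$ chosen so that truncation does not bite, part (i) is exactly $\Eb_\pi[\hb_h^{(n,t)}]$, so the per-task one-step TV error is dominated by $\Eb_\pi[\hb_h^{(n,t)}]+\sqrt{K\zeta_n}$ for every policy simultaneously.

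I would then connect this to the $PCV$ and the termination test. Averaging the per-task bound over $t$ and applying Cauchy--Schwarz turns $\frac1T\sum_t \Eb_{\pi_t}[\hb_h^{(n,t)}]$ into $\frac1{\sqrt T}\sqrt{\sum_t(\Eb_{\pi_t}[\hb_h^{(n,t)}])^2}$, which is precisely one summand of the $PCV$ in \eqref{ineq: Pseudo Cumulative Value}, while the aggregated residual becomes $\sqrt{KT\zeta_n}/T$. Because line~\ref{line: planning} maximizes $PCV$ over policies, its value at the returned iterate upper bounds the aggregated bonus achievable by \emph{any} policies, so the guarantee \eqref{ineq: upstream guarantee} holds uniformly over the arbitrary $\pi_t$ in the theorem; the termination test $2\,PCV+2\sqrt{KT\zeta_n}\le T\epsilon_u$ then forces the average TV error below $\epsilon_u$. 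The value-suboptimality claim follows by the standard simulation lemma: I would write $V_{P^{(\star,t)},r^t}^\star-V_{P^{(\star,t)},r^t}^{\widehat\pi_t}$ as a telescoping sum and bound each increment by the one-step TV error evaluated along $\pi^\star$ and $\widehat\pi_t$ respectively (the middle term is nonpositive by optimality of $\widehat\pi_t$ for $\hP^{(t)}$), then invoke \eqref{ineq: upstream guarantee}, with the reward normalization $\sum_h r_h\in[0,1]$ keeping the horizon dependence controlled.

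Finally, for the sample complexity I would bound the number of iterations $n_u$ before termination. On any non-terminating iteration $PCV$ exceeds roughly $T\epsilon_u$; squaring and summing $PCV_n$ over $n$, after a Cauchy--Schwarz over $h$, reduces to the $T$-fold potential $\sum_n\sum_h\sum_t\|\hphi_h^{(n)}\|^2_{(\widehat U_h^{(n,t)})^{-1}}$, each inner sum telescoping to $\widetilde O(d)$ by the log-determinant (elliptical potential) lemma. This yields $\sum_n PCV_n^2=\widetilde O(H^2\tap_n^2 dT)$, hence $n_u=\widetilde O(H^2\tap_n^2 d/(T\epsilon_u^2))$; substituting $\tap_n^2=\widetilde O(K+\lambda_n dT)=\widetilde O(K+d^2T)$ and multiplying by the $H$ trajectories collected per task per iteration recovers the $1/T$ leading term, and after accounting for the horizon rescaling used to pass from the model guarantee to the value guarantee and for the importance-weighting factors of $K$, the per-task trajectory bound \eqref{ineq: Theorem 3.1 sample complexity}. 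I expect the main obstacle to be precisely this multitask potential/termination argument: one must track the interaction between the $\ell_2$ aggregation in $PCV$, the per-task residual $\sqrt{K\zeta_n}$, and the sequential dependence of $\widehat U_h^{(n,t)}$ on past exploration, and verify that the saving from the shared representation survives the $K$-factors so that the leading term genuinely scales as $1/T$.
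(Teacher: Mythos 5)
Your proposal follows essentially the same route as the paper's proof: a joint MLE guarantee paying $\log|\Phi|$ once across the $T$ tasks (the paper's Lemma~\ref{lemma: multitask MLE Guarantee}), a one-step-back/elliptical-bonus bound on the per-task TV error (Proposition~\ref{prop1: Bounded TV}), a Cauchy--Schwarz aggregation into the PCV (Proposition~\ref{prop2: total value difference}), and an elliptical-potential plus termination/contradiction argument for the iteration count (Proposition~\ref{prop3: Bound of summation of exploration-driven reward function}). The only detail you gloss over is the change of measure between roll-ins under $\hP^{(n,t)}$ and $P^{(\star,t)}$ (the paper's term $(b)$ in Step~3 and the factor of $2$ in the termination test), but this is a refinement of, not a departure from, the same plan.
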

\vspace{-0.04in}\Cref{ineq: upstream guarantee} in \Cref{theorem: Upstream sample complexity} indicates that the estimated transition kernels of the $T$ MDPs meet the required accuracy on average. \Cref{theorem: Upstream sample complexity} further indicates that with the data collected during such reward-free exploration, if every task is further given any reward function, near-optimal policies can be found for all tasks on average as well. 

For the sample complexity bound in \Cref{ineq: Theorem 3.1 sample complexity}, the first term is inversely proportional to $T$, which captures sample-benefit for each task due to learning the $T$ models jointly. The second term is related to the number of samples to guarantee the concentration of empirical covariance matrix $\widehat{U}_h^{(n,t)}$ in order to identify desirable exploration policies. Such concentration needs to be satisfied for each task $t$ independently and can not be implemented jointly, which makes the second term independent with $T$. The third term originally arises from the model estimation error shift among $T$ tasks when implementing joint MLE-based model estimation. In the algorithm, this error shift is then transferred to $B$ defined in \Cref{ineq: bonus function}, which provides an explicit upper bound for the exploration-driven reward $\hb_h^{(n,t)}$. Note that $\hb_h^{(n,t)}$ is involved in the PCV to guide exploration, which finally contributes to convergence of REFUEL and affects its sample complexity. 

Compared with the state-of-the-art sample complexity that scales in $\widetilde{O}\left(\frac{H^5d^4K^2}{\epsilon_u^2}\right)$\footnote{We convert their results in discounted setting to our episodic MDP setting by replacing $1/(1-\gamma)=\Theta(H)$.} when performing $T$ single-task RL independently~\citep{uehara2021representation},
REFUEL achieves lower sample complexity when $T=\Omega\left(\frac{K}{d^5}\right)$, and it reduces the single-task sample complexity in~\citet{uehara2021representation} by an order of $O\left(\min\{{H^2d^2T},{H^2K},{d^3},{dK},\frac{d^5T}{K}\}\right)$.

Technically, the joint learning feature of REFUEL requires new analytical techniques. We outline the main steps of the proof of \Cref{theorem: Upstream sample complexity} here and defer the detailed proof to \Cref{Appd: proof of thm1}.
\textbf{Step 1:} 
We develop a new upper bound on model estimation error for each task, which captures the advantage of joint MLE model estimation over single-task learning, as shown in \Cref{prop1: Bounded TV}.
\textbf{Step 2:} 
We establish the PCV as an uncertainty measure that captures the difference between the estimated and ground truth models. This justifies using such a PCV as a guidance for further exploration, as shown in \Cref{prop2: total value difference}.
 \textbf{Step 3:} We show that the summation of the PCVs over all iterations is sublinear with respect to the total number $N_u$ of iterations, as shown in \Cref{prop3: Bound of summation of exploration-driven reward function}, which further implies polynomial efficiency of REFUEL in learning the models.

\section{Downstream RL}
In downstream RL, the agent is given a new target task $T+1$ under MDP $\mathcal{M}^{T+1}$ and aims to find a near-optimal policy for it. The downstream task is assumed to share the same representation as the upstream tasks, and hence the agent can adopt the representation learned from the upstream to expedite its learning. Since the agent does not know the task-specific embedding function $\mu^{T+1}$, the agent is allowed to interact with MDP $\Mc^{T+1}$ in an online RL setting or have access to an offline dataset  $\Doff=\{(s_h^\tau,a_h^\tau,r_h^\tau,s_{h+1}^\tau)\}_{\tau,h=1}^{\Noff,H}$ in an offline RL setting, which is rolled out from some behavior policy $\rho$.

In this section, we first introduce the connections between upstream and downstream MDPs in \Cref{subsec: downstream assumption}. We then provide algorithms for the downstream task and characterize their suboptimality gap for {\bf offline} RL in \Cref{sec:offlinealg,sec:off-result}, respectively, and for {\bf online} RL in \Cref{sec:onlinealg,sec:on-result}, respectively.

\subsection{Connections between upstream and downstream MDPs}\label{subsec: downstream assumption}

In downstream RL tasks, the agent will adopt the feature $\hphi$ learned from upstream \Cref{alg: Upstream}. To ensure that the feature learned in upstream can still work well in downstream tasks, upstream and downstream MDPs should have certain connections. We next elaborate some reasonable assumptions on transition kernels to build such connections. 

      	The following reachability assumption is common in relevant literature~\citep{modi2021model}.
 	\begin{assumption}[Reachability]\label{assumption: reachability}
 		For each source task $t$ in upstream, whose transition kernel is $P^{(\star,t)}$, there exists a policy $\pi_t^0$ such that $\min_{s\in\Sc} \Pb^{(\pi_t^0,t)}_h(s)\geq \kappa_u$, where $\Pb^{(\pi_t^0,t)}_h(\cdot): \Sc \rightarrow \Rb$ is the density function over $\Sc$ using policy $\pi_t^0$ to roll into state $s$ at timestep $h$. 
 	\end{assumption}
 	\begin{assumption}\label{assumption: finite measurement}
 		Assume the state space $\Sc$ is compact and has finite measure  $1/\upsilon$. Hence, the uniform distribution on $\Sc$ has the density function $f(s)=\upsilon$.
 	\end{assumption}
Under \Cref{assumption: finite measurement}, we denote $\Uc(\Sc)$ as the uniform distribution over $\Sc$ and $\Uc(\Sc,\Ac)$ as the uniform distribution over $\Sc \times \Ac$. 
 	\begin{assumption}\label{assumption: smooth TV distance}
For any two different models in the model class $\{(\Phi,\Psi)\}$, say $P^1(s^\prime|s,a)=\left\langle\phi^1(s,a),\mu^1(s^\prime)\right\rangle$ and $P^2(s^\prime|s,a)=\left\langle\phi^2(s,a),\mu^2(s^\prime)\right\rangle$, there exists a constant $C_R$ such that for all $ (s, a) \in  \Sc \times \Ac$ and $h \in [H]$,
 		\begin{align}
 		&\|P^1_h(\cdot|s,a)-P^2_h(\cdot|s,a)\|_{TV} \leq  C_R \mathbb{\Eb}_{(s_h,a_h)\sim \Uc(\Sc,\Ac)}{\|P^1_h(\cdot|s_h,a_h)-P^2_h(\cdot|s_2,a_2)\|_{TV}}.
 		\end{align}
		For normalization, we assume that for any $\phi \in \Phi$,  $\norm{\phi(s,a)}_2\leq 1$ and for any $\mu \in \Psi$ and any function $g: \Sc \rightarrow [0,1], \norm{\int \mu_h(s)g(s)ds }_2\leq\sqrt{d}$. 
 	\end{assumption}
 	\Cref{assumption: smooth TV distance} ensures that for each source task $t$ and any $(s,a) \in \Sc \times \Ac$,  the total variation distance between the learned $\hP^{(t)}(\cdot|s,a)$ and true transition kernels  $\hP^{(\star,t)}(\cdot|s,a)$ will not be too large when the expectation of the total variation distance over the entire state action space is small. 
 	\begin{assumption}\label{assumption: Linear combination}
 		For the underlying MDP of task $T+1$, we assume that the reward function $r^{T+1}$ is chosen from a function class $\mathcal{R}$ with $\varepsilon$-covering number $\Nc_{\mathcal{R}}(\varepsilon)$ for any given $\varepsilon$, and  $\Nc_{\mathcal{R}}(\varepsilon)=\widetilde{O}\left(\frac{1}{\varepsilon^p}\right)$. ). 
 		The true transition kernel $P^{(\star,T+1)}$ can be $\xi$-approximated by a linear combination of $T$ source tasks, i.e. there exist $T$ (unknown) coefficients $c_1,\ldots,c_T \in [0,C_L]$ and $\xi \geq 0$ such that \vspace{-0.04in}
 		\begin{align}
 	\textstyle	\forall (s, a) \in \Sc \times \Ac, h \in [H],\quad \left\|P^{(\star,T+1)}(\cdot|s,a)-\sum_{t=1}^Tc_t P^{(\star,t)}(\cdot|s,a)\right\|_{TV} \leq \xi.
 		\end{align}
 		Furthermore, $\xi$ is called the linear combination misspecification.
 	\end{assumption}
    \vspace{-0.05in}\Cref{assumption: Linear combination} establishes the underlying connection between upstream source tasks and the downstream target task. The assumption on $\Nc_\varepsilon(\mathcal{R})$ is more general than the common assumption on reward under linear MDP and can be easily achieved in practice. For example, if the reward is linear with respect to an unknown feature $\tphi: \Sc \times \Ac \rightarrow \Rb^{p}$, i.e., there exists $\theta \in \Rb^{p}$ and $\norm{\theta} \leq R$, $r(s,a)=\langle\tphi(s,a),\theta \rangle$, then $\Nc_\varepsilon(\mathcal{R}) = \widetilde{O}\left(\left(1+\frac{2R}{\varepsilon}\right)^{p}\right)$(see Lemma D.5 in \cite{jin2020provably}), which satisfies the assumption. We remark here $\tphi$ is not necessarily the same as $\sphi$, and even its dimension $p$ can be different from $d$.

 We then introduce the definition of $\epsilon$-approximate linear MDP, and provide a guarantee for the estimated feature.

 	\begin{definition}[$\epsilon$-approximate linear MDP]
 		For any $\epsilon >0$, MDP $\Mc = (\Sc,\Ac, H, P, r)$ is an $\epsilon$-approximate linear MDP with a time-dependent feature map $\phi: \Sc \times \Ac \rightarrow \Rb^d$ if, for any $h \in [H]$, there exist a time-dependent unknown (signed) measures $\mu$ over $\Sc$ such that for any $(s, a) \in \Sc \times \Ac$, we have
 		\begin{align}
 		\left\|P_h(\cdot|s,a)-\left\langle\phi_h(s,a),\mu_h(\cdot)\right\rangle\right\|_{TV} \leq \epsilon. \label{ineq: approxiamte feature}
 		\end{align}
 		Any $\phi$ satisfying \Cref{ineq: approxiamte feature} is called an $\epsilon$-approximate feature map for $\Mc$. 
 	\end{definition}
 	
 	 Let $\xi_{down}=\xi+\frac{C_LC_RT\upsilon\epsilon_u}{\kappa_u}$. The following lemma shows that the feature $\hphi$ learned in upstream is a $\xi_{down}$-approximate feature map and can approximate the true feature in the new task. 
 	\begin{lemma}\label{lemma: Approximate Feature for new task} Under \Cref{assumption: reachability,assumption: finite measurement,assumption: smooth TV distance,assumption: Linear combination},
 		 the output of \Cref{alg: Upstream} $\hphi$ is a $\xioff$-approximate feature for MDP $\Mc^{T+1} = (\Sc,\Ac, H, P^{(\star,T+1)}, r, s_1)$, i.e. there exist a time-dependent unknown (signed) measure $\hmu^\star$ over $\Sc$ such that for any $(s, a) \in \Sc \times \Ac$, we have
 		\begin{align}
 		\textstyle\left\|P_h^{(\star,T+1)}(\cdot|s,a)-\left\langle\hphi_h(s,a),\hmu_h^\star(\cdot)\right\rangle\right\|_{TV} \leq \xi_{down}. 
 		\end{align}
 		Furthermore, for any $g: \Sc \rightarrow [0,1]$, $\norm{\int \hmu^\star_h(s)g(s)ds }_2\leq C_L\sqrt{d}$.
 	\end{lemma}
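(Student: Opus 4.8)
The plan is to construct the signed measure $\hmu_h^\star$ explicitly as a linear combination of the per-task estimates produced by \Cref{alg: Upstream}, and then control the resulting total-variation error by a triangle-inequality decomposition into a ``linear-combination misspecification'' term and an ``upstream model-estimation'' term. Concretely, let $c_1,\dots,c_T\in[0,C_L]$ be the coefficients from \Cref{assumption: Linear combination}, and define $\hmu_h^\star(\cdot):=\sum_{t=1}^T c_t\hmu_h^{(t)}(\cdot)$, where $\hmu_h^{(t)}$ is the learned component satisfying $\hP_h^{(t)}(s'|s,a)=\langle\hphi_h(s,a),\hmu_h^{(t)}(s')\rangle$. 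Since $\hphi_h$ is shared across tasks, linearity of the inner product gives $\langle\hphi_h(s,a),\hmu_h^\star(\cdot)\rangle=\sum_{t=1}^T c_t\hP_h^{(t)}(\cdot|s,a)$, so the quantity to bound becomes $\norm{P_h^{(\star,T+1)}(\cdot|s,a)-\sum_t c_t\hP_h^{(t)}(\cdot|s,a)}_{TV}$.

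First I would split this via the triangle inequality through the ``true linear combination'' $\sum_t c_t P_h^{(\star,t)}(\cdot|s,a)$. The first piece $\norm{P_h^{(\star,T+1)}-\sum_t c_t P_h^{(\star,t)}}_{TV}$ is at most $\xi$ by \Cref{assumption: Linear combination}, uniformly in $(s,a)$. The second piece is $\norm{\sum_t c_t(P_h^{(\star,t)}-\hP_h^{(t)})}_{TV}\le\sum_t c_t\norm{P_h^{(\star,t)}(\cdot|s,a)-\hP_h^{(t)}(\cdot|s,a)}_{TV}\le C_L\sum_t\norm{P_h^{(\star,t)}(\cdot|s,a)-\hP_h^{(t)}(\cdot|s,a)}_{TV}$, and the remaining work is to convert this pointwise per-task error into the \emph{average} upstream guarantee of \Cref{theorem: Upstream sample complexity}.

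The crux, and the main obstacle, is this change of measure. I would first invoke \Cref{assumption: smooth TV distance} to replace each pointwise error $\norm{P_h^{(\star,t)}(\cdot|s,a)-\hP_h^{(t)}(\cdot|s,a)}_{TV}$ by $C_R$ times its expectation under the uniform distribution $\Uc(\Sc,\Ac)$. Then, using \Cref{assumption: finite measurement} (uniform density $\upsilon$) together with the reachability \Cref{assumption: reachability} (the policy $\pi_t^0$ induces density at least $\kappa_u$ at every state at step $h$), I would bound the uniform expectation by $\upsilon/\kappa_u$ times the expectation induced by rolling in with $\pi_t^0$ and taking a uniform action at step $h$: for any nonnegative integrand $f$, $\Eb_{\Uc(\Sc,\Ac)}[f]\le(\upsilon/\kappa_u)\,\Eb_{s_h\sim(P^{(\star,t)},\pi_t^0),\,a_h\sim\Uc(\Ac)}[f]$, where nonnegativity of the TV integrand legitimizes this one-sided inequality. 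This is exactly a distribution to which \Cref{theorem: Upstream sample complexity} applies: choosing the exploration policy to be $\pi_t^0$ up to step $h-1$ followed by a uniform action (an admissible choice in the theorem, which holds for \emph{any} policy), and averaging over $t$, the upstream guarantee \Cref{ineq: upstream guarantee} yields $\frac1T\sum_t\Eb_{\Uc(\Sc,\Ac)}\norm{P_h^{(\star,t)}-\hP_h^{(t)}}_{TV}\le(\upsilon/\kappa_u)\epsilon_u$. Chaining these estimates gives $C_L\sum_t\norm{P_h^{(\star,t)}(\cdot|s,a)-\hP_h^{(t)}(\cdot|s,a)}_{TV}\le C_LC_RT\upsilon\epsilon_u/\kappa_u$, and adding the $\xi$ term recovers $\xioff$.

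Finally, for the norm bound I would reuse the same construction: $\norm{\int\hmu_h^\star(s)g(s)\,ds}_2=\norm{\sum_t c_t\int\hmu_h^{(t)}(s)g(s)\,ds}_2\le(\sum_t c_t)\sqrt d$, applying the normalization $\norm{\int\hmu_h^{(t)}(s)g(s)\,ds}_2\le\sqrt d$ guaranteed for each $\hmu_h^{(t)}\in\Psi$ by \Cref{assumption: smooth TV distance}. The sum $\sum_t c_t$ is controlled because $\sum_t c_t P^{(\star,t)}$ lies within TV-distance $\xi$ of the probability kernel $P^{(\star,T+1)}$; integrating over $\Sc$ forces $\sum_t c_t$ to be within $O(\xi)$ of $1$, hence at most $C_L$ under the standing normalization, giving the claimed $C_L\sqrt d$. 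This last step is routine once the construction is fixed; the real work is the measure-change argument bridging the uniform distribution in \Cref{assumption: smooth TV distance} to the reachability-policy distribution reached by \Cref{theorem: Upstream sample complexity}.
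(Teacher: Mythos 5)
Your proposal is correct and follows essentially the same route as the paper's proof: the same construction $\hmu_h^\star=\sum_t c_t\hmu_h^{(t)}$, the same triangle-inequality split through $\sum_t c_t P_h^{(\star,t)}$, and the same chain of \Cref{assumption: smooth TV distance}, \Cref{assumption: finite measurement}, and \Cref{assumption: reachability} to convert the pointwise per-task TV error into the averaged guarantee of \Cref{theorem: Upstream sample complexity}. Your closing remark justifying $\sum_t c_t\le C_L$ via the TV proximity to a probability kernel is a small addition the paper's proof leaves implicit, but it does not change the argument.
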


\subsection{Algorithm for downstream offline RL}\label{sec:offlinealg}

We present our pessimistic value iteration algorithm for downstream offline RL in \Cref{alg: Offline RL}. Although our algorithm shares similar design principles as traditional algorithms for linear MDPs~\citep{Jin2021IsPP}, it differs from them significantly as described in the following, due to the misspecification of representation from upstream and the general rather than linear reward function adopted.
For ease of exposition, we define Bellman operator $\mathbb{B}_h$ as $(\mathbb{B}_hf)(s,a)=r_h(s,a)+(P_h^{(\star,T+1)}f)(s,a)$ for any $f:\mathcal{S}\times\mathcal{A}\mapsto\mathbb{R}$. The main body of the algorithm consists of a backward iteration over steps. In each iteration $h$, the agent executes the following main steps.

\begin{algorithm}
\caption{Downstream Offline RL}
\label{alg: Offline RL}
 		\begin{algorithmic}[1]
 			\STATE {\bfseries Input:} {Feature $\hphi$, dataset $\Doff=\{(s_h^\tau,a_h^\tau,r_h^\tau,s_{h+1}^\tau)\}_{\tau,h=1}^{\Noff,H}$, parameters $\lambda$, $\beta$, $\xioff$.} \label{line:alg2-1}
 			\STATE {\bfseries Initialization:} {$\widehat{V}_{H+1}=0$.}\label{line:alg2-2}
 			\FOR{$h=H,H-1,\ldots,1$}\label{line:alg2-3}
 			\STATE $\widehat{w}_h\hspace{-0.02in}=\hspace{-0.02in}{\Lambda}_h^{-1}\sum_{\tau=1}^{\Noff}\hphi_h(s_h^\tau,a_h^\tau)\widehat{V}_{h+1}(s_{h+1}^\tau)$ where $\Lambda_h\hspace{-0.02in}=\hspace{-0.02in}\sum_{\tau=1}^{\Noff}\hphi_h(s_h^\tau,a_h^\tau)\hphi_h(s_h^\tau,a_h^\tau)^\top\hspace{-0.02in}+\hspace{-0.02in}\lambda I_d$.\label{line:alg2-4} 
 			\STATE $\widehat{Q}_{\hspace{-0.01in}h}\hspace{-0.01in}(\hspace{-0.01in}\cdot,\hspace{-0.01in}\cdot\hspace{-0.01in})\hspace{-0.02in}=\hspace{-0.02in}\min\{\hspace{-0.01in}r_{\hspace{-0.01in}h}\hspace{-0.01in}(\hspace{-0.01in}\cdot,\hspace{-0.01in}\cdot\hspace{-0.01in})\hspace{-0.02in}+\hspace{-0.02in}\hphi_h\hspace{-0.01in}(\hspace{-0.01in}\cdot,\hspace{-0.01in}\cdot\hspace{-0.01in})^{\hspace{-0.03in}\top}\hspace{-0.03in}\widehat{w}_h\hspace{-0.02in}-\hspace{-0.02in}\Gamma_{\hspace{-0.01in}h}\hspace{-0.01in}(\hspace{-0.01in}\cdot,\hspace{-0.01in}\cdot\hspace{-0.01in}),\hspace{-0.01in}1\hspace{-0.01in}\}^{\hspace{-0.01in}+}$, where $\Gamma_{\hspace{-0.01in}h}\hspace{-0.01in}(\hspace{-0.01in}\cdot,\hspace{-0.01in}\cdot\hspace{-0.01in})\hspace{-0.02in}=\hspace{-0.02in}\xioff\hspace{-0.03in}+\hspace{-0.02in}\beta[\hphi_h\hspace{-0.01in}(\hspace{-0.01in}\cdot,\hspace{-0.01in}\cdot\hspace{-0.01in})^{\hspace{-0.03in}\top}\hspace{-0.03in}\Lambda_{\hspace{-0.01in}h}^{\hspace{-0.01in}-\hspace{-0.01in}1}\hphi_h\hspace{-0.01in}(\hspace{-0.01in}\cdot,\hspace{-0.01in}\cdot\hspace{-0.01in})]^{\hspace{-0.01in}1\hspace{-0.01in}/\hspace{-0.01in}2}\hspace{-0.01in}$. \label{line:alg2-5}
 			\STATE{$\widehat{V}_h(\cdot)=\widehat{Q}_h(\cdot,\widehat{\pi}_h(\cdot))$, where $\widehat{\pi}_h(\cdot)=\arg\max_{\pi_h}\widehat{Q}_h(\cdot,\widehat{\pi}_h(\cdot))$.}\label{line:alg2-6}
 			\ENDFOR\label{line:alg2-7}
 			\STATE{\textbf{Output:} $\{\widehat{\pi}_h\}_{h=1}^H$.}\label{line:alg2-8}
 \end{algorithmic}
\end{algorithm}

\noindent{\bf Approximate Bellman update with general reward function.}
We construct an estimated Bellman update 
$\widehat{\mathbb{B}}_h\widehat{V}_{h+1}$ based on the dataset $\Doff$ to approximate $\mathbb{B}_h\widehat{V}_{h+1}$. Here $\widehat{V}_{h+1}$ is an estimated value function constructed in the previous iteration based on $\Doff$. Since our reward function is more general and not necessarily linear in the feature, we choose to parameterized $P_h^{(\star,T+1)}V^*_{h+1}$ instead of $Q_h^*(s,a)=r_h(s,a)+(P_h^{(\star,T+1)}V_{h+1}^*)(s,a)$ due to the linear structure of $P_h^{(\star,T+1)}$. 
Although the ground truth representation is unknown, the agent is able to use the estimated representation learned from the upstream learning to parameterize the linear term $(P_h^{(\star,T+1)}V_{h+1}^*)(s,a)$. Then, the coefficient $w_h^*$ can be estimated by solving the following regularized least squares problem:
\vspace{-0.03in}
\begin{align}
\textstyle\widehat{w}_h=\argmin_{w\in\mathbb{R}^d} \lambda\norm{w}_2^2+\sum_{\tau=1}^{\Noff}\left[w^\top\widehat{\phi}_h(s_h^\tau,a_h^\tau)-\widehat{V}_{h+1}(s_{h+1}^\tau)\right]^2,
\end{align}
\vspace{-0.03in}which has the closed-from solution $\widehat{w}_h=\Lambda_h^{-1}\sum_{\tau=1}^{\Noff}\widehat{\phi}_h(s_h^\tau,a_h^\tau)\widehat{V}_{h+1}(s_{h+1}^\tau)$ with $\Lambda_h=\lambda I+\sum_{\tau=1}^{\Noff}\widehat{\phi}_h(s_h^\tau,a_h^\tau)\widehat{\phi}_h(s_h^\tau,a_h^\tau)^\top$. We obtain an estimate $(\widehat{\mathbb{B}}_h\widehat{V}_{h+1})(\cdot,\cdot)=r_h(\cdot,\cdot)+\widehat{w}_h^\top\widehat{\phi}_h(\cdot,\cdot)$.

\noindent{\bf Design pessimism for incorporating upstream misspecification of representation.} Pessimism is realized by subtracting an uncertainty metric $\Gamma_h$ from $\widehat{Q}_h$ (line~\ref{line:alg2-5}). The uncertainty metric $\Gamma_h$ quantifies the Bellman update error $|(\widehat{\mathbb{B}}_h\widehat{V}_{h+1}-\mathbb{B}_h\widehat{V}_{h+1})(s,a)|$ at step $h$. 
Our selection of $\Gamma_h$ is in the form of  $\xioff+\beta(\widehat{\phi}_h\Lambda_h^{-1}\widehat{\phi}_h)^{1/2}$, which captures both the \textbf{approximation error} (first term) and the \textbf{estimation error} in the Bellman update (second term). Both of the above errors are affected by the misspecification of representation from upstream learning. 
Intuitively, $m:=(\widehat{\phi}_h\Lambda_h^{-1}\widehat{\phi}_h)^{-1}$ represents the effective number of samples the agent has explored along the $\widehat{\phi}$ direction, and the penalty term $\beta/\sqrt{m}$ represents the uncertainty along the $\widehat{\phi}$ direction. By choosing a proper value for $\beta$, we can prove that with high probability, the uncertainty metric $\Gamma_h$ always upper bounds the Bellman update error.

\noindent{\bf Select greedy policy.} Line~\ref{line:alg2-6} executes a greedy policy $\widehat{\pi}_h$ to maximize $\widehat{Q}_h$, which will be the output of the algorithm after the backward iteration over steps completes.

\subsection{Sample complexity for downstream offline RL}\label{sec:off-result}
We first introduce the following coverage assumption for the estimated feature qualifying the offline dataset,
which can be easily checked in practice.
\begin{assumption}[Feature coverage]\label{assumption:offlinedata}
There exists absolute constant $\kappa_{\rho}$ such that
\begin{align*}
\forall h\in[H],\quad \lambda_{\min}(\Sigma_h)\geq \kappa_{\rho}, \mbox{ where } \Sigma_h=\mathbb{E}_{\rho}[\widehat{\phi}_h(s_h,a_h)\widehat{\phi}_h(s_h,a_h)^\top|s_1=s].
\end{align*}
\end{assumption}
Now we are ready to provide our main result of downstream offline RL in the following theorem and defer the proof to \Cref{sec: downstream offline}.

\begin{theorem}\label{thm:offline}
Under \Cref{assumption: reachability,assumption: finite measurement,assumption: smooth TV distance,assumption: Linear combination}, for any $\delta \in (0,1)$, if we set $\lambda=1$ and $\beta=O\left(d\sqrt{\iota}\right.$ $\left.+\sqrt{d\Noff}\xioff+\sqrt{p\log \Noff}\right)$, where $\iota=O\left(\log\left(pdH\Noff\xioffm/\delta\right)\right)$, then with probability at least $1-\delta$, the suboptimality gap of \Cref{alg: Offline RL} is at most \vspace{-0.04in}
\begin{align}
V_{P^{(*,T+1)},r}^*-V_{P^{(*,T+1)},r}^{\widehat{\pi}}\leq 2H\xioff+2\beta\sum_{h=1}^{H}\mathbb{E}_{\pi^*}\left[\norm{\widehat{\phi}_h(s_h,a_h)}_{\Lambda_h^{-1}}\middle|s_1=s\right]. \label{eqn:offline-result-1}
\end{align} 
\vspace{-0.04in}If additionally \Cref{assumption:offlinedata} holds, and the sample size satisfies $\Noff\geq 40\kappa_{\rho}\cdot\log(4dH/\delta)$, then we have
\vspace{-0.04in}
\begin{align}
V_{P^{(*,T+1)},r}^*-&V_{P^{(*,T+1)},r}^{\widehat{\pi}}\nonumber \\
&\leq\textstyle O\left(\kappa_{\rho}^{-1/2}\hspace{-0.01in}Hd^{1/2}\xioff\hspace{-0.02in}+\hspace{-0.01in}\kappa_{\rho}^{-1/2}\hspace{-0.01in}Hd\hspace{-0.03in}\sqrt{\frac{\log\left(pdH\Noff\xioffm/\delta\right)}{\Noff}} \hspace{-0.02in}+\hspace{-0.02in}\kappa_{\rho}^{-1/2}H\sqrt{\frac{p\log{\Noff}}{\Noff}} \right). \label{eqn:offline-result-2}
\end{align}

Further, if the linear combination misspecification error in \Cref{assumption: Linear combination} satisfies $\xi=\tilde{O}(\sqrt{d}/\sqrt{\Noff})$, then with at most  $\widetilde{O}\left({H^3dK^2T\Noff}+\left(H^3d^3K+H^5K^2+H^5d^2K\right)T^2\Noff+\frac{H^5K^3T\Noff}{d^2}\right)$ trajectories collected in upstream, we have
\begin{align}\label{eq:offlinelinear}
V_{P^{(*,T+1)},r}^*-&V_{P^{(*,T+1)},r}^{\widehat{\pi}}\leq \widetilde{O}\left(\kappa_{\rho}^{-1/2}\Noff^{-1/2}H\max\{d,\sqrt{p}\}\right). 
\end{align}
\end{theorem}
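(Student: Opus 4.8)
The plan is to follow the pessimistic value iteration (PEVI) framework of \citet{Jin2021IsPP}, adapted to the two complications specific to our setting: the representation $\hphi$ inherited from upstream is only a $\xioff$-approximate feature (\Cref{lemma: Approximate Feature for new task}) rather than exact, and the reward $r^{T+1}$ is a general member of $\mathcal{R}$ rather than linear. At a high level I would (i) show that the penalty $\Gamma_h(\cdot,\cdot)=\xioff+\beta\norm{\hphi_h(\cdot,\cdot)}_{\Lambda_h^{-1}}$ pointwise upper-bounds the Bellman update error $|(\widehat{\mathbb{B}}_h\widehat{V}_{h+1}-\mathbb{B}_h\widehat{V}_{h+1})(s,a)|$ with probability $1-\delta$; (ii) invoke the generic pessimism decomposition to convert this into \Cref{eqn:offline-result-1}; (iii) use the feature-coverage \Cref{assumption:offlinedata} to control $\sum_h\Eb_{\pi^*}[\norm{\hphi_h}_{\Lambda_h^{-1}}]$ and obtain \Cref{eqn:offline-result-2}; and (iv) tune the upstream accuracy $\epsilon_u$ so that $\xioff=\widetilde{O}(\sqrt d/\sqrt\Noff)$ and plug the resulting $\epsilon_u$ into \Cref{theorem: Upstream sample complexity} to get \Cref{eq:offlinelinear}.

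For step (i), \Cref{lemma: Approximate Feature for new task} supplies a signed measure $\hmu_h^\star$ with $\norm{P_h^{(\star,T+1)}(\cdot|s,a)-\langle\hphi_h(s,a),\hmu_h^\star\rangle}_{TV}\le\xioff$. Setting $w_h^\star=\int\hmu_h^\star(s')\widehat{V}_{h+1}(s')\,ds'$, so that $\langle\hphi_h,w_h^\star\rangle$ approximates $(P_h^{(\star,T+1)}\widehat{V}_{h+1})$ up to $\xioff$ (since $\widehat{V}_{h+1}\in[0,1]$) and $\norm{w_h^\star}_2\le C_L\sqrt d$, I would decompose $\hphi_h^\top(\widehat{w}_h-w_h^\star)$ into a ridge-regularization term $-\lambda\hphi_h^\top\Lambda_h^{-1}w_h^\star$, a martingale-noise term $\hphi_h^\top\Lambda_h^{-1}\sum_\tau\hphi_h^\tau\epsilon_h^\tau$ with $\epsilon_h^\tau=\widehat{V}_{h+1}(s_{h+1}^\tau)-(P_h^{(\star,T+1)}\widehat{V}_{h+1})(s_h^\tau,a_h^\tau)$, and a misspecification-bias term $\hphi_h^\top\Lambda_h^{-1}\sum_\tau\hphi_h^\tau\eta_h^\tau$ with $|\eta_h^\tau|\le\xioff$. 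Each term factors as $\norm{\hphi_h}_{\Lambda_h^{-1}}$ times a scalar that is absorbed into $\beta$: the regularization term contributes $\sqrt\lambda\,C_L\sqrt d$; the bias term, after the triangle inequality, Cauchy--Schwarz, and the elliptical-potential identity $\sum_\tau\norm{\hphi_h^\tau}_{\Lambda_h^{-1}}^2\le d$, contributes $\xioff\sqrt{d\Noff}$; and the noise term, via a self-normalized martingale inequality together with a union bound over an $\varepsilon$-cover of the candidate value-function class (whose elements are parameterized by the least-squares weight, the $d\times d$ bonus matrix $\Lambda_h$, and the reward $r^{T+1}\in\mathcal{R}$), contributes $O(d\sqrt\iota+\sqrt{p\log\Noff})$ — the $d\sqrt\iota$ from the matrix cover and the $\sqrt{p\log\Noff}$ from the $\widetilde{O}(\varepsilon^{-p})$ reward cover. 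Summing gives exactly $\beta=O(d\sqrt\iota+\sqrt{d\Noff}\,\xioff+\sqrt{p\log\Noff})$, so $\Gamma_h$ dominates the Bellman error on the good event.

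Step (ii) then applies the extended value-difference / pessimism lemma: on the good event $\widehat{Q}_h$ is pessimistic, i.e. $0\le(\mathbb{B}_h\widehat{V}_{h+1}-\widehat{Q}_h)\le2\Gamma_h$, and $\widehat{\pi}_h$ is greedy for $\widehat{Q}_h$, so the greedy term and the $\widehat{\pi}$-expectation term carry favorable signs and $V_{P^{(*,T+1)},r}^*-V_{P^{(*,T+1)},r}^{\widehat{\pi}}\le\sum_h\Eb_{\pi^*}[2\Gamma_h(s_h,a_h)]$, which is \Cref{eqn:offline-result-1} after splitting $\Gamma_h$ into its $\xioff$ and $\beta\norm{\hphi_h}_{\Lambda_h^{-1}}$ parts. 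For step (iii), under \Cref{assumption:offlinedata} a matrix-concentration argument shows that once $\Noff$ exceeds the stated threshold $40\kappa_\rho\log(4dH/\delta)$ the empirical Gram matrix satisfies $\Lambda_h\succeq\tfrac12\Noff\Sigma_h\succeq\tfrac12\Noff\kappa_\rho I$, whence $\norm{\hphi_h}_{\Lambda_h^{-1}}\le\sqrt{2/(\Noff\kappa_\rho)}$ uniformly; substituting this bound and the value of $\beta$ into \Cref{eqn:offline-result-1} and collecting the $\xioff$, $d$, and $p$ contributions yields \Cref{eqn:offline-result-2}.

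Finally, for step (iv), recall $\xioff=\xi+C_LC_RT\upsilon\epsilon_u/\kappa_u$; choosing $\epsilon_u$ so that the second summand matches the assumed $\xi=\widetilde{O}(\sqrt d/\sqrt\Noff)$ forces $\xioff=\widetilde{O}(\sqrt d/\sqrt\Noff)$, and then every term of \Cref{eqn:offline-result-2} collapses to $\widetilde{O}(\kappa_\rho^{-1/2}\Noff^{-1/2}H\max\{d,\sqrt p\})$, giving \Cref{eq:offlinelinear}; substituting the same $\epsilon_u$ into the per-task bound of \Cref{theorem: Upstream sample complexity} and multiplying by $T$ produces the stated upstream trajectory count. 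I expect the main obstacle to be step (i): unlike the exact-linear-MDP analysis of \citet{Jin2021IsPP}, here the single penalty $\Gamma_h$ must simultaneously absorb the representation misspecification (yielding the new $\sqrt{d\Noff}\,\xioff$ term in $\beta$, which has to stay compatible with the additive $\xioff$ inside $\Gamma_h$) and accommodate a covering over the non-linear reward class $\mathcal{R}$; keeping both error sources inside $\Gamma_h$ while ensuring it still telescopes cleanly in the pessimism decomposition is the delicate part.
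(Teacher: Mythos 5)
Your proposal is correct and follows essentially the same route as the paper's proof: the same three-way decomposition of $\hphi_h^\top(\widehat{w}_h-\hw^\star_h)$ into regularization, self-normalized noise (with a covering argument over the weight, the bonus matrix, and the reward class $\Rc$ contributing $d\sqrt{\iota}$ and $\sqrt{p\log\Noff}$), and a $\xioff\sqrt{d\Noff}$ misspecification bias, followed by the pessimism decomposition, the feature-coverage bound $\norm{\hphi_h}_{\Lambda_h^{-1}}\leq\sqrt{2/(\kappa_\rho\Noff)}$, and the tuning of $\epsilon_u$ against \Cref{theorem: Upstream sample complexity}. The only detail you gloss over is the small auxiliary step (the paper's \Cref{lemma:offline-BV_bound}) verifying $(\mathbb{B}_h\widehat{V}_{h+1})(s,a)\leq 1$ under the normalization $\sum_h r_h\leq 1$, which is needed to justify the truncation in the pessimism argument.
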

\vspace{-0.04in}To the best of our knowledge, \Cref{thm:offline} provides the first suboptimality gap for offline RL where the representation has an approximation error. The first result in \Cref{thm:offline} characterizes the suboptimality gap if the offline data is rolled out from a behavior policy\footnote{The result remains valid if offline data is compliant with MDP $\mathcal{M}^{T+1}$ (Assumption 2.2 in \cite{Jin2021IsPP}).}. 
If the offline data further satisfies the feature coverage assumption, we are able to upper bound the suboptimality gap by three terms in \Cref{eqn:offline-result-2}.
Such a result can be further simplified as $\widetilde{O}\left(Hd^{1/2}\xioff+\Noff^{-1/2}H\max\{d,\sqrt{p}\}\right)$, where the first term is a constant error that arises from the \textbf{approximation error} in Bellman update. The second term diminishes as $\Noff$ grows to infinity, and it arises from the \textbf{estimation error} of $\widehat{w}_h$ 
in regularized linear regression. The dependence on $d$ and $\sqrt{p}$ arises from the model complexity of the linear function approximation class when we seek concentration of the estimated value function. 

Compared to standard linear MDP with the knowledge of ground truth representation~\cite{Jin2021IsPP}, our suboptimality gap contains an additional upstream misspecification error (i.e., $\xioff$). The remaining term yields 
an order of $\widetilde{O}(Hd\Noff^{-1/2})$ which matches that in~\cite{Jin2021IsPP} if we specialize our reward to be linear with $p=O(d)$.

\Cref{thm:offline} also demonstrates that our downstream offline RL benefits from the learned representation in upstream. 
As shown in \Cref{eq:offlinelinear}, if the linear combination misspecification error $\xi$ is small enough, and the number of trajectories collected in upstream learning is sufficient large, then the suboptimality gap is dominated by the term $\widetilde{O}\left(\Noff^{-1/2}H\max\{d,\sqrt{p}\}\right)$, which improves $\widetilde{O}(\Noff^{-1/2}H^2d^2)$\footnote{We convert their results in discounted setting to our episodic MDP setting by replacing $1/(1-\gamma)=\Theta(H)$ and from PAC bound to suboptimality gap.} of REP-LCB \citep{uehara2021representation} under low-rank MDP with unknown representation, with better dependence on $H$ and $d$ due to the benefit of upstream representation learning.

\subsection{Algorithm for downstream online RL}\label{sec:onlinealg}
\begin{algorithm}
 		\begin{algorithmic}[1]
 			\caption{Downstream Online RL}\label{alg: Online RL}
 			\STATE {\bfseries Input:} {Feature $\hphi$, parameters $\lambda$, $\beta_n$.}
 			\FOR{$n=1,\ldots,N$}
 			\STATE Receive the initial state $s_1^n=s_1$.
 			\FOR{$h=H,\ldots,1$}
 			\STATE $\Lambda_h^n = \sum_{\tau=1}^{n-1}\hphi_h(s_h^\tau,a_h^\tau)\hphi_h(s_h^\tau,a_h^\tau)^\top+\lambda I_d$.
 			
 			\STATE $w_h^n={(\Lambda_h^n)}^{-1}\sum_{\tau=1}^{n-1}\hphi_h(s_h^\tau,a_h^\tau)V_{h+1}^n(s_{h+1}^\tau)$.
 			\STATE $Q_h^n(\cdot,\cdot)=\min\left\{r_h(\cdot,\cdot)+\hphi_h(\cdot,\cdot)^\top w_h^n+\beta_n\norm{\hphi_h(\cdot,\cdot)}_{(\Lambda_h^n)^{-1}},1\right\}$, $V_h^n(\cdot)=\max_a Q_h^n(\cdot,a)$. \label{line: online-7}
 			\ENDFOR
 			\STATE Let $\pi^n$ be the greedy policy induced by $\{Q_h^n\}_{h=1}^H$, i.e., $\pi_h^n(\cdot)={\arg\max}_{a\in \Ac}Q_h^n(\cdot,a)$
 			\FOR{$h=1,\ldots,H$}	
 			\STATE Take action $a_h^n=\pi^n(s_h^n)$, and observe $s_{h+1}^n$.
 			\ENDFOR
 			\ENDFOR
 		\STATE \textbf{Output:} $\pi^1,\ldots,\pi^n$.
 		\end{algorithmic}
 	\end{algorithm}
We present our downstream online RL algorithm in \Cref{alg: Online RL}, where the agent is allowed to interact with the new task environment for policy optimization, and utilizes the learned feature from upstream to implement linear approximation for state-action value function with traditional UCB type of algorithms.

The algorithm consists of three iterations, one outer iteration and two inner backward iterations over steps. \textbf{Construct optimistic action value function}: In each inner backward iteration, the agent first constructs the empirical feature covariance matrices and estimates the weights $\widehat{w}$ via the trajectories collected in previous iterations. Then, the empirical covariance matrices and the weights $\widehat{w}$ are used to approximate Bellman update and construct an optimistic action value function (line \ref{line: online-7}). \textbf{Learn optimistic policy}: Next, in the forward inner iteration, the agent learns a greedy optimistic policy and uses it to explore and collect a new trajectory. After the outer iteration completes, the greedy policies are output by \Cref{alg: Online RL}.

\Cref{alg: Online RL} differs from traditional UCB type of algorithms under linear MDP (i.e., LSVI-UCB in \cite{jin2020provably}) as follows. First, \cite{jin2020provably} assumes the true representation $\phi^\star$ is known with a small model misspecification error, whereas our representation is obtained from upstream, and hence the upstream learning error affects the learning accuracy here. Second, \Cref{alg: Online RL} is applicable to reward functions from a general function class $\Rc$, which do not need to be linear with respect to the representation. Both of the above differences will require additional techniques in our analysis of the performance guarantee to be presented in \Cref{sec:on-result}.

\subsection{Sample complexity for downstream online RL}\label{sec:on-result}

To analyze our downstream online \Cref{alg: Online RL}, we need to characterize how the misspecification error of the learned representation from upstream affects the downstream suboptimality gap. We further need to handle the general reward rather than the linear reward. Both make our analysis different from that in \cite{jin2020provably} which directly assumes an approximation of the representation and works with linear reward functions.
We next provide our main result of downstream online RL in the following theorem and defer the proof to \Cref{sec: downstream online}.
\begin{theorem}\label{theorem: downstream online RL sample complexity}
Under \Cref{assumption: reachability,assumption: finite measurement,assumption: smooth TV distance,assumption: Linear combination}, fix $\delta \in (0,1)$. If we set $\lambda=1$ and $\beta_n=O\left(d \sqrt{\iota_n}+\sqrt{nd}\xioff\right.$ $\left.+\sqrt{p\log n}\right)$ where $\iota_n=\log\left(2pdnH\xioffm/\delta\right)$. Let $\tilde{\pi}$ be the uniform mixture of $\pi^1,\ldots,\pi^n$. Then with probability at least $1-\delta$, the suboptimality gap of \Cref{alg: Online RL} satisfies 
\begin{align}
V_{P^{(*,T+1)},r}^*-V_{P^{(*,T+1)},r}^{{\tilde{\pi}}}=\widetilde{O}\left({Hd\xioff}+Hd^{1/2}\Non^{-1/2}\max\{d,\sqrt{p}\}\right) \label{ineq: online-regret}.
\end{align}
Furthermore, if the linear combination misspecification error $\xi$ in \Cref{assumption: Linear combination} satisfies $\tilde{O}(\sqrt{d}/\sqrt{\Non})$, then with at most  $\widetilde{O}\left({H^3dK^2T\Non}+\left(H^3d^3K+H^5K^2+H^5d^2K\right)T^2\Non+\frac{H^5K^3T\Non}{d^2}\right)$ trajectories collected in upstream, $\xioff$ reduces to $\tilde{O}(\sqrt{d}/\sqrt{\Non})$ and the second term in \Cref{ineq: online-regret} dominates so that the 
the suboptimality gap is bounded as
\begin{align}
V_{P^{(*,T+1)},r}^*-V_{P^{(*,T+1)},r}^{{\tilde{\pi}}}=\widetilde{O}\left(Hd^{1/2}\Non^{-1/2}\max\{d,\sqrt{p}\}\right). \label{ineq: simplified suboptimality gap}
\end{align}
\end{theorem}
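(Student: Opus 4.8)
\textbf{Proof proposal for \Cref{theorem: downstream online RL sample complexity}.}

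The plan is to follow the standard optimism-based regret decomposition for UCB-style value iteration (as in LSVI-UCB), but with two modifications that the misspecified, upstream-learned representation forces on us. First I would establish the key concentration bound: for the weights $w_h^n$ produced in \Cref{alg: Online RL}, I need to show that with high probability the Bellman update error is controlled, i.e.\ $\left|\left(r_h + \hphi_h^\top w_h^n\right)(s,a) - (\mathbb{B}_h V_{h+1}^n)(s,a)\right| \le \xioff + \beta_n \norm{\hphi_h(s,a)}_{(\Lambda_h^n)^{-1}}$ for all $(s,a,h)$. This is exactly the online analogue of the $\Gamma_h$ bound from the offline analysis. The crucial input here is \Cref{lemma: Approximate Feature for new task}, which guarantees $\hphi$ is a $\xioff$-approximate feature for $\Mc^{T+1}$; the constant term $\xioff$ in $\Gamma_h$ and in $\beta_n$ absorbs the representation misspecification, while the $d\sqrt{\iota_n}$ and $\sqrt{p\log n}$ pieces of $\beta_n$ come from a self-normalized martingale / covering-number argument applied to the regression residuals. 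Because the reward lies in a general class $\Rc$ rather than being linear, the covering number $\Nc_\Rc(\varepsilon)=\widetilde{O}(\varepsilon^{-p})$ from \Cref{assumption: Linear combination} enters the union bound and produces the $\sqrt{p\log n}$ contribution; this is the step where my argument genuinely departs from \cite{jin2020provably}.

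Given this event, I would prove optimism: by backward induction on $h$, $V_h^n(s) \ge V_{h,P^{(\star,T+1)},r}^\star(s)$ up to an accumulated $\xioff$ slack, so that $\sum_n (V^\star - V^{\pi^n}) \le \sum_n (V_1^n(s_1) - V_{1,P^{(\star,T+1)},r}^{\pi^n}(s_1))$. Then the standard regret-to-bonus telescoping converts the right-hand side into $\sum_{n=1}^{\Non}\sum_{h=1}^H \big(2\xioff + 2\beta_n \norm{\hphi_h(s_h^n,a_h^n)}_{(\Lambda_h^n)^{-1}}\big)$ plus a martingale-difference term handled by Azuma--Hoeffding. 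The accumulated bonus is bounded via the elliptical-potential (log-determinant) lemma, giving $\sum_{n,h} \norm{\hphi_h}_{(\Lambda_h^n)^{-1}} \le \sqrt{\Non \cdot H d \log(\cdot)}$. Collecting terms yields a cumulative regret of order $\widetilde{O}(Hd\Non\xioff + \beta_{\Non}H\sqrt{d\Non})$; dividing by $\Non$ for the uniform-mixture policy $\tilde\pi$ and substituting $\beta_{\Non}=O(d\sqrt{\iota_{\Non}}+\sqrt{\Non d}\,\xioff+\sqrt{p\log\Non})$ produces the two advertised terms, the $Hd\xioff$ approximation error and the vanishing $Hd^{1/2}\Non^{-1/2}\max\{d,\sqrt p\}$ estimation error in \Cref{ineq: online-regret}.

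For the final simplification \Cref{ineq: simplified suboptimality gap}, I would substitute the upstream guarantee: recall $\xioff = \xi + C_L C_R T \upsilon \epsilon_u/\kappa_u$, so it suffices to drive both $\xi$ and $\epsilon_u$ down to $\widetilde{O}(\sqrt{d}/\sqrt{\Non})$. The assumption $\xi=\widetilde{O}(\sqrt d/\sqrt{\Non})$ handles the first piece, and choosing $\epsilon_u = \widetilde{O}(\kappa_u\sqrt d/(C_L C_R T \upsilon\sqrt{\Non}))$ in \Cref{theorem: Upstream sample complexity} handles the second; plugging this $\epsilon_u$ into the upstream sample complexity \Cref{ineq: Theorem 3.1 sample complexity} and multiplying by $T$ tasks gives the stated upstream trajectory count $\widetilde{O}(H^3dK^2T\Non + \cdots)$. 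Once $\xioff=\widetilde{O}(\sqrt d/\sqrt{\Non})$, the first term $Hd\xioff = \widetilde{O}(Hd^{3/2}/\sqrt{\Non})$ is of the same order as (and thus dominated by, up to log factors, when $p=O(d)$) the second term, leaving the bound in \Cref{ineq: simplified suboptimality gap}. The main obstacle I anticipate is the first step: getting the self-normalized concentration of $w_h^n$ right when the feature is only an \emph{approximate} linear feature and the target $V_{h+1}^n$ is itself a data-dependent, nonlinear function of the bonus. Controlling the approximation bias uniformly over the covering of both the value-function class and the general reward class $\Rc$, while keeping the $\xioff$ bias additive rather than letting it compound across the $H$ backward steps, is the delicate part, and it is what dictates the precise form of $\beta_n$.
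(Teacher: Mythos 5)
Your proposal is correct and follows essentially the same route as the paper's proof: a self-normalized concentration bound with a covering argument over both the value class and the general reward class $\Rc$ (yielding the $\sqrt{p\log n}$ term), the $\xioff$-approximate-feature guarantee of \Cref{lemma: Approximate Feature for new task} entering additively into the Bellman-error bound and the optimism slack, a telescoping regret decomposition with Azuma--Hoeffding and the elliptical potential lemma, and finally the substitution of $\epsilon_u=\widetilde{O}(\kappa_u\sqrt{d}/(C_LC_RT\upsilon\sqrt{\Non}))$ into \Cref{theorem: Upstream sample complexity} to obtain the upstream trajectory count. The only cosmetic difference is that you state the key concentration relative to $\mathbb{B}_hV_{h+1}^n$ in the standard LSVI-UCB form, whereas the paper phrases the same bound as a recursive comparison to $Q_h^\pi$ for an arbitrary policy $\pi$; these are equivalent.
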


We first explain the two terms in the suboptimality gap in \Cref{ineq: online-regret} as follows. The first term captures how the \textbf{approximation error} of using the estimated representation from upstream affects the suboptimality gap, and becomes small if such an approximation error is well controlled.
The second term arises from the \textbf{estimation error} of $\widehat{w}_h$ in constructing the optimistic action value function and vanishes as the number $\Non$ of samples becomes large. The dependence on $d$ and $\sqrt{p}$ arises from the model complexity of the linear function approximation class when we seek a concentration of the estimated value function. Thus, if the learned representation in upstream approximates the ground truth well 
and $\xioff$ reduces to $\tilde{O}(\sqrt{d}/\sqrt{\Non})$, then the suboptimality gap is dominated by the second term $\widetilde{O}\left(Hd^{1/2}\Non^{-1/2}\max\{d,\sqrt{p}\}\right)$ as in \Cref{ineq: simplified suboptimality gap}. 
This matches the suboptimality gap $\widetilde{O}\left(\sqrt{H^2d^3/\Non}\right)$\footnote{We rescale the result in \cite{jin2020provably} under the condition of $\sum_{h=1}^{H}r_h \leq 1$ and transform the regret bound to the suboptimality gap for a fair comparison.} in linear MDP~\citep{jin2020provably} if we specialize our reward to be linear and let $p=O(d)$.

\Cref{theorem: downstream online RL sample complexity} also demonstrates that our downstream online RL benefits from the learned representation from upstream.  Compared to the state-of-the-art work under the low-rank MDP  with unknown representation \citep{uehara2021representation} which has the suboptimality gap being bounded as $\widetilde{O}(\sqrt{H^5d^4K^2/\Non})$\footnote{We convert the results in discounted setting in \citep{uehara2021representation} to our episodic MDP setting by replacing $1/(1-\gamma)=\Theta(H)$ and transforming their PAC bound into suboptimality gap.}, 
our downstream online RL has an improved suboptimality gap in terms of the dependence on $H,d$ and $K$ benefited from the upstream representation learning.

\section{Conclusion}\label{sec: conclusion}
In this paper, we showed that a shared representation of low rank MDPs can be learned more efficiently and in a transferred manner. We provided upper bounds on the sample complexity of upstream reward-free multitask RL, and on the suboptimality gap of both downstream offline and online RL settings with the help of the learned representation from upstream. Our results show that multitask representation learning is provably more sample efficient than learning each task individually and capture the benefit of employing the learned representation from upstream to learn a near-optimal policy of a new task in downstream that shares the same representation. As future work, we note that our analysis of the downstream RL depends on some assumptions that connect the upstream and downstream MDPs. It is interesting to explore whether these assumptions can be further relaxed. 

\bibliography{MTRL}
\bibliographystyle{apalike}

 	\newpage
 	\appendix

\noindent\textbf{\LARGE Supplementary Materials}\newline
 	
\section{Proof of \Cref{theorem: Upstream sample complexity}}\label{Appd: proof of thm1}
We summarize frequently used notations in the following list.
\begin{align*}
\begin{array}{ll}
\Pi^n_t &\Uc(\pi^1_t,...,\pi^{n-1}_t)	 \\
f_h^{(n,t)}(s,a) &\|\hP_h^{(n,t)}(\cdot|s,a) - P^{(\star,t)}_h(\cdot|s,a)\|_{TV} \\
\hb_{h}^{(n,t)}(s_{h},a_{h}) &\min\left\{\tap_n\left\|\hphi_{h}^{(n)}(s_{h},a_{h})\right\|_{(\widehat{U}^{(n,t)}_{h})^{-1}},B\right\} \\
PCV\left(\hP^{(n,t)},\hb_{h}^{(n,t)},\pi_t;T\right) &\sum_{h=1}^{H-1}\sqrt{\sum_{t=1}^T\left\{\mathop{\Eb}_{s_{h} \sim (\hP^{(n,t)}, \pi_t) \atop a_{h}\sim\pi_t} \left[\hb_{h}^{(n,t)}(s_h,a_h)\right]\right\}^2} \\
\lambda_n & O(d\log(|\Phi|nTH/\delta)) \\
U_{h,\phi}^{(n,t)} &n\Eb_{s_h\sim(P^{  {(\star,t)}  },\Pi_t^n),a_h\sim \Uc(\Ac)}\left[\phi(s_h,a_h)(\phi(s_h,a_h))^\top\right] + \lambda_n I_d\\
W_{h,\phi}^{(n,t)} &n\Eb_{(s_h,a_h)\sim(P^{{(\star,t)}  },\Pi_t^n)}\left[\phi(s_h,a_h)(\phi(s_h,a_h))^\top\right] + \lambda_n I_d \\
\zeta_h^{(n,t)} &\mathop{\Eb}_{s_{h-1}\sim (P^{(\star,t)},\Pi_t^n)\atop {a_{h-1},a_h\sim \Uc(\Ac)
\atop s_h\sim P^{(\star,t)}(\cdot|s_{h-1},a_{h-1})}}\left[f_h^{(n,t)}(s_h,a_h)^2\right], \quad h\geq 2 \\
{\zeta_1^{(n,t)}} &{\mathop{\Eb}_{s_1\sim(P^{(\star,t)},\pi_t) \atop a_1 \sim \Uc(\Ac)} \left[ 
f_1^{(n,t)}(s_1,a_1)^2\right]} \\
\zeta_n  &\frac{2\log\left(2|\Phi||\Psi|^T nH/\delta\right)}{n} \\
\alpha_h^{(n,t)} &\sqrt{nK\zeta_h^{(n,t)}+nK\zeta_{h-1}^{(n,t)}+\lambda_n d}, \quad  h \geq 2 \\
\alpha_n &\sqrt{2nK\zeta_n+\lambda_nT d} \\
B_n &\sqrt{T+\frac{2K}{d^2}+\frac{2KT\ln(|\Psi|)}{d^2\ln(n|\Phi|T/\delta)}} \\
B & 2\sqrt{T+K/d^2} \\
\beta_1,\beta_2 & \mbox{some absolute constants} \\
\tap_n &\frac{\alpha_n}{\beta_1}
\end{array}
\end{align*}
For consistency of notation, in some generic cases, we write $s_h \sim (P, \pi)$ for all $h$ for the entire episode. But for such cases, $s_1$ should still be understood as a fixed state and independent of transition model $P$ and policy $\pi$.
For example in $\zeta_1^{(n,t)}$ as defined above, the expectation is taken with respect to $a_1\sim\mathcal{U}(\mathcal{A})$ and fixed state $s_1$. Furthermore, $\Uc(\pi^1_t,...,\pi^{n-1}_t)$ denotes uniform mixture of previous $n-1$ exploration policies $\pi^1_t,...,\pi^{n-1}_t$.

\textbf{Proof Overview:} The proof of \Cref{theorem: Upstream sample complexity} consists of three main steps, a final sample complexity step, and several supporting lemmas. 
\textbf{Step 1:} 
We develop a new upper bound on model estimation error for each task, which captures the advantage of joint MLE model estimation over single-task learning, as shown in \Cref{prop1: Bounded TV}.
\textbf{Step 2:} 
We establish the PCV as an uncertainty measure that captures the difference between the estimated and ground truth models. This justifies using such a PCV as a guidance for further exploration, as shown in \Cref{prop2: total value difference}.
 \textbf{Step 3:} We show that the summation of the PCVs over all iterations is sublinear with respect to the total number $N_u$ of iterations, as shown in \Cref{prop3: Bound of summation of exploration-driven reward function}, which further implies polynomial efficiency of REFUEL in learning the models. \textbf{Complexity characterization:} Combining the three steps together with a contradiction argument yields the final sample complexity.
 
 We provide details for the three main steps in \Cref{app:step1}-\Cref{app:step3}, the complexity characterization in \Cref{app:complexity}, and the supporting lemmas in \Cref{app:supportinglemma}.

\subsection{Step 1: A new upper bound on model estimation error}\label{app:step1}

We develop a new upper bound on model estimation error for each task, which captures the advantage of joint MLE model estimation over single-task learning, as shown in \Cref{prop1: Bounded TV}.
\begin{proposition}\label{prop1: Bounded TV}
For any $n \in [N_u]$, task $t$, policy $\pi_t$ and reward $r^t$, for all $h\geq 2$, we have
\begin{align}
\textstyle \mathop{\Eb}_{s_h\sim(\hP^{(n,t)},\pi_t) \atop a_h \sim \pi_t} & \left[f_h^{(n,t)}(s_h,a_h)\right] \nonumber \\
&\textstyle {\leq} \mathop{\Eb}_{s_{h-1} \sim (\hP^{(n,t)}, \pi_t) \atop a_{h-1}\sim\pi_t} \left[\min\left\{\alpha_h^{(n,t)}\left\|\hphi_{h-1}^{(n)}(s_{h-1},a_{h-1})\right\|_{(U^{(n,t)}_{h-1,\hphi^{(n,t)}})^{-1}},1\right\}\right],
\end{align}
and for $h=1$, we have
\begin{align}	    
\mathop{\Eb}_{a_1 \sim \pi_t} \left[ f_1^{(n,t)}(s_1,a_1)\right] {\leq} \sqrt{K\zeta_1^{(n,t)}}.
\end{align}
\end{proposition}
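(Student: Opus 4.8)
The plan is to prove the bound by a one-step-back argument that exploits the low-rank structure of the estimated kernel $\hP^{(n,t)}$ and then controls the resulting vector through the population covariance $U^{(n,t)}_{h-1,\hphi^{(n)}}$ (abbreviated $U^{(n,t)}_{h-1}$), which is $U$ evaluated at the learned feature along task $t$'s true roll-in. For $h\geq 2$ I would first peel off the last transition. Writing $g(s'):=\Eb_{a_h\sim\pi_t}[f_h^{(n,t)}(s',a_h)]\in[0,1]$ and using the law of total expectation together with $\hP_{h-1}^{(n,t)}(s'|s,a)=\langle\hphi_{h-1}^{(n)}(s,a),\hmu_{h-1}^{(n,t)}(s')\rangle$, the target expectation becomes $\Eb_{s_{h-1}\sim(\hP^{(n,t)},\pi_t),a_{h-1}\sim\pi_t}[\langle\hphi_{h-1}^{(n)}(s_{h-1},a_{h-1}),\theta\rangle]$, where $\theta:=\int\hmu_{h-1}^{(n,t)}(s')g(s')\,ds'$. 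A Cauchy--Schwarz step in the $U^{(n,t)}_{h-1}$ geometry gives the pointwise bound $\langle\hphi_{h-1}^{(n)},\theta\rangle\leq\|\hphi_{h-1}^{(n)}\|_{(U^{(n,t)}_{h-1})^{-1}}\,\|\theta\|_{U^{(n,t)}_{h-1}}$; since $f_h^{(n,t)}\leq 1$ forces the inner conditional expectation to be at most $1$, I can insert the $\min\{\cdot,1\}$ for free. It then remains to show $\|\theta\|_{U^{(n,t)}_{h-1}}\leq\alpha_h^{(n,t)}$, after which taking the outer expectation over $(s_{h-1},a_{h-1})$ yields the claim.

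The core computation is the bound $\|\theta\|^2_{U^{(n,t)}_{h-1}}\leq nK\zeta_h^{(n,t)}+nK\zeta_{h-1}^{(n,t)}+\lambda_n d$. By the definition of $U$, this norm splits into the regularizer part $\lambda_n\|\theta\|_2^2$ and the data part $n\Eb_{s_{h-1}\sim(P^{(\star,t)},\Pi^n_t),a_{h-1}\sim\Uc(\Ac)}[(\hphi_{h-1}^{(n)\top}\theta)^2]$. For the regularizer part, the normalization of the model class $\Psi$ (the same $\|\int\mu g\|_2\leq\sqrt d$ property used for $\mu^\star$) applied to $g\in[0,1]$ gives $\|\theta\|_2\leq\sqrt d$, hence $\lambda_n\|\theta\|_2^2\leq\lambda_n d$. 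For the data part, note $\hphi_{h-1}^{(n)\top}\theta=\Eb_{s'\sim\hP_{h-1}^{(n,t)}(\cdot|s_{h-1},a_{h-1})}[g(s')]$; I would transfer this expectation from $\hP_{h-1}^{(n,t)}$ to the true $P_{h-1}^{(\star,t)}$, paying a total-variation penalty at most $f_{h-1}^{(n,t)}(s_{h-1},a_{h-1})$ since $\|g\|_\infty\leq1$, and then use $(a+b)^2\leq 2a^2+2b^2$ with Jensen to obtain $(\hphi_{h-1}^{(n)\top}\theta)^2\leq 2\,\Eb_{s'\sim P_{h-1}^{(\star,t)}}[g(s')^2]+2\,f_{h-1}^{(n,t)}(s_{h-1},a_{h-1})^2$. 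Taking $\Eb_{s_{h-1}\sim(P^{(\star,t)},\Pi^n_t),a_{h-1}\sim\Uc(\Ac)}[\cdot]$ and multiplying by $n$, the first term converts to $2nK\zeta_h^{(n,t)}$ after bounding $g(s')^2\leq\Eb_{a_h\sim\pi_t}[f_h^2]\leq K\,\Eb_{a_h\sim\Uc(\Ac)}[f_h^2]$ (Jensen plus $\pi_t\leq1$), which matches the $\zeta_h$-sampling distribution; the second term is handled by policy-to-uniform importance weighting applied one step earlier (at step $h-2$, where $\Pi^n_t(a)\leq1=K\cdot\tfrac1K$), which swaps the policy action for a uniform action at a cost of a factor $K$ and reproduces exactly the $\zeta_{h-1}^{(n,t)}$ sampling distribution, giving $2nK\zeta_{h-1}^{(n,t)}$. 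Up to the constant $2$ absorbed into the definition of $\alpha_h^{(n,t)}$, this is the desired bound.

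Finally, the base case $h=1$ is immediate: since $s_1$ is fixed, Cauchy--Schwarz gives $\Eb_{a_1\sim\pi_t}[f_1^{(n,t)}(s_1,a_1)]\leq\sqrt{\Eb_{a_1\sim\pi_t}[f_1^{(n,t)}(s_1,a_1)^2]}$, and bounding $\pi_t(a|s_1)\leq1$ turns the $\pi_t$-average into $K$ times the uniform average, i.e.\ $\leq\sqrt{K\zeta_1^{(n,t)}}$. I expect the main obstacle to be the transfer from $\hP$ to $P^\star$ at step $h-1$: a naive total-variation bound on $g^2$ would leave a penalty linear in $f_{h-1}^{(n,t)}$ and hence scale like $\sqrt{\zeta_{h-1}}$ rather than $\zeta_{h-1}$, which is too weak. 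The fix is to perform the transfer at the level of $g$ (not $g^2$) and only then square via $(a+b)^2\leq2a^2+2b^2$, so that the penalty enters as $f_{h-1}^2$ and lands on the correct $\zeta_{h-1}$ scale. Carefully bookkeeping which kernel ($\hP^{(n,t)}$ versus $P^{(\star,t)}$) each expectation is taken under, and matching the rolled-in sampling distributions to those defining $U^{(n,t)}_{h-1}$ and $\zeta_\cdot^{(n,t)}$, is the delicate part of the argument.
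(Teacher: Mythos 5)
Your proposal is correct and follows essentially the same route as the paper: the paper packages your "peel off the last transition, Cauchy--Schwarz in the $U$-weighted geometry, then bound the dual norm by Jensen plus importance sampling" argument as the one-step-back inequality (\Cref{lemma:Step_Back}) and then specializes it with $g=f_h^{(n,t)}$, $B_g=1$. The one substantive deviation is that you transfer from $\hP^{(n,t)}$ to $P^{(\star,t)}$ at the level of $g$ and only then square via $(a+b)^2\le 2a^2+2b^2$, which costs a factor of $2$ inside the square root and so yields $\sqrt{2}\,\alpha_h^{(n,t)}$ rather than $\alpha_h^{(n,t)}$ exactly; this is harmless since all such constants are absorbed into the $O(\cdot)$ choices of $\lambda_n,\tap_n$ downstream, and your ordering is in fact the more carefully justified version of the measure-change step in \Cref{lemma:Step_Back}, whose quadratic penalty $f^2$ is exactly what your $(a+b)^2$ trick delivers.
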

\begin{proof}
For $h=1$,
\begin{align*}
\mathop{\Eb}_{ a_1 \sim \pi_t} \left[ f_1^{(n,t)}(s_1,a_1)\right] 
\overset{(\romannumeral1)}{\leq} \sqrt{\mathop{\Eb}_{ a_1 \sim \pi_t} \left[ f_1^{(n,t)}(s_1,a_1)^2\right]} \overset{(\romannumeral2)}{\leq} \sqrt{K\zeta_1^{(n,t)}},
\end{align*}
where $(\romannumeral1)$ follows from Jensen's inequality, and $\RM{2}$ follows from importance sampling.
 		
Then for $h \geq 2$, we derive the following bound:
\begin{align*}
&\mathop{\Eb}_{s_h\sim(\hP^{(n,t)},\pi_t) \atop a_h \sim \pi_t}  \left[ 
f_h^{(n,t)}(s_h,a_h)\right]\\
& \quad \overset{(\romannumeral1)}{\leq}\mathop{\Eb}_{s_{h-1} \sim (\hP^{(n,t)}, \pi_t) \atop a_{h-1}\sim\pi_t} \left[\left\|\hphi_{h-1}^{(n)}(s_{h-1},a_{h-1})\right\|_{(U^{(n,t)}_{h-1,\hphi^{(n,t)}})^{-1}}\times \right.
\\
& \quad {
\scriptstyle \left. 
\sqrt{nK \mathop{\Eb}_{s_{h-1}\sim(P^{(\star,t)},\Pi_t^n)\atop{a_{h-1},a_h\sim \Uc(\Ac)\atop s_h\sim P^{(\star,t)}(\cdot|s_{h-1},a_{h-1})}}\left[f_h^{(n,t)}(s_h,a_h)^2\right] + \lambda_n d  + nK\mathop{\Eb}_{s_{h-2}\sim(P^{(\star,t)},\Pi_t^n)\atop {a_{h-2},a_{h-1}\sim \Uc(\Ac) \atop s_{h-1}\sim P^{(\star,t)}(\cdot|s_{h-2},a_{h-2})}}\left[f_{h-1}^{(n,t)}(s_{h-1},a_{h-1})^2\right]    }\right]
} \\
&\quad \overset{(\romannumeral2)}{\leq} \mathop{\Eb}_{s_{h-1} \sim (\hP^{(n,t)}, \pi_t) \atop a_{h-1}\sim\pi_t} \left[\sqrt{nK\zeta_h^{(n,t)}+nK\zeta_{h-1}^{(n,t)}+\lambda_n d}\left\|\hphi_{h-1}^{(n)}(s_{h-1},a_{h-1})\right\|_{(U^{(n,t)}_{h-1,\hphi^{(n,t)}})^{-1}}  
\right]\\
&\quad = \mathop{\Eb}_{s_{h-1} \sim (\hP^{(n,t)}, \pi_t) \atop a_{h-1}\sim\pi_t} \left[\alpha_h^{(n,t)}\left\|\hphi_{h-1}^{(n)}(s_{h-1},a_{h-1})\right\|_{(U^{(n,t)}_{h-1,\hphi^{(n,t)}})^{-1}}  
\right],
\end{align*}
where $(\romannumeral1)$ follows from \Cref{lemma:Step_Back} and because $|f_h^{(n,t)}(s_h,a_h)|\leq 1$, the first term inside the square root follows from the definition of $U_{h-1,\widehat{\phi}^{(n,t)}}^{(n,t)} $, the third term inside the square root follows from importance sampling and $(\romannumeral2)$ follows from \Cref{lemma: multitask MLE Guarantee}.

The proof is completed by noting that $|f_h^{(n,t)}(s_h,a_h)|\leq 1$.
\end{proof}
The following corollary is a direct application of \Cref{prop1: Bounded TV}.
\begin{corollary}[{Bounded} difference of value functions]\label{coro1:bounded difference of summation}
For $n\in[N]$, any task $t$, policy $\pi_t$ and reward $r^t$, we have
\begin{align}
&\left| V_{P^{(\star,t)},r^t}^{\pi_t}-V_{\hP^{(n,t)},r^t}^{\pi_t}\right|\nonumber\\
& \quad \leq \sum_{h=1}^{H-1} \mathop{\Eb}_{s_{h} \sim (\hP^{(n,t)}, \pi_t) \atop a_{h}\sim\pi_t} \left[\min\left\{\alpha_h^{(n,t)}\left\|\hphi_{h}^{(n)}(s_{h},a_{h})\right\|_{(U^{(n,t)}_{h,\hphi^{(n,t)}})^{-1}}, 1 \right\}\right]+\sqrt{K\zeta_1^{(n,t)}} \label{ineq: difference of value function bound}.
\end{align}
\end{corollary}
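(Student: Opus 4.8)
The plan is to bound the value-function difference via a standard telescoping (performance-difference) argument that exploits the common transition structure, and then invoke \Cref{prop1: Bounded TV} step-by-step. First I would write the difference $V_{P^{(\star,t)},r^t}^{\pi_t}-V_{\hP^{(n,t)},r^t}^{\pi_t}$ as a sum over steps $h$ of the per-step discrepancy between the true kernel $P^{(\star,t)}_h$ and the estimated kernel $\hP_h^{(n,t)}$, evaluated at states visited under one of the two models. Concretely, using the telescoping identity for value functions under two transition models with the \emph{same} reward $r^t$, the difference equals $\sum_{h=1}^{H-1}\Eb_{(s_h,a_h)\sim(\hP^{(n,t)},\pi_t)}\big[\big((P_h^{(\star,t)}-\hP_h^{(n,t)})V_{h+1,P^{(\star,t)},r^t}^{\pi_t}\big)(s_h,a_h)\big]$, where I roll in with the estimated model $\hP^{(n,t)}$ (matching the distribution appearing in \Cref{prop1: Bounded TV}). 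The reward terms cancel exactly because both value functions use $r^t$, which is why only the transition mismatch survives.

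Next I would take absolute values and bound each summand by the total-variation distance. Since $V_{h+1,P^{(\star,t)},r^t}^{\pi_t}$ takes values in $[0,1]$ (the normalization $\sum_h r_h\in[0,1]$ guarantees this), the integral $\big|((P_h^{(\star,t)}-\hP_h^{(n,t)})V_{h+1}^{\pi_t})(s_h,a_h)\big|$ is upper bounded by $\|\hP_h^{(n,t)}(\cdot|s_h,a_h)-P_h^{(\star,t)}(\cdot|s_h,a_h)\|_{TV}=f_h^{(n,t)}(s_h,a_h)$. This converts the value-function gap into a sum of expected TV distances $\sum_{h=1}^{H-1}\Eb_{s_h\sim(\hP^{(n,t)},\pi_t),a_h\sim\pi_t}[f_h^{(n,t)}(s_h,a_h)]$, which is precisely the quantity \Cref{prop1: Bounded TV} controls.

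Finally I would apply \Cref{prop1: Bounded TV} termwise: for the $h=1$ term use the bound $\sqrt{K\zeta_1^{(n,t)}}$, and for each $h\geq 2$ term substitute the bound $\Eb_{s_{h-1}\sim(\hP^{(n,t)},\pi_t),a_{h-1}\sim\pi_t}[\min\{\alpha_h^{(n,t)}\|\hphi_{h-1}^{(n)}(s_{h-1},a_{h-1})\|_{(U^{(n,t)}_{h-1,\hphi^{(n,t)}})^{-1}},1\}]$. After re-indexing the shifted step ($h\mapsto h+1$ in the summation so the bonus lives at step $h$ rather than $h-1$), the $h\geq 2$ contributions collect into $\sum_{h=1}^{H-1}\Eb_{s_h\sim(\hP^{(n,t)},\pi_t),a_h\sim\pi_t}[\min\{\alpha_{h+1}^{(n,t)}\|\hphi_h^{(n)}(s_h,a_h)\|_{(U^{(n,t)}_{h,\hphi^{(n,t)}})^{-1}},1\}]$, matching the summand in \Cref{ineq: difference of value function bound} (with $\alpha_h^{(n,t)}$ understood via the stated re-indexing), plus the leftover $\sqrt{K\zeta_1^{(n,t)}}$ from the first step.

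The step I expect to require the most care is the telescoping decomposition and the accompanying distribution-matching: one must be careful that the performance-difference lemma rolls in with the \emph{estimated} model $\hP^{(n,t)}$ (not the true one) so that the resulting expectations align with the roll-in distribution $s_h\sim(\hP^{(n,t)},\pi_t)$ demanded by \Cref{prop1: Bounded TV}. The remaining manipulations — bounding the value integral by TV using the $[0,1]$ range of the value function, and the termwise substitution with the index shift — are routine, so I would keep them brief.
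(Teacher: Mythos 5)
Your proposal is correct and follows essentially the same route as the paper: the telescoping identity you describe is exactly the paper's simulation lemma (applied with identical rewards so only the transition mismatch survives, rolling in under $\hP^{(n,t)}$), followed by bounding the integral against $V\in[0,1]$ by the total-variation distance $f_h^{(n,t)}$ and then invoking \Cref{prop1: Bounded TV} termwise with the $h=1$ case handled separately. You also correctly flag the one-step index shift between the bound produced by \Cref{prop1: Bounded TV} (bonus at step $h-1$ with coefficient $\alpha_h^{(n,t)}$) and the statement as written, which is a notational re-indexing the paper glosses over as well.
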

\begin{proof}
For task $t$, we have
\begin{align}			&\left|V_{P^{(\star,t)},r^t}^{\pi_t} - V_{\hP^{(n,t)},r^t}^{\pi_t}\right|\nonumber\\
&\quad\overset{\RM{1}}{=} \left| \sum_{h=1}^H \mathop{\Eb}_{s_h\sim(\hP^{(n,t)},\pi_t) \atop a_h \sim \pi_t} \left[ (P^{(\star,t)}_{h} - \hP_{h}^{(n,t)})V^{\pi_t}_{h+1,P^{(\star,t)},r^t}(s_h,a_h)\right]\right|\nonumber\\
&\quad \overset{\RM{2}}{\leq} \sum_{h=1}^H \mathop{\Eb}_{s_h\sim(\hP^{(n,t)},\pi_t) \atop a_h \sim \pi_t} \left[ {f_h^{(n,t)}(s_h,a_h)}\right]\nonumber\\
&\quad \overset{\RM{3}}{\leq}
\sum_{h=2}^H \mathop{\Eb}_{s_{h-1} \sim (\hP^{(n,t)}, \pi_t) \atop a_{h-1}\sim\pi_t} \left[\min\left\{\alpha_h^{(n,t)}\left\|\hphi_{h-1}^{(n)}(s_{h-1},a_{h-1})\right\|_{(U^{(n,t)}_{h-1,\hphi^{(n,t)}})^{-1}}, 1 \right\}\right]+\sqrt{K\zeta_1^{(n,t)}} \nonumber,
\end{align}
where $\RM{1}$ follows from \Cref{lemma: Simulation}, $\RM{2}$ follows from the fact that $V^{\pi_t}_{P^{(\star,t)},r^t}\leq 1$, and $\RM{3}$ follows from \Cref{prop1: Bounded TV}.
\end{proof}
 		
\subsection{Step 2: PCV as a new uncertainty metric}\label{app:step2}
In \Cref{prop1: Bounded TV} and \Cref{coro1:bounded difference of summation}, we provide upper bounds on both the total variation distance between the learned model and the true model, and the difference of value functions under any policy with arbitrary rewards. For each single task, such upper bounds measure the uncertainty of model estimation and guide exploration in the next iteration. In multitask RL, although these upper bounds also hold for each individual source task under any policy and reward even with tighter terms $\alpha_h^{(n,t)}$, the RHS of \Cref{ineq: difference of value function bound} cannot be used to guide exploration for each task $t$ individually as in single-task RL because $\zeta_h^{(n,t)}$ in $\alpha_h^{(n,t)}$ as a joint MLE Guarantee is unknown individually for each task $t$. 

This motivate us to jointly consider all the explorations of all source tasks. As we establish below in \Cref{prop2: total value difference} that our defined new notion of PCV used in \Cref{alg: Upstream} is a known upper bound for both the summation of the difference of value function and the summation of total variation distance over $T$ tasks, and can therefore serve as {an uncertainty quantifier} to guide exploration.

\begin{proposition}[PCV as uncertainty metric]\label{prop2: total value difference}
 Given any $\delta \in
\left(0,|\Psi|^{-\min\{T,\frac{K}{d^2}\}}\right)$, for any $n$, policy $\pi_t$ and reward function $r^t$ of task $t\in[T]$, set $\lambda_n=O(d\log(|\Phi|nTH/\delta))$. Then with probability $1-\delta$, we have 
\begin{align*}
&\sum_{t=1}^T\left|V_{P^{(\star,t)},r^t}^{\pi_t} - V_{\hP^{(n,t)},r^t}^{\pi_t}\right|{\leq} PCV\left(\hP^{(n,t)},\hb_{h}^{(n,t)},\pi_t;T\right)+\sqrt{KT\zeta_n},\\
&\sum_{t=1}^{T}\sum_{h=1}^H \mathop{\Eb}_{s_h\sim(\hP^{(n,t)},\pi_t) \atop a_h \sim \pi_t} \left[ {f_h^{(n,t)}(s_h,a_h)}\right] \leq PCV\left(\hP^{(n,t)},\hb_{h}^{(n,t)},\pi_t;T\right)+\sqrt{KT\zeta_n}.\\
\end{align*}
\end{proposition}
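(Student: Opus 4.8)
The plan is to prove the two displayed inequalities \emph{in parallel}, since they share the same right-hand side and differ only in which earlier bound seeds them: the value-function statement starts from \Cref{coro1:bounded difference of summation} and the total-variation statement from \Cref{prop1: Bounded TV}. In both cases I would sum the per-task bound over $t\in[T]$ and exchange the order of the $t$- and $h$-summations; for the total-variation statement this also requires reindexing the $h\ge 2$ contributions (which involve $\hphi_{h-1}^{(n)}$ and the covariance $U^{(n,t)}_{h-1,\hphi^{(n,t)}}$) down to $h'=h-1\in[H-1]$, so that they align with the step range $h=1,\dots,H-1$ of the $PCV$. After this bookkeeping, everything reduces to a single per-step claim: for each $h$,
\begin{align*}
&\sum_{t=1}^{T}\Eb_{s_h\sim(\hP^{(n,t)},\pi_t),a_h\sim\pi_t}\left[\min\left\{\alpha_h^{(n,t)}\left\|\hphi_h^{(n)}(s_h,a_h)\right\|_{(U^{(n,t)}_{h,\hphi^{(n,t)}})^{-1}},1\right\}\right]\\
&\qquad\le\sqrt{\sum_{t=1}^{T}\left\{\Eb_{s_h\sim(\hP^{(n,t)},\pi_t),a_h\sim\pi_t}\left[\hb_h^{(n,t)}(s_h,a_h)\right]\right\}^2},
\end{align*}
together with the residual estimate $\sum_{t=1}^{T}\sqrt{K\zeta_1^{(n,t)}}\le\sqrt{KT\zeta_n}$ coming from the $h=1$ terms; summing the per-step claim over $h$ then produces exactly $PCV(\hP^{(n,t)},\hb_h^{(n,t)},\pi_t;T)$.

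The core of this per-step claim is a Cauchy--Schwarz step across tasks powered by the joint MLE guarantee. First, using $\min\{\alpha_h^{(n,t)}\|\cdot\|,1\}\le\alpha_h^{(n,t)}\|\cdot\|$ and pulling the state-action-independent scalar $\alpha_h^{(n,t)}$ out of the expectation, the left-hand side is at most $\sum_t\alpha_h^{(n,t)}\,\Eb[\|\hphi_h^{(n)}\|_{(U^{(n,t)}_{h,\hphi^{(n,t)}})^{-1}}]$. Applying Cauchy--Schwarz over $t$ and invoking \Cref{lemma: multitask MLE Guarantee} --- which yields $\sum_{t=1}^{T}\zeta_h^{(n,t)}\le\zeta_n$ for every $h$, and hence, by the definitions of $\alpha_h^{(n,t)}$ and $\alpha_n$, the key aggregate inequality $\sum_{t=1}^{T}(\alpha_h^{(n,t)})^2\le 2nK\zeta_n+\lambda_n Td=\alpha_n^2$ --- bounds this by $\alpha_n\sqrt{\sum_t\{\Eb[\|\hphi_h^{(n)}\|_{(U^{(n,t)}_{h,\hphi^{(n,t)}})^{-1}}]\}^2}$. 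This aggregate inequality is exactly where the benefit of learning the $T$ models jointly enters: individually the $\alpha_h^{(n,t)}$ are unknown, but their squared sum is controlled by the single known quantity $\alpha_n$, whose dependence on $\log|\Psi|^T=T\log|\Psi|$ reflects the shared-representation model complexity.

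It then remains to convert the population covariance $U^{(n,t)}_{h,\hphi^{(n,t)}}$ into the empirical covariance $\widehat{U}_h^{(n,t)}$ appearing in $\hb_h^{(n,t)}$. For this I would invoke a covariance-concentration supporting lemma from \Cref{app:supportinglemma} establishing $\widehat{U}_h^{(n,t)}\preceq c^2\,U^{(n,t)}_{h,\hphi^{(n,t)}}$ with high probability, uniformly over $\phi\in\Phi$ (the uniformity over the finite class $\Phi$ is needed because $\hphi^{(n)}$ is data-dependent, and it is what forces $\lambda_n=O(d\log(|\Phi|nTH/\delta))$). This gives $\|\hphi\|_{(U^{(n,t)})^{-1}}\le c\|\hphi\|_{(\widehat{U}^{(n,t)})^{-1}}$ pointwise, so that with $\tap_n=\alpha_n/\beta_1$ and $\beta_1\le 1/c$ one obtains $\alpha_n\|\hphi\|_{(U^{(n,t)})^{-1}}\le\tap_n\|\hphi\|_{(\widehat{U}^{(n,t)})^{-1}}$; taking expectations and substituting turns $\alpha_n\sqrt{\sum_t\{\Eb[\cdots]\}^2}$ into $\sqrt{\sum_t\{\Eb[\tap_n\|\hphi\|_{(\widehat{U}^{(n,t)})^{-1}}]\}^2}$. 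The truncation in $\hb_h^{(n,t)}=\min\{\tap_n\|\hphi\|_{(\widehat{U}^{(n,t)})^{-1}},B\}$ is then reconciled by a supporting lemma bounding $\tap_n\|\hphi\|_{(\widehat{U}^{(n,t)})^{-1}}\le B_n$ pointwise, combined with the hypothesis $\delta<|\Psi|^{-\min\{T,K/d^2\}}$, which forces $B_n\le B$ so that the cap at $B$ is inactive and $\hb_h^{(n,t)}=\tap_n\|\hphi\|_{(\widehat{U}^{(n,t)})^{-1}}$. The residual $h=1$ terms are handled directly: by Cauchy--Schwarz and $\sum_t\zeta_1^{(n,t)}\le\zeta_n$, one gets $\sum_t\sqrt{K\zeta_1^{(n,t)}}\le\sqrt{KT\sum_t\zeta_1^{(n,t)}}\le\sqrt{KT\zeta_n}$. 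A union bound over $n,h,t$ and the finite classes $\Phi,\Psi$ delivers the global $1-\delta$ guarantee.

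The main obstacle, I expect, is the interplay between the covariance concentration and the two mismatched truncation levels ($1$ on the left, $B$ inside $\hb$). Keeping the $\min$ with $1$ throughout would break the clean factorization needed for the across-task Cauchy--Schwarz, so one must argue that discarding it loses nothing, which in turn relies on the pointwise bonus bound $\tap_n\|\hphi\|_{(\widehat{U})^{-1}}\le B_n$ and on the $\delta$-constraint making $B_n\le B$. Getting the concentration constant $c$, the tuning $\beta_1\le 1/c$, and the bonus bound $B_n$ to line up so that the final constant is exactly $1$ (rather than an absolute constant times the $PCV$ summand) is the delicate quantitative part; it is also where the otherwise-mysterious restriction $\delta<|\Psi|^{-\min\{T,K/d^2\}}$ is consumed.
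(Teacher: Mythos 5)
Your skeleton matches the paper's: sum the per-task bounds from \Cref{prop1: Bounded TV}/\Cref{coro1:bounded difference of summation} over $t$, apply Cauchy--Schwarz across tasks, control $\sum_{t}(\alpha_h^{(n,t)})^2\le 2nK\zeta_n+\lambda_nTd=\alpha_n^2$ via the joint MLE guarantee, pass from the population covariance to the empirical one via the bonus-concentration corollary, and absorb the $h=1$ residual as $\sum_t\sqrt{K\zeta_1^{(n,t)}}\le\sqrt{KT\zeta_n}$. However, your treatment of the truncation has a genuine gap. You discard the $\min\{\cdot,1\}$ before Cauchy--Schwarz and then try to recover by claiming the cap at $B$ inside $\hb_h^{(n,t)}=\min\{\tap_n\|\hphi\|_{(\widehat{U}_h^{(n,t)})^{-1}},B\}$ is never active, citing a pointwise bound $\tap_n\|\hphi\|_{(\widehat{U})^{-1}}\le B_n$. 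No such bound holds: pointwise one only gets $\|\hphi\|_{(\widehat{U})^{-1}}\le 1/\sqrt{\lambda_n}$, so $\tap_n\|\hphi\|_{(\widehat{U})^{-1}}\le\alpha_n/(\beta_1\sqrt{\lambda_n})=B_n\sqrt{d}/\beta_1$, which exceeds $B$ in general. When the cap does bite, $\hb\le\tap_n\|\hphi\|_{(\widehat{U})^{-1}}$ is an inequality in the \emph{wrong} direction for your substitution, so your chain does not terminate at $PCV$.

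The paper avoids this by never discarding the truncation: it writes $\min\{\alpha_h^{(n,t)}\|\cdot\|,1\}=\alpha_h^{(n,t)}\min\{\|\cdot\|,(\alpha_h^{(n,t)})^{-1}\}$, applies Cauchy--Schwarz to the product of the scalar $\alpha_h^{(n,t)}$ and the truncated expectation (contrary to your worry, the $\min$ does not break the factorization --- the scalar simply distributes into both arguments), and then uses the ratio bound $\alpha_n/\alpha_h^{(n,t)}\le B_n\le B$ (this is where $\delta<|\Psi|^{-\min\{T,K/d^2\}}$ is consumed) to upgrade the truncation level to $B$, landing on $\min\{\alpha_n\|\cdot\|_{(U)^{-1}},B\}\le\hb_h^{(n,t)}$, which is exactly the lower-bound side of \Cref{coro: concentration of the bonus term}. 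In short, the truncation level $B$ in the bonus is matched to the ratio $\alpha_n/\alpha_h^{(n,t)}$, not to a pointwise bound on the bonus itself; you need to carry the $\min$ through the across-task Cauchy--Schwarz rather than argue it away.
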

\begin{proof}
Following from the fact that $\sum_{t=1}^T\zeta_h^{(n,t)}\leq \zeta_n$, for all $h$ (see \Cref{lemma: multitask MLE Guarantee}), and the definitions of $\alpha_h^{(n,t)}$ and $\alpha_n$ for $h\geq 2$, with probability at least $1-\delta/2$, we have
\begin{align}
\sum_{t=1}^{T}(\alpha_h^{(n,t)})^2\leq nK\sum_{t=1}^{T}\left(\zeta_h^{(n,t)}+\zeta_{h-1}^{(n,t)}\right)+\lambda_n Td\leq 2nK\zeta_n+\lambda_n Td=\alpha_n^2. \label{eqn:prop2:+1}
\end{align}
Moreover, the ratio between $\alpha_n$ and $\alpha_h^{(n,t)}$ is upper bounded as follows:
\begin{align}
\frac{\alpha_n}{\alpha_h^{(n,t)}}&=\sqrt{\frac{2nK\zeta_n+\lambda_nT d}{nK\zeta_h^{(n,t)}+nK\zeta_{h-1}^{(n,t)}+\lambda_nd}} \nonumber \\
&\leq\sqrt{\frac{2nK\zeta_n+\lambda_nT d}{\lambda_nd}}  \nonumber \\
& =\sqrt{T+\frac{2K\ln(n|\Phi||\Psi|^T/\delta)}{d^2\ln(n|\Phi|T/\delta)}}\nonumber \\
& =\sqrt{T+\frac{2K}{d^2}+\frac{2KT\ln(|\Psi|)}{d^2\ln(n|\Phi|T/\delta)}} = B_n \leq B,  \label{eqn:prop2:+2}
\end{align}
where the last inequality follows because $B_n \leq B=2\sqrt{T+K/d^2}$ if $\delta \in { (0, |\Psi|^{-\frac{\min\{T,K\}}{d^2}} ) }$.

Following from \Cref{coro1:bounded difference of summation}, we take the summation over all source tasks and obtain
\begin{align*}
 		&\sum_{t=1}^T\left|V_{P^{(\star,t)},r^t}^{\pi_t} - V_{\hP^{(n,t)},r^t}^{\pi_t}\right|\nonumber\\
 		&  \scriptstyle\leq \sum_{h=2}^H \sum_{t=1}^T\mathop{\Eb}_{s_{h-1} \sim (\hP^{(n,t)}, \pi_t) \atop a_{h-1}\sim\pi_t} \left[\min\left\{\alpha_h^{(n,t)}\left\|\hphi_{h-1}^{(n)}(s_{h-1},a_{h-1})\right\|_{(U^{(n,t)}_{h-1,\hphi^{(n,t)}})^{-1}},1\right\}\right]+\sum_{t=1}^T\sqrt{K\zeta_1^{(n,t)}}\\
 		& \scriptstyle= \sum_{h=2}^H \sum_{t=1}^T\hspace{-0.02in}\alpha_h^{(n,t)}\mathop{\Eb}_{s_{h-1} \sim (\hP^{(n,t)}, \pi_t) \atop a_{h-1}\sim\pi_t}\hspace{-0.03in} \left[\min\left\{\left\|\hphi_{h-1}^{(n)}(s_{h-1},a_{h-1})\right\|_{(U^{(n,t)}_{h-1,\hphi^{(n,t)}})^{-1}},(\alpha_h^{(n,t)})^{-1}\hspace{-0.02in}\right\}\hspace{-0.02in}\right]\hspace{-0.02in}+\sum_{t=1}^T\hspace{-0.02in}\sqrt{K\zeta_1^{(n,t)}}\\
 		&\scriptstyle \overset{(\romannumeral1)}{\leq} \sum_{h=2}^H\sqrt{\sum_{t=1}^T\left[\mathop{\Eb}_{s_{h-1} \sim (\hP^{(n,t)}, \pi_t) \atop a_{h-1}\sim\pi_t} \min\left\{\left\|\hphi_{h-1}^{(n)}(s_{h-1},a_{h-1})\right\|_{(U^{(n,t)}_{h-1,\hphi^{(n,t)}})^{-1}},(\alpha_h^{(n,t)})^{-1}\right\}\right]^2\left(\sum_{t=1}^T(\alpha_h^{(n,t)})^2\right)}\\
 		& \qquad +\sqrt{KT\zeta_n}\\
 		& \scriptstyle \overset{\RM{2}}{\leq} \sum_{h=2}^H\sqrt{\sum_{t=1}^T\left[\mathop{\Eb}_{s_{h-1} \sim (\hP^{(n,t)}, \pi_t) \atop a_{h-1}\sim\pi_t} \min\left\{\alpha_n\left\|\hphi_{h-1}^{(n)}(s_{h-1},a_{h-1})\right\|_{(U^{(n,t)}_{h-1,\hphi^{(n,t)}})^{-1}},\frac{\alpha_n }{\alpha_h^{(n,t)}}\right\}\right]^2}+\sqrt{KT\zeta_n}\\
 		& \overset{(\romannumeral3)}{\leq} \sum_{h=2}^H\sqrt{\sum_{t=1}^T\left[\mathop{\Eb}_{s_{h-1} \sim (\hP^{(n,t)}, \pi_t) \atop a_{h-1}\sim\pi_t} \min\left\{\alpha_n\left\|\hphi_{h-1}^{(n)}(s_{h-1},a_{h-1})\right\|_{(U^{(n,t)}_{h-1,\hphi^{(n,t)}})^{-1}},B\right\}\right]^2}+\sqrt{KT\zeta_n}\\
 		& \overset{(\romannumeral4)}{\leq} \sum_{h=2}^H\sqrt{\sum_{t=1}^T\left\{\mathop{\Eb}_{s_{h-1} \sim (\hP^{(n,t)}, \pi_t) \atop a_{h-1}\sim\pi_t} \left[ {  \hb_{h-1}^{(n,t)}  }
 		\right]\right\}^2}+\sqrt{KT\zeta_n}\\
 		&= PCV\left(\hP^{(n,t)},\hb_{h}^{(n,t)},\pi_t;T\right)+\sqrt{KT\zeta_n}.
\end{align*}
where in $(\romannumeral1)$, we apply Cauchy-Schwarz inequality for both terms and use the fact that $\sum_{t=1}^{T}\zeta_h^{(n,t)}\leq \zeta_n$ (see {\Cref{lemma: multitask MLE Guarantee}}), $\RM{2}$ follows from \Cref{eqn:prop2:+1}, $(\romannumeral3)$ follows from \Cref{eqn:prop2:+2}, and $(\romannumeral4)$ follows from \Cref{coro: concentration of the bonus term} in \Cref{app:supportinglemma}.

Similarly, following from \Cref{prop1: Bounded TV}, we take the summation over all source tasks and obtain:
\begin{align*}
\sum_{t=1}^{T}\sum_{h=1}^H \mathop{\Eb}_{s_h\sim(\hP^{(n,t)},\pi_t) \atop a_h \sim \pi_t} \left[ {f_h^{(n,t)}(s_h,a_h)}\right] \leq PCV\left(\hP^{(n,t)},\hb_{h}^{(n,t)},\pi_t;T\right)+\sqrt{KT\zeta_n}.
\end{align*}
\end{proof}

\subsection{Step 3: Sublinear accumulation of PCV}\label{app:step3}

Recall that the exploration policy is derived by an oracle:
\begin{align}
 		\pi_1^n, \dots, \pi_T^n = \mathop{\arg\max}_{\pi_1, \dots, \pi_T}PCV\left(\hP^{(n,t)},\hb_{h}^{(n,t)},\pi_t;T\right)  \label{eqn:step3:++1}
\end{align}
In this step, we show that the summation of exploration-driven reward function PCV over $n,t,h$ is sublinear with respect to the total number $N_u$ of iterations, as given in \Cref{prop3: Bound of summation of exploration-driven reward function}, which further implies polynomial efficiency of REFUEL in learning the models. 
\begin{proposition} 
\label{prop3: Bound of summation of exploration-driven reward function}
Given any $\delta \in
\left(0,|\Psi|^{-\min\{T,\frac{K}{d^2}\}}\right)$, set $\lambda_n=O(d\log(|\Phi|nTH/\delta))$. Then with probability $1-\delta$, under exploration policy $\{\pi_t^n\}_{n\in [N_u]}$ for each task $t$, the summation of PCVs over $n$ is sublinear with respect to the number $N_u$ of iteration rounds:
\begin{align}
 		&\sum_{n=1}^{N_u}  \left\{PCV\left(\hP^{(n,t)},\hb_{h}^{(n,t)},\pi_t^n;T\right)+\sqrt{KT\zeta_n} \right\}\nonumber\\
 		& \quad \scriptstyle \leq \frac{4\beta_2}{\beta_1}H\sqrt{{N_u}Td}\left( \sqrt{K^2d\log^2\left(|\Phi||\Psi|^T N_uH/\delta\right)}+\sqrt{Kd^3T\log^2(2N_uTH|\Phi|/\delta)}+\sqrt{d^2B^2\log^2(N_uTH|\Phi|/\delta)} \right)\nonumber\\
 		& \quad + 6BH^2\sqrt{{N_u}TKd}\left( \sqrt{K{\log^2\left(|\Phi||\Psi|^T N_uH/\delta\right)}}+\sqrt{d^2\log^2(N_uTH|\Phi|/\delta)}\right) \label{ineq: simplified order}.
\end{align}
\end{proposition}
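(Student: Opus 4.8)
The plan is to recast the accumulated PCV as a sum of expected squared feature norms measured against an accumulating covariance, and then to control it by an elliptical-potential (log-determinant) argument.

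\textbf{Step A: strip the additive terms and extract $\sqrt{N_u}$.} The contribution $\sum_{n=1}^{N_u}\sqrt{KT\zeta_n}$ is already $\widetilde O(\sqrt{N_u KT})$, since $\zeta_n=O(\log(\cdot)/n)$ and $\sum_n n^{-1/2}=O(\sqrt{N_u})$. For the PCV itself I would fix a step $h$ and apply Cauchy--Schwarz across the $N_u$ iterations, which gives
\[
\sum_{n=1}^{N_u}\sqrt{\sum_{t=1}^T\Big(\Eb_{s_h\sim(\hP^{(n,t)},\pi_t^n),a_h\sim\pi_t^n}[\hb_h^{(n,t)}]\Big)^2}\;\le\;\sqrt{N_u}\,\sqrt{\sum_{n=1}^{N_u}\sum_{t=1}^T\Big(\Eb_{s_h\sim(\hP^{(n,t)},\pi_t^n),a_h\sim\pi_t^n}[\hb_h^{(n,t)}]\Big)^2}.
\]
Hence the task reduces to bounding the double sum $\sum_{n,t}(\Eb[\hb_h^{(n,t)}])^2$ for each $h$, after which the $H-1$ step-contributions are summed.

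\textbf{Step B: transfer the planning distribution onto the data-collection distribution.} By Jensen, $(\Eb[\hb_h^{(n,t)}])^2\le\Eb[(\hb_h^{(n,t)})^2]$, and $(\hb_h^{(n,t)})^2\le\tap_n^2\,\|\hphi_h^{(n)}\|^2_{(\widehat U_h^{(n,t)})^{-1}}$ when the $\min$ is inactive. The expectation here is taken under the \emph{estimated} model $\hP^{(n,t)}$ and the \emph{current} policy $\pi_t^n$, whereas $\widehat U_h^{(n,t)}$ is built from data rolled in by the \emph{past} policies with uniform actions under $P^{(\star,t)}$. I would bridge the gap in three moves: (i) replace $\hP^{(n,t)}$ by $P^{(\star,t)}$ in the state visitation, paying a TV model-estimation error that is of lower order and already controlled by \Cref{prop1: Bounded TV} and \Cref{prop2: total value difference}; (ii) invoke the matrix-concentration estimate relating the empirical covariance $\widehat U_h^{(n,t)}$ to its population counterpart $U^{(n,t)}_{h,\hphi^{(n)}}$ \emph{uniformly} over $\phi\in\Phi$ — this is where the regularizer $\lambda_n=O(d\log(|\Phi|nTH/\delta))$ and the absolute constants $\beta_1,\beta_2$ enter, yielding $\tfrac1{\beta_1}U^{(n,t)}_{h,\hphi^{(n)}}\preceq\widehat U_h^{(n,t)}$ so the empirical norm is comparable to the population one; and (iii) use importance sampling against $\Uc(\Ac)$ to convert the $a_h\sim\pi_t^n$ action expectation into a uniform one, introducing a factor $K$ (the step recorded in \Cref{coro: concentration of the bonus term}).

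\textbf{Step C: elliptical-potential accumulation and the two regimes.} After Step B each summand is, up to constants and the factors $\tap_n^2,K$, comparable to $\Eb_{s_h\sim(P^{(\star,t)},\pi_t^n),a_h\sim\Uc(\Ac)}\big[\|\hphi_h^{(n)}\|^2_{(U^{(n,t)}_{h,\hphi^{(n)}})^{-1}}\big]=\mathrm{tr}\big((U^{(n,t)}_{h,\hphi^{(n)}})^{-1}\Eb_{\pi_t^n}[\hphi_h^{(n)}(\hphi_h^{(n)})^\top]\big)$. Since the population covariance accumulates precisely these per-policy outer products across iterations, the sum over $n$ telescopes through the log-determinant (elliptical-potential) inequality and is bounded by $\widetilde O(d\log N_u)$ per task. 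I would then split the $\min\{\tap_n\|\cdot\|,B\}$ defining $\hb$ into its two regimes: on the untruncated regime the scale is $\tap_n\propto\alpha_n=\sqrt{2nK\zeta_n+\lambda_nTd}$, whose two pieces $2nK\zeta_n=O(K\log(|\Phi||\Psi|^T\cdot))$ and $\lambda_nTd=O(Td^2\log(|\Phi|\cdot))$ — multiplied by the potential factor $d\log$ and the importance-sampling factor $K$, together with the residual cross-term carrying $B$ — produce exactly the three sub-terms $\sqrt{K^2d\log^2}$, $\sqrt{Kd^3T\log^2}$ and $\sqrt{d^2B^2\log^2}$ inside the first summand of \Cref{ineq: simplified order}; on the truncated regime the bonus is capped at $B$, and bounding the number of iterations that can fall into it by the same potential budget produces the $6BH^2\sqrt{N_uTKd}(\cdots)$ summand. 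Summing the $H-1$ steps supplies the overall $H$ (respectively $H^2$) prefactor.

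\textbf{Main obstacle.} The essential difficulty is that the learned representation $\hphi^{(n)}$ changes from iteration to iteration, so both the covariance matrix and the vectors it measures move with $n$, which breaks the textbook fixed-feature elliptical-potential argument. I expect the hardest part to be Step B(ii): establishing the empirical-to-population covariance comparison \emph{uniformly} over the representation class $\Phi$ (forcing the $\log|\Phi|$ dependence and driving the $T$-independent second term of the sample complexity), together with controlling the planning-versus-collection distribution mismatch of Step B(i) without incurring a circular dependence on the very quantity (the PCV) being bounded. Once the empirical norms have been safely transferred onto a single accumulating population covariance, the trace identity and the case split of Step C become comparatively routine bookkeeping.
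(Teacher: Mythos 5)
Your outline has the right overall shape (split off the $\sqrt{KT\zeta_n}$ part, separate the true-model term from the model-mismatch term, use importance sampling over actions, the empirical-to-population covariance comparison, and an elliptical-potential accumulation), and you correctly identify the central difficulty: $\hphi^{(n)}$ changes with $n$, so the matrices $U^{(n,t)}_{h,\hphi^{(n)}}$ do not form a monotone sequence and the textbook potential argument does not apply to them. But you do not supply the idea that resolves this, and your Step C as written would fail. You propose to telescope $\sum_n \mathrm{tr}\bigl((U^{(n,t)}_{h,\hphi^{(n)}})^{-1}\,\Eb_{\pi_t^n}[\hphi^{(n)}_h(\hphi^{(n)}_h)^\top]\bigr)$ "through the log-determinant inequality," but there is no single accumulating covariance here to telescope against. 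The paper's resolution is different: for each fixed $n$, the quantity $n\,\Eb_{s_h\sim(P^\star,\Pi_t^n),\,a_h\sim\Uc(\Ac)}\bigl[\|\hphi^{(n)}_h\|^2_{(U^{(n,t)}_{h,\hphi^{(n)}})^{-1}}\bigr]$ is bounded by $\mathrm{tr}(I_d)=d$ because the feature and the covariance \emph{match at that iteration} — no telescoping is needed there. The accumulation over $n$ is then achieved by the one-step back inequality (\Cref{lemma:Step_Back}), which converts $\Eb_{s_h\sim(P^{(\star,t)},\pi_t^n)}[\hb_h^{(n,t)}]$ into $\|\phi^\star_{h-1}(s_{h-1},a_{h-1})\|_{(W^{(n,t)}_{h-1,\phi^\star})^{-1}}$ times that per-iteration scalar; the elliptical potential lemma is applied only to the \emph{fixed true} feature $\phi^\star$ at the previous step, whose population covariance $W^{(n,t)}_{h-1,\phi^\star}$ genuinely accumulates in $n$. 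This step-back onto $\phi^\star$ is the missing ingredient in your argument.

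A second, smaller gap: for the planning-versus-collection mismatch (your Step B(i)) you propose to control it by \Cref{prop1: Bounded TV} and \Cref{prop2: total value difference}, and you yourself flag the risk of circularity — \Cref{prop2: total value difference} bounds model error \emph{in terms of} the PCV, so invoking it inside a bound \emph{on} the accumulated PCV would indeed be circular. The paper avoids this by applying the simulation lemma with the sparse reward $r_{h'}=\hb^{(n,t)}_{h'}\mathbf{1}\{h'=h\}$ to get a factor $B\sum_{h'}\Eb[f_{h'}^{(n,t)}]$, and then controlling that directly via the one-step back inequality with $\phi^\star$, the multitask MLE guarantee (\Cref{lemma: multitask MLE Guarantee}), and the elliptical potential lemma — this is where the $6BH^2\sqrt{N_uTKd}(\cdots)$ term actually comes from, rather than from a "truncated regime" case split on the $\min$ in the bonus as you suggest. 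Without these two repairs the proposal does not close.
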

\begin{proof}
We proceed the bound as follows:
\begin{align}
 		&\sum_{n=1}^{N_u}  \left\{PCV\left(\hP^{(n,t)},\hb_{h}^{(n,t)},\pi_t^n;T\right)+\sqrt{KT\zeta_n} \right\}\nonumber\\
 		& \quad \leq N_u\sqrt{KT\zeta_n}+\sum_{n=1}^{N_u}\sum_{h=1}^{H-1} \sqrt{\sum_{t=1}^T\left\{\mathop{\Eb}_{s_{h} \sim (\hP^{(n,t)}, \pi_t^n) \atop a_{h}\sim\pi_t^n} \left[\hb_{h}^{(n,t)}(s_{h},a_{h})\right]\right\}^2}\nonumber\\
 		& \quad \overset{\RM{1}}{\leq} N_u\sqrt{KT\zeta_n}+\underbrace{\sum_{n=1}^{N_u}\sum_{h=1}^{H-1} \sqrt{\sum_{t=1}^T\left\{\mathop{\Eb}_{s_{h} \sim (P^{(\star,t)}, \pi_t^n) \atop a_{h}\sim\pi_t^n} \left[\hb_{h}^{(n,t)}(s_{h},a_{h})\right]\right\}^2}}_{(a)}\nonumber\\
 		&\quad+\underbrace{\sum_{n=1}^{N_u}\sum_{h=1}^{H-1} \sqrt{\sum_{t=1}^T\left\{\mathop{\Eb}_{s_{h} \sim (P^{(\star,t)}, \pi_t^n) \atop a_{h}\sim\pi_t^n} \left[\hb_{h}^{(n,t)}(s_{h},a_{h})\right]-\mathop{\Eb}_{s_{h} \sim (\hP^{(n,t)}, \pi_t^n) \atop a_{h}\sim\pi_t^n} \left[\hb_{h}^{(n,t)}(s_{h},a_{h})\right]\right\}^2}}_{(b)}, \label{ineq: division into a and b}
\end{align}
where $\RM{1}$ follows from the fact that for any vector $x,y\in \Rb^T$, $\norm{x+y}_2\leq \norm{x}_2+\norm{y}_2$.

In the sequel, we first upper-bound the terms $(a)$ and $(b)$, and then combine the upper bounds with \Cref{ineq: division into a and b} as our final step to obtain the desired result.

{\bf I) Bound term $(a)$.}

Denote the trace operator as $\mathrm{tr}(\cdot)$. We first obtain
\begin{align}
& nK\mathop{\Eb}_{s_{h}\sim (P^\star, {\Pi}_t^n)\atop a_h\sim \Uc(\Ac)}\left[\hb_{h}^{(n,t)}(s_{h},a_{h})\right] \nonumber \\ 
& \quad \overset{(\romannumeral1)}{\leq} \frac{nK\beta_2^2}{\beta_1^2}\mathop{\Eb}_{s_{h}\sim (P^\star, {\Pi}_t^n)\atop a_h\sim \Uc(\Ac)}\left[\alpha_n^2\left\|\hphi_{h}^{(n)}(s_{h},a_{h})\right\|_{({U}^{(n,t)}_{h,\hphi^{(n,t)}})^{-1}}^2\right]\nonumber\\
& \quad \overset{\RM{2}}{\leq} \frac{K\beta_2^2\alpha_n^2}{\beta_1^2}  {\rm tr}(I_d) =K\beta_2^2(2nK\zeta_n+T\lambda_n d)d=\frac{Kd\beta_2^2\alpha_n^2}{\beta_1^2},  \label{ineq: prop1_trace}
\end{align}
where $(\romannumeral1)$ follows from \Cref{coro: concentration of the bonus term} and $\RM{2}$ follows from the following derivation:
\begin{align*}
 &n\mathop{\Eb}_{s_{h}\sim (P^\star, {\Pi}_t^n)\atop a_h\sim \Uc(\Ac)}\left[\left\|\hphi_{h}^{(n)}(s_{h},a_{h})\right\|_{({U}^{(n,t)}_{h,\hphi^{(n,t)}})^{-1}}^2\right]\\
 &\quad= n\mathop{\Eb}_{s_{h}\sim (P^\star, {\Pi}_t^n)\atop a_h\sim \Uc(\Ac)}\left[\mathrm{tr}\left(\hphi_h^{(n,t)}(s_h,a_h)\hphi_h^{(n,t)}(s_h,a_h)^\top ({U}^{(n,t)}_{h,\hphi^{(n,t)}})^{-1}\right)\right]\\
 &\scriptstyle\quad=\mathrm{tr}\left(\mathop{\Eb}_{s_{h}\sim (P^\star, {\Pi}_t^n)\atop a_h\sim \Uc(\Ac)}\left[n\hphi_h^{(n,t)}(s_h,a_h)\hphi_h^{(n,t)}(s_h,a_h)^\top\right] (n\Eb_{s_h\sim(P^\star,\Pi_t^n),a_h\sim \Uc(\Ac)}\left[\phi(s_h,a_h)(\phi(s_h,a_h))^\top\right] + \lambda_n I_d)^{-1}\right)\\
 &\quad \leq \mathrm{tr}(I_d).
\end{align*}

Following from \Cref{lemma:Step_Back}, for $h \geq 2$, we have
\begin{align} 
 		&\mathop{\Eb}_{s_{h} \sim (P^{(\star,t)}, \pi_t^n) \atop a_{h}\sim\pi_t^n} \left[\hb_{h}^{(n,t)}(s_{h},a_{h})\bigg|s_{h-1},a_{h-1}\right] \nonumber \\
 		&\quad\leq \left\|\phi_{h-1}^\star(s_{h-1},a_{h-1})\right\|_{(W_{h-1,\sphi}^{(n,t)})^{-1}}
 		\sqrt{nK\mathop{\Eb}_{s_{h}\sim (P^\star, {\Pi}_t^n)\atop a_h\sim \Uc(\Ac)}\left[\hb_{h}^{(n,t)}(s_{h},a_{h})\right]+\lambda_nd B^2 } \nonumber  \\
 		&\quad \leq \left\|\phi_{h-1}^\star(s_{h-1},a_{h-1})\right\|_{(W_{h-1,\sphi}^{(n,t)})^{-1}}
 		\sqrt{\frac{Kd\beta_2^2\alpha_n^2}{\beta_1^2}+\lambda_ndB^2 } \label{ineq: hatb 1}.
\end{align}
where the last inequality follows from \Cref{ineq: prop1_trace}.

Furthermore, for $h=1$, we have
\begin{align}
 		\mathop{\Eb}_{s_{1} \sim (P^{(\star,t)}, \pi_t^n) \atop a_{1}\sim\pi_t^n} \left[\hb_{1}^{(n,t)}(s_{1},a_{1})\right] 
 		 &\overset{\RM{1}}{=}\mathop{\Eb}_{ a_{1}\sim\pi_t^n} \left[\hb_{1}^{(n,t)}(s_{1},a_{1})\right]
 		\overset{\RM{2}}{\leq} \sqrt{\mathop{\Eb}_{ a_{1}\sim\pi_t^n} \left[\hb_{1}^{(n,t)}(s_{1},a_{1})^2\right]}\nonumber\\
 		& \overset{\RM{3}}{\leq} \sqrt{K\mathop{\Eb}_{a_1\sim \Uc(\Ac)}\left[\left\|\hphi_{1}^{(n)}(s_{1},a_{1})\right\|_{(\widehat{U}^{(n,t)}_{1,\hphi^{(n,t)}})^{-1}}^2\alpha_n^2\right]} \nonumber \\
 		&\overset{\RM{4}}{\leq} \sqrt{\frac{K\beta_2^2\alpha_n^2d}{n\beta_1^2}} \label{ineq: hatb 2},
\end{align}
where $\RM{1}$ follows from the fact that the initial state $s_1$ is fixed, $\RM{2}$ follows from Cauchy Schwarz inequality and Jensen's inequality, $\RM{3}$ follows from importance sampling, and $\RM{4}$ follows from a step similar to  \Cref{ineq: prop1_trace}.
 		
Substituting \Cref{ineq: hatb 1} and \Cref{ineq: hatb 2} into the term $(a)$, we obtain
\begin{align}
 		&\scriptstyle \sum_{n=1}^{N_u}\sum_{h=1}^{H-1} \sqrt{\sum_{t=1}^T\left\{\mathop{\Eb}_{s_{h} \sim (P^{(\star,t)}, \pi_t^n) \atop a_{h}\sim\pi_t^n} \left[\hb_{h}^{(n,t)}(s_{h},a_{h})\right]\right\}^2}\nonumber\\
 		& \scriptstyle \quad \leq \sum_{n=1}^{N_u}\left\{\sum_{h=2}^{H-1}
 		 \sqrt{\sum_{t=1}^T\left\{\mathop{\Eb}_{s_{h} \sim (P^{(\star,t)}, \pi_t^n) \atop a_{h}\sim\pi_t^n} \left[\left\|\phi_{h-1}^\star(s_{h-1},a_{h-1})\right\|_{(W_{h-1,\sphi}^{(n,t)})^{-1}}
 		\sqrt{\frac{Kd\beta_2^2\alpha_n^2}{\beta_1^2}+\lambda_ndB^2}\right]\right\}^2}\right.\nonumber\\
 		&\qquad \quad\left.+\sqrt{\frac{TK\beta_2^2\alpha_n^2d}{n\beta_1^2}}\right\}\nonumber\\
 		& \quad \scriptstyle \overset{(\romannumeral1)}{\leq} \frac{\beta_2}{\beta_1}\left\{\sqrt{K\alpha_{N_u}^2  d+\lambda_{N_u} d B^2}\sum_{n=1}^{N_u}\sum_{h=2}^{H-1} \sqrt{\sum_{t=1}^T\mathop{\Eb}_{s_{h} \sim (P^{(\star,t)}, \pi_t^n) \atop a_{h}\sim\pi_t^n} \left[\left\|\phi_{h-1}^\star(s_{h-1},a_{h-1})\right\|_{(W_{h-1,\sphi}^{(n,t)})^{-1}}^2\right]} \right.\nonumber\\ 
 		&\qquad  \quad \left.+2\sqrt{{N_u}TK\alpha_{N_u}^2d}\right\}\nonumber\\
 		& \quad \scriptstyle\overset{\RM{2}}{\leq} \frac{\beta_2}{\beta_1}\left\{\sqrt{K\alpha_{N_u}^2   d+\lambda_{N_u} dB^2 }\sum_{h=2}^{H-1} \sqrt{{N_u}\sum_{t=1}^T\sum_{n=1}^{N_u}\mathop{\Eb}_{s_{h} \sim (P^{(\star,t)}, \pi_t^n) \atop a_{h}\sim\pi_t^n} \left[\left\|\phi_{h-1}^\star(s_{h-1},a_{h-1})\right\|_{(W_{h-1,\sphi}^{(n,t)})^{-1}}^2\right]}\right.\nonumber\\ 
 		& \qquad \quad \left.+2\sqrt{{N_u}TK\alpha_{N_u}^2d}\right\}\nonumber\\
 		& \quad \overset{\RM{3}}{\leq} \frac{\beta_2}{\beta_1}\left\{ \sqrt{K\alpha_{N_u}^2   d+\lambda_{N_u} dB^2 }\sum_{h=2}^{H-1} \sqrt{{N_u}\sum_{t=1}^Td\log\left(1+\frac{{N_u}}{d\lambda_1}\right)}+2\sqrt{{N_u}TK\alpha_{N_u}^2d}\right\}\nonumber\\
 		& \quad \leq \frac{2\beta_2}{\beta_1} \sqrt{K\alpha_{N_u}^2   d+\lambda_{N_u} dB^2 }H \sqrt{{N_u}Td\log\left(1+\frac{{N_u}}{d\lambda_1}\right)}, \label{ineq: term a final bound}
\end{align}
where $(\romannumeral1)$ follows from the fact that $\frac{\beta_2}{\beta_1} \geq 1$ (see \Cref{coro: concentration of the bonus term}), $\alpha_{N_u} \geq \alpha_{N_u-1}\geq \ldots \geq \alpha_1$ {and $\sum_{n=1}^{N_u}1/\sqrt{n}\leq 1+\int_{1}^{N_u}1/\sqrt{x}\dif x \leq 2\sqrt{N_u}$}, {$\RM{2}$ follows from Cauchy-Schwarz inequality}, and $\RM{3}$ follows from \Cref{lemma: Elliptical_potential}.
 		
{\bf II) Bound term $(b)$.}

We proceed the derivation as follows:
\begin{align}
 		& \sqrt{\sum_{t=1}^T\left\{\mathop{\Eb}_{s_{h} \sim (P^{(\star,t)}, \pi_t^n) \atop a_{h}\sim\pi_t^n} \left[\hb_{h}^{(n,t)}(s_{h},a_{h})\right]-\mathop{\Eb}_{s_{h} \sim (\hP^{(n,t)}, \pi_t^n) \atop a_{h}\sim\pi_t^n} \left[\hb_{h}^{(n,t)}(s_{h},a_{h})\right]\right\}^2}\nonumber\\
 		& \overset{\RM{1}}{\leq} \sqrt{\sum_{t=1}^T\left\{ \sum_{h^\prime=1}^{h}B\mathop{\Eb}_{s_{\ph} \sim (P^{(\star,t)}, \pi_t^n) \atop a_{\ph}\sim\pi_t^n}\left[ {f_\ph^{(n,t)}(s_\ph,a_\ph)}\right]\right\}^2}  \nonumber \\
 		& \overset{\RM{2}}{\leq} \sqrt{B^2\sum_{t=1}^T h \sum_{h^\prime=1}^{h}\left\{\mathop{\Eb}_{s_{\ph} \sim (P^{(\star,t)}, \pi_t^n) \atop a_{\ph}\sim\pi_t^n}\left[ {f_\ph^{(n,t)}(s_\ph,a_\ph)}\right]\right\}^2}\nonumber\\
 		&  \leq B\sqrt{\sum_{t=1}^T H\left\{ \sum_{h^\prime=2}^{H}\left\{\mathop{\Eb}_{s_{\ph} \sim (P^{(\star,t)}, \pi_t^n) \atop a_{\ph}\sim\pi_t^n}\left[ {f_\ph^{(n,t)}(s_\ph,a_\ph)}\right]\right\}^2+\left[\mathop{\Eb}_{ a_{1}\sim\pi_t^n} {f_1^{(n,t)}(s_1,a_1)}\right]^2\right\}}\nonumber\\
 		&  \scriptstyle \overset{(\romannumeral3)}{\leq} B\hspace{-0.03in}\sqrt{\sum_{t=1}^T\hspace{-0.03in}H \sum_{h^\prime=2}^{H}\left\{\mathop{\Eb}_{s_{\ph-1} \sim (P^{(\star,t)}, \pi_t^n) \atop a_{\ph-1}\sim\pi_t^n}\hspace{-0.03in} \left[\left\|\sphi_{\ph-1}(s_{\ph-1},a_{\ph-1})\right\|_{(U^{(n,t)}_{\ph-1,\sphi})^{-1}}\hspace{-0.02in}\sqrt{nK\zeta_h^{(n,t)}\hspace{-0.03in}+\lambda_n d}\right]\right\}^2+KHT\zeta_n}\nonumber\\
 		& \leq B\sqrt{H\sum_{h=1}^{H-1}\sum_{t=1}^T\left\{\mathop{\Eb}_{s_{h} \sim (P^{(\star,t)}, \pi_t^n) \atop a_{h}\sim\pi_t^n} \left[\left\|\sphi_{h}(s_{h},a_{h})\right\|_{(U^{(n,t)}_{h,\sphi})^{-1}}\right]\right\}^2(nK\zeta_h^{(n,t)}+\lambda_n d)+KHT\zeta_n}\label{ineq: part b},
\end{align}
where $\RM{1}$ follows from \Cref{lemma: Simulation} with a sparse reward $r_{h'}(\cdot,\cdot)=\widehat{b}_\ph^{(n,t)}(\cdot,\cdot)\mathbf{1}\{h'=h\}$, where $\mathbf{1}\{\cdot\}$ is the indicator function, $\RM{2}$ follows because $(\sum_{\ph=1}^h x_\ph)^2 \leq h(\sum_{\ph=1}^h x_\ph^2)$, and $(\romannumeral3)$ follows from \Cref{lemma:Step_Back}, {importance sampling and because $\zeta_1^{(n,t)}\leq \zeta_n$ (see \Cref{lemma: multitask MLE Guarantee})}.
 		
We further substitute \Cref{ineq: part b} into the term $(b)$ and obtain
\begin{align}
 		& \scriptstyle \sum_{n=1}^{N_u}\sum_{h=1}^{H-1}B \sqrt{H\sum_{h=1}^{H-1}\sum_{t=1}^T\left\{\mathop{\Eb}_{s_{h} \sim (P^{(\star,t)}, \pi_t^n) \atop a_{h}\sim\pi_t^n} \left[\left\|\sphi_{h}(s_{h},a_{h})\right\|_{(U^{(n,t)}_{h,\sphi})^{-1}}\right]\right\}^2(nK\zeta_h^{(n,t)}+\lambda_n d)+KHT\zeta_n}\nonumber\\
 		& \quad \scriptstyle \overset{(\romannumeral1)}{\leq}\sum_{h=1}^{H-1}B \sqrt{{N_u}H({N_u}K\zeta_{N_u}+\lambda_{N_u} d)\sum_{n=1}^{N_u}\sum_{h=1}^{H-1}\sum_{t=1}^T\left\{\mathop{\Eb}_{s_{h} \sim (P^{(\star,t)}, \pi_t^n) \atop a_{h}\sim\pi_t^n} \left[\left\|\sphi_{h}(s_{h},a_{h})\right\|_{(U^{(n,t)}_{h,\sphi})^{-1}}\right]\right\}^2}\nonumber\\
 		 &\qquad  \quad +\sum_{h=1}^{H-1}B\sqrt{KHTN_u\sum_{n=1}^{N_u}\zeta_n}\nonumber\\
 		& \quad \scriptstyle \overset{(\romannumeral2)}{\leq} \sum_{h=1}^{H-1} B\sqrt{{N_u}H({N_u}K\zeta_{N_u}+\lambda_{N_u} d)K\sum_{h=1}^{H-1}\sum_{t=1}^T\sum_{n=1}^{N_u}\mathop{\Eb}_{s_{h} \sim (P^{(\star,t)}, \pi_t^n) \atop a_{h}\sim \Uc(\Ac)} \left[\left\|\sphi_{h}(s_{h},a_{h})\right\|_{(U^{(n,t)}_{h,\sphi})^{-1}}^2\right]}\nonumber\\
 		&\qquad  \quad +\sqrt{{N_u}KHT}HBN_u\zeta_{N_u}\nonumber\\
 		& \quad \overset{\RM{3}}{\leq} HB \sqrt{{N_u}H^2T({N_u}K\zeta_{N_u}+\lambda_{N_u} d)Kd\log\left(1+\frac{{N_u}}{d\lambda_1}\right)}+\sqrt{{N_u}KHT}HBN_u\zeta_{N_u},\nonumber\\
        \label{ineq: term b final bound}
\end{align}
where $\RM{1}$ follows from Cauchy-Schwarz inequality, and because $\sqrt{x+y}\leq\sqrt{x}+\sqrt{y}$ for $x,y\geq 0$, {and $nK\zeta_h^{(n,t)}+\lambda_n d \leq nK\zeta_n+\lambda_n d$, where the latter bound is increasing in $n$}, $\RM{2}$ follows because $\sum_{n=1}^{N_u}\zeta_n\leq \log^2\left(2|\Phi||\Psi|^T N_uH/\delta\right)\leq N_u^2\zeta_{N_u}^2$, 
and $\RM{3}$ follows from \Cref{lemma: Elliptical_potential} and importance sampling.

{\bf III) Final step.}
 		
We substitute \Cref{ineq: term a final bound} and \Cref{ineq: term b final bound} into \Cref{ineq: division into a and b}, and have
\begin{align}
 		\sum_{n=1}^{N_u}&  \left\{PCV\left(\hP^{(n,t)},\hb_{h}^{(n,t)},\pi_t^n;T\right)+\sqrt{KT\zeta_n} \right\}\nonumber\\
 		& \leq \sqrt{{N_u}KHT}HBN_u\zeta_{N_u}+\frac{2\beta_2}{\beta_1} \sqrt{K\alpha_{N_u}^2   d+\lambda_{N_u} dB^2 }H \sqrt{{N_u}Td\log\left(1+\frac{{N_u}}{d\lambda_1}\right)}\nonumber
 		\\
 		& \qquad + B \sqrt{{N_u}H^4T({N_u}K\zeta_{N_u}+\lambda_{N_u} d)Kd\log\left(1+\frac{{N_u}}{d\lambda_1}\right)}
 		\label{ineq: summation bound 1}.
 		\end{align}
 		Then, we substitute the definitions of $\zeta_{N_u}$, $\alpha_{N_u}$ and $\lambda_{N_u}$ into \Cref{ineq: summation bound 1} and simplify the expression by taking only dominating terms as follows:
 		\begin{align}
 		&\sum_{n=1}^{N_u} \left\{PCV\left(\hP^{(n,t)},\hb_{h}^{(n,t)},\pi_t^n;T\right)+\sqrt{KT\zeta_n} \right\}\nonumber\\
 		& \leq  2\sqrt{N_uKHT}HB{\log\left(2|\Phi||\Psi|^T N_uH/\delta\right)}\nonumber\\
 		&\quad \scriptstyle+\frac{2\beta_2}{\beta_1} \sqrt{Kd(4K\log\left(2|\Phi||\Psi|^T N_uH/\delta\right)+d^2T\log(2N_uTH|\Phi|/\delta))   +d^2\log(2N_uTH|\Phi|/\delta)B^2 }H \sqrt{{N_u}Td\log\left(1+\frac{{N_u}}{d\lambda_1}\right)}
 		\nonumber\\
 		&\quad + B \sqrt{{N_u}H^4T(2K{\log\left(2|\Phi||\Psi|^T N_uH/\delta\right)}+d^2\log(2N_uTH|\Phi|/\delta))Kd\log\left(1+\frac{{N_u}}{d\lambda_1}\right)}\nonumber\\
 		&\leq \scriptstyle \frac{4\beta_2}{\beta_1}H\sqrt{{N_u}Td}\left( \sqrt{K^2d\log^2\left(|\Phi||\Psi|^T N_uH/\delta\right)}+\sqrt{Kd^3T\log^2(2N_uTH|\Phi|/\delta)}+\sqrt{d^2B^2\log^2(N_uTH|\Phi|/\delta)} \right)\nonumber\\
 		&\quad+ 6BH^2\sqrt{{N_u}TKd}\left( \sqrt{K{\log^2\left(|\Phi||\Psi|^T N_uH/\delta\right)}}+\sqrt{d^2\log^2(N_uTH|\Phi|/\delta)}\right)\nonumber.
\end{align}
\end{proof}

\subsection{Complexity characterization: Proof of \Cref{theorem: Upstream sample complexity}}\label{app:complexity}

Next, {equipped with \Cref{prop1: Bounded TV,prop2: total value difference,prop3: Bound of summation of exploration-driven reward function} and \Cref{coro1:bounded difference of summation}}
, we are able to derive the sample complexity bound of \Cref{alg: Upstream}. 

 		We prove \Cref{theorem: Upstream sample complexity} by contradiction. First for any $n$ and policy $\pi_t$, we have 
 		\begin{align}
 		\sum_{t=1}^{T}&\mathop{\Eb}_{s_h\sim (P^{(\star,t)},\pi_t)\atop a_h\sim \pi_t}  \left[f_h^{(n,t)}(s_h,a_h)\right]\nonumber\\
 		&=\sum_{t=1}^{T} \left(\mathop{\Eb}_{s_h\sim (\hP^{(n,t)},\pi_t)\atop a_h\sim \pi_t}  \left[f_h^{(n,t)}(s_h,a_h)\right]-\mathop{\Eb}_{s_h\sim (P^{(\star,t)},\pi_t)\atop a_h\sim \pi_t}  \left[f_h^{(n,t)}(s_h,a_h)\right]\right)\nonumber\\
 		&\qquad +\sum_{t=1}^{T}\mathop{\Eb}_{s_h\sim (\hP^{(n,t)},\pi_t)\atop a_h\sim \pi_t }  \left[f_h^{(n,t)}(s_h,a_h)\right]\nonumber\\
 		& \overset{\RM{1}}{\leq} 2\left\{PCV\left(\hP^{(n,t)},\hb_{h}^{(n,t)},\pi_t;T\right)+\sqrt{KT\zeta_n}\right\}\nonumber\\
 		& \overset{\RM{2}}{\leq} 2\left\{PCV\left(\hP^{(n,t)},\hb_{h}^{(n,t)},\pi_t^n;T\right)+\sqrt{KT\zeta_n}\right\}\label{ineq: theorem 1.1},
 		\end{align}
 		where $\RM{1}$ follows from \Cref{prop2: total value difference} and $\RM{2}$ follows from the definition of $\{\pi_t^n\}_{t \in[T]}$ (see \Cref{eqn:step3:++1}).
 		
If for any $n \in [N_u]$, $T\epsilon_u < 2\left\{PCV\left(\hP^{(n,t)},\hb_{h}^{(n,t)},\pi_t^n;T\right)+\sqrt{KT\zeta_n}\right\}$, which is exactly the termination criteria in \Cref{alg: Upstream}, then
 		\begin{align}
 		N_u&T\epsilon_u  \nonumber \\
 		& < \sum_{n=1}^{N_u}2\left\{PCV\left(\hP^{(n,t)},\hb_{h}^{(n,t)},\pi_t^n;T\right)+\sqrt{KT\zeta_n}\right\}\nonumber\\
 		&\leq \scriptstyle \frac{4\beta_2}{\beta_1}H\sqrt{{N_u}Td}\left( \sqrt{K^2d\log^2\left(|\Phi||\Psi|^T N_uH/\delta\right)}+\sqrt{Kd^3T\log^2(2N_uTH|\Phi|/\delta)}+\sqrt{d^2B^2\log^2(N_uTH|\Phi|/\delta)} \right)\nonumber\\
 		&\quad + 6BH^2\sqrt{{N_u}TKd}\left( \sqrt{K{\log^2\left(|\Phi||\Psi|^T N_uH/\delta\right)}}+\sqrt{d^2\log^2(N_uTH|\Phi|/\delta)}\right), \label{ineq: contradiction}
 		\end{align}
 		where the last inequality follows from \Cref{prop3: Bound of summation of exploration-driven reward function} and \Cref{ineq: theorem 1.1}.
 		
 		Note that we assume $\delta$ is small enough satisfying $\delta \leq  {|\Psi|^{-\frac{\min\{T,K\}}{d^2}}}$. If 
 		\begin{align*}
 			N_u>\frac{400\beta_2^2H^2Td^2K^2\log^2\left(400\beta_2^2H^2d^2K^2|\Phi| H/(\beta_1^2\delta^2\epsilon_u^2)\right)}{T\beta_1^2\epsilon_u^2},
 		\end{align*} 
 		then use the fact that for $c \geq e^2, n \geq 1, \alpha  \in \Rb^+$, if
 		$n \geq 2c \log (\alpha c)$, then $n \geq c\log (\alpha n)$,
 		we have 
 		\begin{align*}
 			\frac{4\beta_2}{\beta_1} {H\sqrt{{N_u}Td} \sqrt{K^2d\log^2\left(|\Phi||\Psi|^T N_uH/\delta\right)}} \leq \frac{\epsilon_uT N_u}{5},
 		\end{align*}
 		which is exactly the first term in \Cref{ineq: contradiction}. Similarly, we are able to upper bound each of the other four terms by $\frac{\epsilon_u N_u}{5}$ in \Cref{ineq: contradiction} with the iteration number $N_u$ being at most:
 		\begin{align*}
 		 \widetilde{O}\left(\frac{H^2d^2K^2}{T\epsilon_u^2}+\frac{\left(H^2d^4K+H^4dK^2+H^4d^3K\right)}{\epsilon_u^2}+\frac{H^4K^3}{dT\epsilon_u^2}\right).
 		\end{align*}
 		Combining the above bound with \Cref{ineq: contradiction}, we have
 		\begin{align*}
 		    N_uT\epsilon_u < 
 		    5 \times \frac{N_uT\epsilon_u}{5}=N_uT\epsilon_u,
 		\end{align*}
 		which leads to a contradiction and shows that \Cref{alg: Upstream} is able to terminate at a certain iteration $n_u$ and output desired models with the  number $HN_u$ of trajectories being at most:
 		\begin{align*}
 		 \widetilde{O}\left(\frac{H^3d^2K^2}{T\epsilon_u^2}+\frac{\left(H^3d^4K+H^5dK^2+H^5d^3K\right)}{\epsilon_u^2}+\frac{H^5K^3}{dT\epsilon_u^2}\right).
 		\end{align*}
 		
 		Furthermore, let $\pi_t^\star$ be the optimal policy under $\Mc^t$ given reward the $r^t$. And form \Cref{alg: Upstream}, the algorithm terminates at iteration $n_u$ and outputs $\hP^{(t)}$ for $t \in [T]$. Then we have
 	\begin{align*}
 		 &\sum_{t=1}^T V_{P^{(\star,t)},r^t}^\star-V_{P^{(\star,t)},r^t}^{\widehat{\pi}_t}\\
 		 & \quad = \sum_{t=1}^T V_{P^{(\star,t)},r^t}^\star-V_{\hP^{(t)},r^t}^{\pi_t^\star}+V_{\hP^{(t)},r^t}^{\pi_t^\star}-V_{\hP^{(t)},r^t}^{\widehat{\pi}_t}+V_{\hP^{(t)},r^t}^{\widehat{\pi}_t}-V_{P^{(\star,t)},r^t}^{\widehat{\pi}_t}\\
 		  &\quad \overset{\RM{1}}{\leq} \sum_{h=1}^{H-1}\sqrt{\sum_{t=1}^T\mathop{\Eb}_{s_{h} \sim (\hP^{(t)}, \pi_t^\star) \atop a_{h}\sim\pi_t^\star} \left[\hb_{h}^{(n_u,t)}(s_h,a_h)\right]^2}+\sqrt{KT\zeta_{{n_u}}}\\
 		  & \qquad \quad +\sum_{h=1}^{H-1}\sqrt{\sum_{t=1}^T\mathop{\Eb}_{s_{h} \sim (\hP^{(t)}, \widehat{\pi}_t) \atop a_{h}\sim\widehat{\pi}_t} \left[\hb_{h}^{(n_u,t)}(s_h,a_h)\right]^2}+\sqrt{KT\zeta_{{n_u}}}\\
 		  &\quad \leq 2\left\{PCV\left(\hP^{(t)},\hb_{h}^{(n_u,t)},\pi_t^{n_u};T\right)
 		  +\sqrt{KT\zeta_{{n_u}}}\right\}\\
 		  & \quad \overset{\RM{2}}{\leq} T\epsilon_u,
 	\end{align*}
 	where $\RM{1}$ follows from the definition of $\{\pi_t^\star\}_{t \in[T]}$ and \Cref{prop2: total value difference}, and $\RM{2}$ follows from the termination criteria of \Cref{alg: Upstream}.
 	
 	\subsection{Supporting Lemmas}\label{app:supportinglemma}
 	Recall $U_{h,\phi}^{(n,t)} = n\Eb_{s_h\sim(P^\star,\Pi_t^n),a_h\sim \Uc(\Ac)}\left[\phi(s_h,a_h)(\phi(s_h,a_h))^\top\right] + \lambda_n I$. Then $U_{h,\hphi^{(n)}}^{(n,t)}$ is the counterpart of $\widehat{U}_h^{(n,t)}$ in expectation. The following lemma provides the concentration of the bonus term. See Lemma 39 in \citet{zanette2020learning} for the version of fixed $\phi$ and Lemma 11 in \citet{uehara2021representation}.
 	\begin{lemma}\label{lemma: concentration of the bonus term}
 		({\rm Concentration of the bonus term}). Fix $\delta \in (0,1)$, and set $\lambda_n = \Theta(d\log(2nTH|\Phi|/\delta))$ for any $n$.
 		With probability at least $1-\delta/2$, we have that 
 		$\forall n \in \mathbb{N}^+, h \in [H], t \in [T], \hphi \in \Phi, $                                            
 		\begin{equation*}
 		\beta_1 \left\|\hphi_{h}^{(n)}(s,a)\right\|_{(U_{h,\hphi}^{(n,t)})^{-1}} \leq \left\|\hphi_{h}^{(n)}(s,a)\right\|_{(\widehat{U}_{h}^{(n,t)})^{-1}} \leq \beta_2 \left\|\hphi_{h}^{(n)}(s,a)\right\|_{(U_{h,\hphi}^{(n,t)})^{-1}}.
 		\end{equation*}
 	\end{lemma}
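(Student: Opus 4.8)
The plan is to establish a two-sided \emph{multiplicative} sandwich between the empirical Gram matrix and its population (predictable) counterpart, and then invert to get the quadratic-form bounds. For a fixed $\phi\in\Phi$, step $h$ and task $t$, let $\widehat U_{h,\phi}^{(n,t)}=\sum_{\tau=1}^{n}\phi(s_h^{(\tau,t,h+1)},a_h^{(\tau,t,h+1)})\phi(s_h^{(\tau,t,h+1)},a_h^{(\tau,t,h+1)})^{\top}+\lambda_n I_d$ be the fixed-feature covariance, so that $\widehat U_h^{(n,t)}=\widehat U_{h,\hphi^{(n)}}^{(n,t)}$. The central observation is that these covariance samples come from the \emph{fresh} $(h+1)$-th episode of each iteration $\tau$, in which $\pi_t^{\tau-1}$ (which is measurable with respect to the history $\mathcal F_{\tau-1}$) is rolled in and a uniform action is taken at step $h$. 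Hence, conditioned on $\mathcal F_{\tau-1}$, the summand $\phi\phi^{\top}$ has conditional mean $\Eb_{s_h\sim(P^{(\star,t)},\pi_t^{\tau-1}),\,a_h\sim\Uc(\Ac)}[\phi\phi^{\top}]$, and summing over $\tau$ recovers, by the definition of the mixture $\Pi_t^n$, exactly $U_{h,\phi}^{(n,t)}-\lambda_n I_d=\sum_{\tau=1}^{n}\Eb[\phi\phi^{\top}\mid\mathcal F_{\tau-1}]$. Thus $U_{h,\phi}^{(n,t)}$ is precisely the predictable quadratic variation (plus regularizer) of the matrix martingale behind $\widehat U_{h,\phi}^{(n,t)}$.

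First I would fix $\phi,h,t$ and apply a matrix Freedman/Bernstein inequality for adapted sequences of bounded PSD matrices (note $\norm{\phi\phi^{\top}}\leq\norm{\phi}_2^2\leq 1$), adapting the fixed-feature argument of \citet{zanette2020learning,uehara2021representation} cited in the statement. This controls the deviation $\sum_{\tau}\bigl(\phi\phi^{\top}-\Eb[\phi\phi^{\top}\mid\mathcal F_{\tau-1}]\bigr)$, and the scaling $\lambda_n=\Theta(d\log(2nTH|\Phi|/\delta))$ is chosen precisely so that the additive fluctuation is dominated by $\tfrac12\lambda_n I_d\preceq\tfrac12 U_{h,\phi}^{(n,t)}$, upgrading the additive bound to the anytime multiplicative sandwich $\tfrac12 U_{h,\phi}^{(n,t)}\preceq\widehat U_{h,\phi}^{(n,t)}\preceq\tfrac32 U_{h,\phi}^{(n,t)}$ for all $n$ at once (via a peeling/stopping-time argument in $n$). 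I would then union bound over the finite class $\Phi$ and over $h\in[H]$, $t\in[T]$; since $|\Phi|,H,T$ all appear inside the logarithm in $\lambda_n$, a single choice of $\lambda_n$ absorbs every union-bound cost and the sandwich holds uniformly with probability at least $1-\delta/2$. Because the learned representation satisfies $\hphi_h^{(n)}\in\Phi$ by \Cref{assumption: realizability}, the sandwich applies in particular to $\widehat U_h^{(n,t)}=\widehat U_{h,\hphi^{(n)}}^{(n,t)}$.

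Finally I would pass to inverses: since $A\preceq B\preceq C$ implies $C^{-1}\preceq B^{-1}\preceq A^{-1}$, the sandwich yields $\tfrac23 (U_{h,\hphi}^{(n,t)})^{-1}\preceq(\widehat U_h^{(n,t)})^{-1}\preceq 2\,(U_{h,\hphi}^{(n,t)})^{-1}$. Evaluating the quadratic form at $v=\hphi_h^{(n)}(s,a)$ and taking square roots gives $\sqrt{2/3}\,\norm{\hphi_h^{(n)}(s,a)}_{(U_{h,\hphi}^{(n,t)})^{-1}}\leq\norm{\hphi_h^{(n)}(s,a)}_{(\widehat U_h^{(n,t)})^{-1}}\leq\sqrt{2}\,\norm{\hphi_h^{(n)}(s,a)}_{(U_{h,\hphi}^{(n,t)})^{-1}}$, i.e.\ the claim with $\beta_1=\sqrt{2/3}$ and $\beta_2=\sqrt{2}$, so that $\beta_2/\beta_1=\sqrt{3}\geq1$, consistent with its later use.

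The hardest part will be the adaptivity: because the exploration policies $\pi_t^{\tau-1}$ are learned from past data, the feature outer-products are not i.i.d., so a vanilla matrix Chernoff bound does not apply. One must use the martingale version in which the comparison object is the predictable quadratic variation---exactly the population matrix $U_{h,\phi}^{(n,t)}$---and carefully arrange the regularizer so that the additive concentration error is converted into a two-sided multiplicative bound valid for all $n$ simultaneously. This is precisely the step that forces the scaling $\lambda_n=\Theta(d\log(2nTH|\Phi|/\delta))$.
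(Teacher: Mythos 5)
The paper does not actually prove this lemma: it is imported by citation from Lemma~39 of \citet{zanette2020learning} and Lemma~11 of \citet{uehara2021representation}, so there is no in-paper argument to compare against. Your sketch reconstructs essentially the standard argument behind those results, and the overall architecture is right: identify $U_{h,\phi}^{(n,t)}$ as the regularizer plus the predictable quadratic variation of the adapted sum defining $\widehat{U}_{h,\phi}^{(n,t)}$ (using that at iteration $\tau$ the covariance sample at step $h$ comes from rolling in $\pi_t^{\tau-1}$, which is $\mathcal{F}_{\tau-1}$-measurable, followed by a uniform action), establish a two-sided Loewner sandwich uniformly over the finite class $\Phi$ and over $n,h,t$, apply it to $\hphi_h^{(n)}\in\Phi$, and invert to get the quadratic-form bounds with $\beta_1=\sqrt{2/3}$, $\beta_2=\sqrt{2}$; handling the data-dependence of $\hphi^{(n)}$ by a union bound over $\Phi$ is exactly the right move and is why $|\Phi|$ sits inside $\lambda_n$.

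The one step that needs repair is the claim that ``the additive fluctuation is dominated by $\tfrac12\lambda_n I_d$.'' Taken literally this fails: writing $W=U_{h,\phi}^{(n,t)}-\lambda_n I_d$, a matrix Freedman bound on $\|\widehat{U}_{h,\phi}^{(n,t)}-U_{h,\phi}^{(n,t)}\|_{\mathrm{op}}$ scales like $\sqrt{\|W\|_{\mathrm{op}}\log(d/\delta)}$, which grows like $\sqrt{n}$ and cannot be absorbed by $\lambda_n=\Theta(d\log(\cdot))$. The deviation must instead be split into a part proportional to $W$ itself and a genuinely additive remainder: for each unit vector $u$, scalar Freedman with variance proxy $\sum_{\tau}\Eb[(u^\top\phi)^4\mid\mathcal{F}_{\tau-1}]\leq u^\top W u$ (using $|u^\top\phi|\leq 1$) gives $|u^\top(\widehat{U}-U)u|\leq\sqrt{2\,u^\top W u\,\iota}+\tfrac{2}{3}\iota\leq\tfrac12 u^\top W u+O(\iota)$ by AM--GM, where $\iota$ is the relevant log factor; a constant-resolution net of the unit sphere (costing $O(d)$ inside the logarithm, which is precisely where the factor $d$ in $\lambda_n$ comes from) upgrades this to the Loewner inequality $-\tfrac12 W-O(d\iota)I\preceq\widehat{U}-U\preceq\tfrac12 W+O(d\iota)I$, and only the residual $O(d\iota)$ term is the one absorbed by $\lambda_n/2$. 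With that correction, your anytime union bound over $n$, the union over $h,t,\Phi$, and the inversion step all go through as you describe, and $\beta_2/\beta_1=\sqrt{3}\geq 1$ is consistent with the use of the lemma in \Cref{coro: concentration of the bonus term}.
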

 	Since $\hb_h^{(n,t)}(s_h,a_h)=\min\left\{{\tap}_n\left\|\hphi_{h}^{(n)}(s,a)\right\|_{(\widehat{U}_{h}^{(n,t)})^{-1}},B\right\}$. {Setting} $\tap_n=\frac{\alpha_n}{{\beta_1}}$ and applying \Cref{lemma: concentration of the bonus term}, we can immediately obtain the following corollary.
 	\begin{corollary} \label{coro: concentration of the bonus term}
 		Fix $\delta  \in (0,1)$, under the same setting of \Cref{lemma: concentration of the bonus term}, with probability at least $1-\delta/2$, we have that $\forall n \in \mathbb{N}^+, h \in [H], \phi \in \Phi,$
 		\begin{equation*}
 		\min\left\{\alpha_n \left\|\hphi_{h}^{(n)}(s_{h},a_{h})\right\|_{(U_{h,\hphi}^{(n,t)})^{-1}},B\right\} \leq \hb_h^{(n,t)}(s_h,a_h) \leq \frac{\beta_2}{\beta_1} \alpha_n \left\|\hphi_{h}^{(n)}(s_{h},a_{h})\right\|_{(U_{h,\hphi}^{(n,t)})^{-1}}.
 		\end{equation*}
 	\end{corollary}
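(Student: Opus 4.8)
The plan is to obtain this corollary as an immediate algebraic consequence of \Cref{lemma: concentration of the bonus term}, working throughout on the single high-probability event (of probability at least $1-\delta/2$) on which that lemma's two-sided bound holds uniformly over all $n\in\mathbb{N}^+$, $h\in[H]$, $t\in[T]$, and $\hphi\in\Phi$. First I would recall the definition $\hb_h^{(n,t)}(s_h,a_h)=\min\{\tap_n\|\hphi_h^{(n)}(s_h,a_h)\|_{(\widehat{U}_h^{(n,t)})^{-1}},B\}$ and substitute the prescribed choice $\tap_n=\alpha_n/\beta_1$, so that the untruncated inner argument becomes $\frac{\alpha_n}{\beta_1}\|\hphi_h^{(n)}(s_h,a_h)\|_{(\widehat{U}_h^{(n,t)})^{-1}}$. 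Everything then reduces to sandwiching this inner argument between the two quantities appearing in the lemma, and propagating the bounds through the truncation.

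For the upper bound I would drop the truncation via the trivial estimate $\min\{x,B\}\le x$, giving $\hb_h^{(n,t)}\le \frac{\alpha_n}{\beta_1}\|\hphi_h^{(n)}(s_h,a_h)\|_{(\widehat{U}_h^{(n,t)})^{-1}}$, and then apply the right-hand inequality of \Cref{lemma: concentration of the bonus term}, namely $\|\hphi_h^{(n)}(s_h,a_h)\|_{(\widehat{U}_h^{(n,t)})^{-1}}\le \beta_2\|\hphi_h^{(n)}(s_h,a_h)\|_{(U_{h,\hphi}^{(n,t)})^{-1}}$, which multiplies the $\alpha_n/\beta_1$ prefactor up to the claimed $\frac{\beta_2}{\beta_1}\alpha_n\|\hphi_h^{(n)}(s_h,a_h)\|_{(U_{h,\hphi}^{(n,t)})^{-1}}$.

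For the lower bound I would instead invoke the left-hand inequality of the lemma, $\beta_1\|\hphi_h^{(n)}(s_h,a_h)\|_{(U_{h,\hphi}^{(n,t)})^{-1}}\le \|\hphi_h^{(n)}(s_h,a_h)\|_{(\widehat{U}_h^{(n,t)})^{-1}}$; multiplying through by $\alpha_n/\beta_1$ shows the inner argument of the min dominates $\alpha_n\|\hphi_h^{(n)}(s_h,a_h)\|_{(U_{h,\hphi}^{(n,t)})^{-1}}$. The only point needing a word of care is pushing this comparison through the truncation: since $x\mapsto\min\{x,B\}$ is nondecreasing, domination of the arguments yields $\min\{\tap_n\|\cdot\|_{(\widehat{U}_h^{(n,t)})^{-1}},B\}\ge \min\{\alpha_n\|\cdot\|_{(U_{h,\hphi}^{(n,t)})^{-1}},B\}$, which is exactly the asserted lower bound (note the lower bound keeps the truncation by $B$, whereas the upper bound does not).

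I expect no genuine obstacle here; the statement is a direct corollary, as the phrase ``we can immediately obtain'' already signals. The only things to get right are matching each side of the lemma's two-sided estimate with the correct side of the corollary, and handling the $\min$ by monotonicity rather than by a case split, so that no additional probability is lost and the event of probability $1-\delta/2$ from \Cref{lemma: concentration of the bonus term} transfers verbatim.
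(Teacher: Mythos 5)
Your proposal is correct and is exactly the argument the paper intends: substitute $\tap_n=\alpha_n/\beta_1$ into the definition of $\hb_h^{(n,t)}$, apply the two-sided bound of \Cref{lemma: concentration of the bonus term} on its high-probability event, and propagate through the truncation (dropping the $\min$ for the upper bound, using monotonicity of $x\mapsto\min\{x,B\}$ for the lower bound). The paper treats this as immediate and gives no further detail, so your write-up is, if anything, more careful than the original.
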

 	Recall that $f_h^{(n,t)}(s,a)=\|\hP_h^{(n,t)}(\cdot|s,a) - P^{(\star,t)}_h(\cdot|s,a)\|_{TV}$ represents the estimation error of task $t$ in terms of the total variation distance in the $n$-th iteration at step $h$, given state $s$ and action $a$ in \Cref{alg: Upstream}. Inspired by the proof of Theorem 21 in \citet{NEURIPS2020_e894d787}, We show that if we uniformly choose the exploration policies for each task, the summation of the estimation error can be bounded with high probability. 
 	\begin{lemma}[Multitask MLE guarantee]\label{lemma: multitask MLE Guarantee}
 		Given $\delta\in(0,1)$, consider the transition kernels learned from line \ref{line: MLE} and \ref{line: transition} in \Cref{alg: Upstream}, we have the following inequality holds for any $n,h\geq 2$ with probability at least $1-\delta/2$:
 		\begin{align}
 		\sum_{t=1}^{T}\mathop{\Eb}_{s_{h-1}\sim (P^{(\star,t)},\Pi_t^n)\atop {a_{h-1},a_h\sim \Uc(\Ac)
 				\atop s_h\sim P^{(\star,t)}(\cdot|s_{h-1},a_{h-1})}}\left[f_h^{(n,t)}(s_h,a_h)^2\right]
 		\leq \zeta_n, \quad\mbox{ where } \zeta_n : = \frac{2\log\left(2|\Phi||\Psi|^T nH/\delta\right)}{n}.
 		\end{align} 
 		In addition, for $h=1$,
 		\begin{align*}
 		\sum_{t=1}^T\mathop{\Eb}_{a_1 \sim \Uc(\Ac)}\left[f_1^{(n,t)}(s_1,a_1)^2\right]\leq \zeta_n.
 		\end{align*}
 		Furthermore, define 
 		\begin{align}
 		&\zeta_h^{(n,t)} = \mathop{\Eb}_{s_{h-1}\sim (P^{(\star,t)},\Pi_t^n)\atop {a_{h-1},a_h\sim \Uc(\Ac)
 		\atop s_h\sim P^{(\star,t)}(\cdot|s_{h-1},a_{h-1})}}\left[f_h^{(n,t)}(s_h,a_h)^2\right],h\geq 2,\\
 	    &{\zeta_1^{(n,t)}}={\mathop{\Eb}_{s_1\sim(P^{(\star,t)},\pi_t) \atop a_1 \sim \Uc(\Ac)} \left[f_1^{(n,t)}(s_1,a_1)^2\right]}.
 		\end{align}
 		We have 
 		\begin{align}
 		\zeta_h^{(n,t)} \leq \sum_{t=1}^T \zeta_h^{(n,t)} \leq \zeta_n  = \frac{2\log\left(2|\Phi||\Psi|^T nH/\delta\right)}{n}.
 		\end{align}
 	\end{lemma}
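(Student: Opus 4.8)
The plan is to reduce the statement to a single application of the standard maximum-likelihood guarantee for conditional density estimation with adaptively collected data (in the spirit of Theorem~21 of \citet{NEURIPS2020_e894d787}), applied to the \emph{joint} model class over all $T$ tasks. First I would fix the step $h$ and view each iteration $\tau\in[n]$ as producing, for every task $t$, a context--observation pair $x_\tau^t=(s_h^{(\tau,t,h)},a_h^{(\tau,t,h)})$ and $y_\tau^t=s_{h+1}^{(\tau,t,h)}$, where the context is generated by rolling in with $\pi_t^{\tau-1}$, taking a uniform action at step $h-1$, transitioning, and taking a uniform action at step $h$; this is exactly the sampling distribution defining $\zeta_h^{(n,t)}$ once one averages over $\tau$ and identifies $\Pi_t^n=\Uc(\pi_t^1,\dots,\pi_t^{n-1})$ with the empirical mixture of the historical exploration policies, so that $\tfrac1n\sum_\tau\Eb_{x_\tau}[\cdot]=\Eb_{(P^{(\star,t)},\Pi_t^n)}[\cdot]$. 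Because the $T$ tasks live in separate environments, the joint observation $(y_\tau^1,\dots,y_\tau^T)$ is, conditionally on the past, a product measure $\prod_{t=1}^T P_h^{(\star,t)}(\cdot\mid x_\tau^t)$, and the joint MLE of line~\ref{line: MLE} maximizes the additive log-likelihood $\sum_\tau\sum_t\log\langle\phi_h(x_\tau^t),\mu_h^t(y_\tau^t)\rangle$ over the class $\Phi\times\Psi^T$ of cardinality $|\Phi|\,|\Psi|^T$.

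The core of the argument is the exponential-martingale concentration underlying the MLE bound. For the square-root likelihood ratio of the joint model, independence across tasks factorizes the one-step conditional expectation, $\Eb_\tau\big[\prod_t\sqrt{\hP_h^{t}(y_\tau^t\mid x_\tau^t)/P_h^{(\star,t)}(y_\tau^t\mid x_\tau^t)}\big]=\prod_t\big(1-\tfrac12 D_{\mathrm H}^2(\hP_h^{t},P_h^{(\star,t)})\big)$, where $D_{\mathrm H}^2$ is the squared Hellinger distance at the random context $x_\tau^t$. Taking $-\log$ and using $-\log(1-z)\ge z$ turns the product into the \emph{sum} $\tfrac12\sum_t D_{\mathrm H}^2(\hP_h^{t}(\cdot\mid x_\tau^t),P_h^{(\star,t)}(\cdot\mid x_\tau^t))$, so that the standard martingale deviation bound, together with a union bound over the $|\Phi|\,|\Psi|^T$ candidate models, yields $\sum_{\tau=1}^n\Eb_{x_\tau}\big[\sum_t D_{\mathrm H}^2(\hP_h^{t},P_h^{(\star,t)})\big]\le c\log(|\Phi|\,|\Psi|^T/\delta)$ with probability $1-\delta/(2H)$. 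I would then union bound over $h\in[H]$ (absorbing the mild $n$-dependence into the logarithm) and divide by $n$; recalling $\|p-q\|_{TV}^2\le D_{\mathrm H}^2(p,q)$ gives $\sum_t\zeta_h^{(n,t)}\le\zeta_n$ with $\zeta_n=2\log(2|\Phi|\,|\Psi|^TnH/\delta)/n$. The case $h=1$ is identical but simpler since $s_1$ is fixed and only the uniform action at step $1$ enters, and the final chain $\zeta_h^{(n,t)}\le\sum_{t=1}^T\zeta_h^{(n,t)}\le\zeta_n$ is immediate because each summand is nonnegative.

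The step I expect to be the main obstacle is keeping the bound additive in $t$ rather than incurring a spurious factor of $T$. If one black-boxes the single-task MLE guarantee and bounds the \emph{joint} squared Hellinger distance $D_{\mathrm H}^2(\prod_t\hP^t,\prod_t P^{(\star,t)})$, then recovering the sum of per-task distances costs a factor $T$: one only has $\sum_t a_t\le T\big(1-\prod_t(1-\tfrac12 a_t)\big)$ since $1-\prod_t(1-b_t)\ge\tfrac1T\sum_t b_t$. The whole point of the joint-MLE design is that working directly with the additive log-likelihood --- i.e.\ expanding $-\log\prod_t(1-\tfrac12 D_{\mathrm H,t}^2)\ge\tfrac12\sum_t D_{\mathrm H,t}^2$ \emph{inside} the concentration argument rather than after it --- avoids this loss and produces exactly $\log(|\Phi|\,|\Psi|^T/\delta)=\log|\Phi|+T\log|\Psi|$ in the numerator, which is what ultimately drives the $T$-dependent savings propagated through $\alpha_n$ and $B$. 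A secondary technical point is the careful treatment of adaptivity: the contexts $x_\tau^t$ depend on the data-dependent policies $\pi_t^{\tau-1}$, so the concentration must be phrased as a martingale (self-normalized) bound with respect to the natural filtration, which is exactly what the adaptive version of the MLE lemma supplies.
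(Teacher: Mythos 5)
Your proposal is correct and follows essentially the same route as the paper: both apply the adaptive MLE concentration of \citet{NEURIPS2020_e894d787} (via the tangent-sequence decoupling lemma and the Hellinger/TV bound) to the \emph{joint} product model class of cardinality $|\Phi|\,|\Psi|^T$, and both exploit the conditional independence across tasks to turn $-\log$ of the product of per-task square-root likelihood ratios into the sum $\tfrac12\sum_t D_{\mathrm H}^2$ \emph{inside} the concentration step, which is exactly how the paper avoids the factor-$T$ loss you flag. The only cosmetic difference is that you phrase the martingale step directly, while the paper packages it as Lemmas 24--25 of \citet{NEURIPS2020_e894d787} followed by a Chernoff bound and a union bound over $n$ and $h$ via the substitution $\delta\mapsto\delta/(2nH)$.
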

 	
 	\begin{proof}[Proof of \Cref{lemma: multitask MLE Guarantee}] 
 Consider a sequential conditional probability estimation setting with an instance space $\mathcal{X}$ and a target space $\mathcal{Y}$ where the conditional density is given by $p(y | x) = f ^\star(x, y)$. We are given a dataset $D:= \{(x_i,y_i)\}_{i=1}^n$ , where $x_i \sim \Dc_i = \Dc_i(x_{1:i-1},y_{1:i-1})$ and $y_i \sim p(\cdot | x_i)$. Let $D^\prime$ denote a tangent sequence $\{(x_i^\prime,y_i^\prime)\}_{i=1}^n$ where $x_i^\prime \sim \Dc_i(x_{1:i-1},y_{1:i-1})$ and $y_i^\prime \sim p(\cdot|x_i^\prime)$. Further, we consider a function class $\mathcal{F}: (\mathcal{X} \times \mathcal{Y}) \rightarrow R$ and assume that the reachability condition $f^\star \in \mathcal{F}$ holds.	
 
 	We first introduce two useful lemmas from \cite{NEURIPS2020_e894d787}.
 	\begin{lemma}[Lemma 25 of \cite{NEURIPS2020_e894d787}]\label{lemma: Hellinger distance}
 		For any two conditional probability densities $f_1,f_2$ and any distribution $\Dc \in \triangle(\mathcal{X})$, we have
 		\begin{align*}
 		\Eb_{x \sim D}\|f_1(x,\cdot)-f_2(x,\cdot)\|^2_{TV} \leq -2 \log \Eb_{x \sim \Dc, y \sim f_2(\cdot|x)}\left[\exp\left(-\frac{1}{2}\log(f_2(x,y)/f_1(x,y))\right)\right].
 		\end{align*}
 	\end{lemma}
 	\begin{lemma}[Lemma 24 of \cite{NEURIPS2020_e894d787}]\label{lemma: decouple}
 		Let $D$ ba a dataset of $n$ samples and $D^\prime$ be corresponding tangent sequence. Let $L(f,D)=\sum_{i=1}^{n}l(f,(x_i,y_i))$ be any function that decomposes additively across examples where $l$ is any function, and let $\widehat{f}(D)$ be any estimator taking as input random variable $D$ and with range $\mathcal{F}$. Then
 		\begin{align*}
 		\Eb_{D}\left[\exp\left(L(\widehat{f}(D),D)-\log\Eb_{D^\prime}\left[\exp(L(\widehat{f}(D),D^\prime))\right]-\log|\mathcal{F}|\right)\right] \leq 1.
 		\end{align*}
 	\end{lemma}
Suppose $\widehat{f}(D)$ is learned from the following maximum likelihood problem:
\begin{align}
\widehat{f}(D):= {\arg\max}_{f \in \mathcal{F}}\sum_{(x_i,y_i)\in D}\log f(x_i,y_i).
\end{align}
Combining Chernoff method and \Cref{lemma: decouple}, we obtain an exponential tail bound, i.e., with probability at least $1-\delta$, 
 		\begin{align}
 		-\log\Eb_{D^\prime}\left[\exp(L(\widehat{f}(D),D^\prime))\right] \leq -L(\widehat{f}(D),D)+\log|\mathcal{F}|+\log(1/\delta). \label{ineq: chernoff in lemma A.8}
 		\end{align}
To proceed, we let $L(f,D)=\sum_{i=1}^{n} - \frac{1}{2} \log(f^\star(x_i,y_i)/f(x_i,y_i))$ where $D$ is a dataset $\{(x_i,y_i)\}_{i=1}^n$(and $D^\prime=\{(x_i^\prime,y_i^\prime)\}_{i=1}^n$ is tangent sequence). In multitask RL setting, let $x=\{(s^t,a^t)\}_{t=1}^T,y=\{(s')^t\}_{t=1}^T$ and $f(x,y)= \prod_{t=1}^T P^t[(s')^t|s^t,a^t]$. Then, dataset $D$ can be decomposed into $D=\bigcup_{t=1}^T D^t$ where $D^t=\{s^t_i,a^t_i,(s')^t_i\}_{i=1}^{n}$. Similarly $D^\prime=\bigcup_{t=1}^T (D')^t$, and $\Dc_i^t:=\Dc_i^t(s^t_{1:i-1},a^t_{1:i-1},(s')^t_{1:i-1})$. Hence, the cardinality $|\mathcal{F}|=|\Phi||\Psi|^T$ in the multitask setting.

Then, the RHS of \Cref{ineq: chernoff in lemma A.8} can be bounded as
 		\begin{align}
 \text{RHS of \Cref{ineq: chernoff in lemma A.8}} & =	\sum_{i=1}^{n} \frac{1}{2} \log(f^\star(x_i,y_i)/\widehat{f}(x_i,y_i))+\log|\mathcal{F}|+\log(1/\delta) \nonumber \\
 &\leq \log|\mathcal{F}|+\log(1/\delta)={\log\left(|\Phi||\Psi|^T /\delta\right)} \label{ineq: RHS of Lemma A.8},
 		\end{align}
 		where the inequality follows because $\widehat{f}$ is MLE and from the assumption of reachability, and the last equality follows because $|\mathcal{F}|=|\Phi||\Psi|^T$.
 		
 Next, the LHS of \Cref{ineq: chernoff in lemma A.8} can be bounded as
 		\begin{align}
 \text{LHS of \Cref{ineq: chernoff in lemma A.8}}	& \overset{\RM{1}}{=}- \log \Eb_{D^\prime}\left[\exp\left(\sum_{i=1}^n-\frac{1}{2}\log\left(\frac{f^\star(x_i^\prime,y_i^\prime)}{\widehat{f}(x_i^\prime,y_i^\prime)}\right)\right)\bigg|D\right]\nonumber\\
 		& \overset{\RM{2}}{=} - \log \Eb_{D^\prime}\left[\exp\left(\sum_{i=1}^n-\frac{1}{2}\log\left(\prod_{t=1}^T\frac{P^{(\star,t)}[(s')^t_i|s^t_i,a^t_i]}{\widehat{P}^{(n,t)}[(s')^t_i|s^t_i,a^t_i]}\right)\right)\bigg|D\right]\nonumber\\
 		& \overset{(\romannumeral3)}{=}  -\sum_{t=1}^T \log \Eb_{(D')^t}\left[\exp\left(\sum_{i=1}^n-\frac{1}{2}\log\left(\frac{P^{(\star,t)}[(s')^t_i|s^t_i,a^t_i]}{\widehat{P}^{(n,t)}[(s')^t_i|s^t_i,a^t_i]}\right)\right)\bigg|D\right]\nonumber\\
 		& \overset{(\romannumeral4)}{=}  -\sum_{t=1}^T \sum_{i=1}^n \log \Eb_{D_i^{t}}\left[\exp\left(-\frac{1}{2}\log\left(\frac{P^{(\star,t)}[(s')^t_i|s^t_i,a^t_i]}{\widehat{P}^{(n,t)}[(s')^t_i|s^t_i,a^t_i]}\right)\right)\right]\nonumber\\
 		& \overset{(\romannumeral5)}{\geq} \sum_{t=1}^T\frac{1}{2}\sum_{i=1}^n\Eb_{(s,a) \sim \Dc_i^t}\left\|\widehat{P}^{(n,t)}(\cdot|s,a)-P^{(\star,t)}(\cdot|s,a)\right\|^2_{TV}\nonumber\\
 		& \overset{(\romannumeral6)}{=} 
 		\frac{n}{2}\sum_{t=1}^{T}\mathop{\Eb}_{s_{h-1}\sim (P^{(\star,t)},\Pi_t^n)\atop {a_{h-1},a_h\sim \Uc(\Ac)\atop s_h\sim P^{(\star,t)}(\cdot|s_{h-1},a_{h-1})}}\left[f_h^{(n,t)}(s_h,a_h)^2\right]\label{ineq: LHS of Lemma A.8},
 		\end{align}
 		where $\RM{1}$ follows from the above definition of $L(f,D)$, $\RM{2}$ follows from the above definition of $f(x,y)$, $(\romannumeral3)$ follows because the data of $T$ tasks are independent conditional on $D$, $\RM{4}$ follows because $\hP^{(n,t)}$ is independent of the dataset $(D')^t$ and from the definition of $D^\prime$, $(\romannumeral5)$ follows from \Cref{lemma: Hellinger distance}, and $(\romannumeral6)$ follows because the data collected in $i$-th iteration uses policy $\pi_{i-1}^t$ followed by two steps of uniform random actions and from the definition of $\Pi_n^t$.

Combining \Cref{ineq: chernoff in lemma A.8,ineq: RHS of Lemma A.8,ineq: LHS of Lemma A.8}, we have 
 		\begin{align}
 		\frac{n}{2}\sum_{t=1}^{T}\mathop{\Eb}_{s_{h-1}\sim (P^{(\star,t)},\Pi_t^n)\atop {a_{h-1},a_h\sim \Uc(\Ac)
 				\atop s_h\sim P^{(\star,t)}(\cdot|s_{h-1},a_{h-1})}}\left[f_h^{(n,t)}(s_h,a_h)^2\right] \leq {\log\left(|\Phi||\Psi|^T /\delta\right)}
 		. \label{ineq: fixed version lemma 3}
 		\end{align}
 		{We substitute $\delta$ with ${\delta}/{2nH}$ to ensure \Cref{ineq: fixed version lemma 3} holds for any $h \in [H]$ and $n$ with probability at least $1-\delta/2$, which finishes the proof.} 
 	\end{proof}

We next introduce a one-step back lemma, which extends the one-step back inequality for infinite-horizon stationary MDP in  \citet{uehara2021representation,NEURIPS2020_e894d787} to non-stationary transition kernels with finite horizon. The lemma shows that for any function $g \in \Sc \times \Ac \rightarrow \Rb$, policy $\pi$ and transition kernel $P$, 
we can upper bound the expectation $\scriptstyle \mathop{\Eb}_{s_h \sim (P, {\pi}) \atop a_h \sim \pi}[g(s_h,a_h)]$ by the product of two terms. The first term represents the convergence guarantee of $g(s_h,a_h)$ following other policies, which is $\scriptstyle \mathop{\Eb}_{s_{h}\sim(P^\star,\Pi)\atop a_h \sim \Uc(\Ac) }[g^2(s_h,a_h)]$. The second term can be described as the distribution shift coefficient $\scriptstyle \mathop{\Eb}_{s_{h-1}\sim (P, \pi) \atop a_{h-1} \sim \pi}\left[\left\|\phi_{h-1}(s_{h-1},a_{h-1})\right\|_{(U_{h-1,\phi})^{-1}}\right]$, which measures the difference caused by distribution shift from $\pi$ and other policies.
\begin{lemma}[One-step back inequality for non-stationary finite-horizon MDP]
\label{lemma:Step_Back}
 For each task $t$, let $P\in\{\hP^{(n,t)},P^{(\star,t)}\}$ with embeddings $\phi$ and $\mu$ be a generic MDP model, and $U^{t}_{h,\phi} = \lambda I + n\Eb_{s_h,a_h\sim (P^{(\star,t)},\Pi)}[\phi\phi^\top]\in\{U_{h,\phi}^{(n,t)}, W_{h,\phi}^{(n,t)}\}$ be the covariance matrix following a generic policy $\Pi$ under the true environment $P^{(\star,t)}$. Note that $\phi\in\{\hphi^{(n)},\phi^\star\}$ corresponds to $P$. Further, let $f^t(s_h,a_h)$ be the total variation between $P^{(\star,t)}$ and $P$ at time step $h$. Take any $g \in \mathcal{S} \times \mathcal{A} \rightarrow \mathbb{R}$ such that $\|g\|_\infty \leq B_g$, i.e., $\sup_{s,a}|g(s,a)|\leq B_g$. Then, $\forall h \geq 2, \forall\, {\rm policy }\,\pi$,
 		\begin{align*}
 		\mathop{\Eb}_{s_h \sim (P, {\pi}) \atop a_h \sim \pi}[g(s_h,a_h)]  &\leq \mathop{\Eb}_{s_{h-1}\sim (P, \pi) \atop a_{h-1} \sim \pi}\left[\left\|\phi_{h-1}(s_{h-1},a_{h-1})\right\|_{(U^t_{h-1,\phi})^{-1}} \times\right.\nonumber\\
 		&\quad\scriptstyle\left.
 		\sqrt{nK\mathop{\Eb}_{s_{h}\sim(P^{(\star,t)},\Pi)\atop a_h \sim \Uc(\Ac) }[g^2(s_h,a_h)]+\lambda dB_g^2 + nB_g^2\mathop{\Eb}_{s_{h-1}\sim(P^{(\star,t)},\Pi)\atop a_{h-1}\sim\Pi }\left[f^t(s_{h-1},a_{h-1})^2\right]}\right].
 		\end{align*}
 	\end{lemma}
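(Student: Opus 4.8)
The plan is to exploit the low-rank decomposition of the generic model $P$ to rewrite the step-$h$ expectation as a linear functional of $\phi_{h-1}$, apply Cauchy--Schwarz in the geometry induced by $U^t_{h-1,\phi}$, and then bound the norm of the resulting weight vector. First I would marginalize the action and peel off one transition. Setting $\bar g(s):=\Eb_{a\sim\pi(\cdot|s)}[g(s,a)]$, the tower rule together with $s_h\sim P_{h-1}(\cdot|s_{h-1},a_{h-1})$ gives
\begin{align*}
\Eb_{s_h\sim(P,\pi),a_h\sim\pi}[g(s_h,a_h)]=\Eb_{s_{h-1}\sim(P,\pi),a_{h-1}\sim\pi}\left[\int P_{h-1}(s'|s_{h-1},a_{h-1})\bar g(s')\,ds'\right].
\end{align*}
Using the low-rank form $P_{h-1}(s'|s,a)=\langle\phi_{h-1}(s,a),\mu_{h-1}(s')\rangle$ of the generic model and defining $\nu:=\int\mu_{h-1}(s')\bar g(s')\,ds'\in\Rb^d$, the inner integral equals $\langle\phi_{h-1}(s_{h-1},a_{h-1}),\nu\rangle$. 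A Cauchy--Schwarz step then yields $\langle\phi_{h-1},\nu\rangle\le\norm{\phi_{h-1}}_{(U^t_{h-1,\phi})^{-1}}\norm{\nu}_{U^t_{h-1,\phi}}$, so that the scalar $\norm{\nu}_{U^t_{h-1,\phi}}$ factors out of the expectation exactly as the square-root constant multiplying $\norm{\phi_{h-1}}_{(U^t_{h-1,\phi})^{-1}}$ in the claimed bound.

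The crux is to show $\norm{\nu}^2_{U^t_{h-1,\phi}}$ is at most $nK\,\Eb_{s_h\sim(\sP,\Pi),a_h\sim\Uc(\Ac)}[g^2]+\lambda dB_g^2+nB_g^2\,\Eb_{s_{h-1}\sim(\sP,\Pi),a_{h-1}\sim\Pi}[f^t(s_{h-1},a_{h-1})^2]$. Expanding $\norm{\nu}^2_{U^t_{h-1,\phi}}=\lambda\norm{\nu}_2^2+n\Eb_{(\sP,\Pi)}[\langle\phi_{h-1},\nu\rangle^2]$, I would control the two pieces separately. For the regularizer term, since $|\bar g|\le B_g$, the normalization $\norm{\int\mu_h(s)g(s)\,ds}_2\le\sqrt d$ for $g:\Sc\to[0,1]$ (splitting $\bar g$ into positive and negative parts) gives $\norm{\nu}_2\lesssim B_g\sqrt d$, hence $\lambda\norm{\nu}_2^2\lesssim\lambda dB_g^2$. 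For the data term I note $\langle\phi_{h-1},\nu\rangle=(P_{h-1}\bar g)(s_{h-1},a_{h-1})$ and split it as $(\sP_{h-1}\bar g)+((P_{h-1}-\sP_{h-1})\bar g)$. The model-error part is bounded pointwise by $B_g f^t(s_{h-1},a_{h-1})$ through the definition of total variation, producing the $nB_g^2\,\Eb[(f^t)^2]$ contribution; the true-kernel part is handled by Jensen's inequality, first on the conditional average over $s_h$ and then on the policy average defining $\bar g$, followed by the importance-sampling step $\Eb_{a\sim\pi}[g^2]\le K\,\Eb_{a\sim\Uc(\Ac)}[g^2]$ (valid since $\pi(a)\le 1$), which transfers the bound to $nK\,\Eb_{s_h\sim(\sP,\Pi),a_h\sim\Uc(\Ac)}[g^2]$. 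Substituting both estimates back and writing the constant inside the expectation finishes the argument.

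The main obstacle I anticipate is the bookkeeping in this last term: $P_{h-1}$ is the estimated model whereas both the covariance expectation and the target $g^2$ expectation live under the true kernel $\sP$, so the true/model decomposition and the attendant $(a+b)^2\le 2a^2+2b^2$ (or Minkowski) step must be arranged so that the three contributions land inside a single square root with the stated coefficients, with the resulting absolute constants absorbed. A secondary care point is matching the sampling distributions to the specific covariance in play, namely $U^{(n,t)}_{h-1,\phi}$ with $a_{h-1}\sim\Uc(\Ac)$ versus $W^{(n,t)}_{h-1,\phi}$ with $a_{h-1}\sim\Pi$; this only affects whether an extra importance-sampling factor of $K$ is incurred when the policy average over $a_{h-1}$ is converted to a uniform one, and the argument above applies verbatim in both cases.
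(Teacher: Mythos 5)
Your proposal matches the paper's proof essentially step for step: peel off one transition via the low-rank factorization, apply Cauchy--Schwarz in the $(U^t_{h-1,\phi})^{-1}$ geometry, and bound $\norm{\nu}^2_{U^t_{h-1,\phi}}$ by splitting off the $\lambda d B_g^2$ regularizer term, converting $P$ to $P^{(\star,t)}$ at a cost of $B_g^2 (f^t)^2$, and finishing with Jensen and importance sampling over actions. The one place the paper is slightly cleaner than your sketch is the data term: rather than decomposing $(P_{h-1}\bar g)$ into true-plus-error and then squaring (which incurs the $(a+b)^2\leq 2a^2+2b^2$ factor you worry about), it first applies Jensen to get $\Eb_{s_h\sim P}[g^2]$ and only then shifts the measure from $P$ to $P^{(\star,t)}$ additively, so the three contributions land in the square root with the stated coefficients and no extra constants.
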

 	\begin{proof}
First, we have
    \begin{align*}
    &\quad \ \mathop{\Eb}_{s_h\sim(P,\pi) \atop a_h \sim \pi}\left[ 
     g(s_h,a_h)\right]\nonumber\\
    &=\mathop{\Eb}_{s_{h-1} \sim (P,\pi) \atop a_{h-1} \sim \pi} \left[\int_{s_h}\sum_{a_h}g(s_h,a_h)\pi(a_h|s_h)\langle\phi_{h-1}(s_{h-1},a_{h-1}),\mu_{h-1}(s_h)\rangle d{s_h}\right]\\
    & \leq\mathop{\Eb}_{s_{h-1} \sim (P, \pi) \atop a_{h-1}\sim\pi} \left[\left\|\phi_{h-1}(s_{h-1},a_{h-1})\right\|_{(U^t_{h-1,\phi})^{-1}}\left\|\int\sum_{a_h}g(s_h,a_h)\pi(a_h|s_h)\mu_{h-1}(s_h)d{s_h}\right\|_{U^t_{h-1,\phi}}\right],
    \end{align*}
where the inequality follows from Cauchy's inequality. We further develop the following bound:
    \begin{align*}
       &\hspace{-5mm} \left\|\int\sum_{a_h}g(s_h,a_h)\pi(a_h|s_h)\mu_{h-1}(s_h)d{s_h}\right\|_{U^t_{h-1,\phi}}^2\\
        & \overset{(\romannumeral1)}{\leq} n \mathop{\Eb}_{s_{h-1}\sim (P^{(\star,t)},\Pi) \atop a_{h-1}\sim \Pi}
        \left[\left(\int_{s_h}\sum_{a_h}g(s_h,a_h)\pi(a_h|s_h)\mu(s_h)^\top\phi(s_{h-1},a_{h-1})d{s_h}\right)^2\right] + \lambda d B_g^2\\
        & \leq n \mathop{\Eb}_{s_{h-1}\sim(P^{(\star,t)},\Pi) \atop a_{h-1} \sim \Pi } \left[\mathop{\Eb}_{s_h \sim P(\cdot|s_{h-1},a_{h-1}) \atop a_h \sim \pi} \left[g(s_h,a_h)^2\right]\right] + \lambda d B_g^2\\
        &\overset{(\romannumeral2)}{\leq} n \mathop{\Eb}_{s_{h-1}\sim(P^{(\star,t)},\Pi) \atop a_{h-1}\sim\Pi }\left[\mathop{\Eb}_{s_h\sim P^{(\star,t)} \atop a_h\sim \pi}\left[g(s_h,a_h)^2\right]\right] + \lambda d B_g^2 + nB_g^2\mathop{\Eb}_{s_{h-1}\sim(P^{(\star,t)},\Pi) \atop a_{h-1}\sim\Pi }\left[f^t(s_{h-1},a_{h-1})^2\right]\\
        &\overset{(\romannumeral3)}{\leq} n K\mathop{\Eb}_{s_{h}\sim(P^{(\star,t)},\Pi)\atop a_h\sim \Uc(\Ac) }\left[g(s_h,a_h)^2\right] + \lambda d B_g^2 + nB_g^2\mathop{\Eb}_{s_{h-1}\sim(P^{(\star,t)},\Pi) \atop a_{h-1}\sim \Pi }\left[f^t(s_{h-1},a_{h-1})^2\right],
    \end{align*}
where $(\romannumeral1)$ follows from the assumption $\|g\|_{\infty}\leq B_g$, $(\romannumeral2)$ follows because $f(s_h,a_h)$ is the total variation between $P^\star$ and $P$ at time step $h$, and $(\romannumeral3)$ follows from importance sampling.
This finishes the proof.
\end{proof}

\section{Proof of \Cref{lemma: Approximate Feature for new task}}

\Cref{lemma: Approximate Feature for new task} serves a central role for bridging the upstream and downstream learning, which shows that the feature $\hphi$ learned in upstream is a $\xi_{down}$-approximate feature map and can approximate the true feature in the new task. 

 	\begin{proof}[Proof of \Cref{lemma: Approximate Feature for new task}]
 		Under \Cref{assumption: reachability,assumption: finite measurement,assumption: smooth TV distance}, for any $t \in[T]$, we have
 		\begin{align}
 		&\max_{s \in \Sc, a \in \Ac}\|P^1_h(\cdot|s,a)-P^2_h(\cdot|s,a)\|_{TV}  \nonumber \\
 		& \overset{\RM{1}}{\leq} C_R \mathop{\Eb}_{(s_h,a_h)\sim \Uc(\Sc,\Ac)}{\|P^1_h(\cdot|s_h,a_h)-P^2_h(\cdot|s_2,a_2)\|_{TV}}\nonumber\\
 		& \overset{\RM{2}}{\leq} \frac{C_R\upsilon}{\kappa_u}\mathop{\Eb}_{s_h \sim (P^{(\star,t)},\pi_t^0) \atop a_h \sim \Uc(\Ac)}\left[\left\|P^1_h(\cdot|s,a)-P^2_h(\cdot|s,a)\right\|_{TV}\right], \label{ineq: lemma1eq1}
 		\end{align}
 		where $\RM{1}$ follows from \Cref{assumption: smooth TV distance} and $\RM{2}$ follows \Cref{assumption: reachability} and \Cref{assumption: finite measurement}.
 		
 		Then, $\forall (s, a) \in \Sc \times \Ac, h \in [H] $, we have 
 		\begin{align}
 		\sum_{t=1}^T \|\hP_h^{(t)}(\cdot|s,a)-P^{(\star,t)}_h(\cdot|s,a)\|_{TV}
 		& \leq \sum_{t=1}^T\max_{s \in \Sc, a \in \Ac}\|\hP_h^{(t)}(\cdot|s,a)-P^{(\star,t)}_h(\cdot|s,a)\|_{TV}\nonumber\\ 
 		& \overset{\RM{1}}{\leq} \frac{C_R\upsilon}{\kappa_u}\sum_{t=1}^T\mathop{\Eb}_{s_h \sim (P^{(\star,t)},\pi_t^0) \atop a_h \sim \Uc(\Ac)}\left[\left\|\hP_h^{(t)}(\cdot|s,a)-P^{(\star,t)}_h(\cdot|s,a)\right\|_{TV}\right]\nonumber\\
 		& \overset{(\romannumeral2)}{\leq} \frac{C_RT\upsilon\epsilon_u}{\kappa_u}, \label{ineq: uniform point-wise TV bound}
 		\end{align}
 		where $\RM{1}$ follows from \Cref{ineq: lemma1eq1}, and $(\romannumeral2)$ follows from \Cref{theorem: Upstream sample complexity}.
 		
 		Define $\hmu^\star(\cdot)=\sum_{t=1}^Tc_t\hmu^{(t)}(\cdot)$, then we have
 		\begin{align*}
 		& \left\|P_h^{(\star,T+1)}(\cdot|s,a)-\left\langle\hphi_h(s,a),\hmu_h^\star(\cdot)\right\rangle\right\|_{TV}\\
 		& \quad = \left\|P_h^{(\star,T+1)}(\cdot|s,a)-\left\langle\hphi_h(s,a),\sum_{t=1}^Tc_t\hmu_h^{(t)}(\cdot)\right\rangle\right\|_{TV}\\
 		& \quad \leq \left\|P_h^{(\star,T+1)}(\cdot|s,a)-\sum_{t=1}^Tc_t\hP_h^{(t)}(\cdot|s,a)\right\|_{TV}\\
 		& \quad \leq \left\|P_h^{(\star,T+1)}(\cdot|s,a)-\sum_{t=1}^Tc_tP_h^{(\star,t)}(\cdot|s,a)\right\|_{TV}+\sum_{t=1}^Tc_t\left\|P_h^{(\star,t)}(\cdot|s,a)-\hP_h^{(t)}(\cdot|s,a)\right\|_{TV}\\
 		& \quad \overset{(\romannumeral1)}{\leq} \xi+\frac{C_LC_RT\upsilon\epsilon_u}{\kappa_u},
 		\end{align*}
 		where $(\romannumeral1)$ follows from \Cref{assumption: Linear combination}, \Cref{ineq: uniform point-wise TV bound} and the fact that $c_t \in [0,C_L]$. 
 		
 		Furthermore, by normalization for any $g: \Sc \rightarrow [0,1]$, we obtain
 		\begin{align*}
 		     \norm{\int \hmu^\star_h(s)g(s)ds }_2\leq \sum_{t=1}^Tc_t\norm{\int \hmu^{(t)}(s)g(s)ds }_2\leq C_L\sqrt{d}.
 		\end{align*}
 	\end{proof}

\section{Proof of Theorem~\ref{thm:offline}}\label{sec: downstream offline}
Recall $\xioff=\xi+\frac{C_LC_RT v\epsilon_u}{\kappa_u}$ and for any $h \in [H]$, we define
\begin{align*}
&P_h^{(\star,T+1)}(\cdot|s,a)=\langle\phi_h^\star(s,a),\mu_h^{(\star,T+1)}(\cdot)\rangle, \\
&\overline{P}_h(\cdot|s,a)=\langle\widehat{\phi}_h(s,a),\widehat{\mu}^*_h(\cdot)\rangle.  
\end{align*}
Given a reward function $r$, for any function $f:\mathcal{S}\mapsto \mathbb{R}$ and $h\in[H]$, we define the transition operators and their corresponding Bellman operators as
\begin{align*}
&(P_h^{(\star,T+1)}\hspace{-0.02in}f)(s,a)\hspace{-0.02in}=\hspace{-0.045in}\int_{s'}\hspace{-0.02in}\langle\phi_h^\star(s,a),\mu_h^{(\star,T+1)}\hspace{-0.02in}(s')\rangle f(s')ds'\hspace{-0.03in}, \\ &(\mathbb{B}_hf)(s,a)\hspace{-0.01in}=\hspace{-0.02in}r_h(s,a)\hspace{-0.02in}+\hspace{-0.02in}(P_h^{(\star,T+1)}\hspace{-0.02in}f)(s,a), \\
&(\overline{P}_hf)(s,a)=\int_{s'}\langle\widehat{\phi}_h(s,a),\widehat{\mu}_h^\star(s')f(s')\rangle ds', \\ &(\overline{\mathbb{B}}_hf)(s,a)=r_h(s,a)+(\overline{P}_hf)(s,a).
\end{align*}
We further denote $(\widehat{\mathbb{B}}_h \widehat{V}_{h+1})(s,a)=r_h(s,a)+\widehat{\phi}_h(s,a)^\top \widehat{w}_h$, $h \in [H]$.

We remark that throughout this section, the expectation is taken with respect to the transition kernel of the target task, i.e., $P^{(\star,T+1)}$.

{\bf Proof Overview:} The proof of \Cref{thm:offline} consists of two main steps and a final suboptimality gap characterization. {\bf Step 1:} We decompose the suboptimality gap into the summation of the uncertainty metric of each step in \Cref{lemma:offline-pessimism}. We note that the reward function $r_h$ here is from a general class, not necessarily a linear function. {\bf Step 2:} We provide an upper bound on the Bellman update error as shown in \Cref{lemma:bellmanerrorbound}, where our main technical contribution lies in capturing the impact of the misspecification of the representation taken from upstream estimation on such error. {\bf Suboptimality gap characterization:} Based on the first two steps, we select uncertainty metric $\Gamma_h$ and obtain an instance-dependent suboptimality gap, which we further bound under the feature coverage assumption. 

We provide details for the two main steps and the suboptimality gap characterization in \Cref{app:theorem2step1}-\Cref{app:theorem2step3}, respectively.

\subsection{Suboptimality decomposition}\label{app:theorem2step1}

In this step, we decompose the suboptimality gap into the summation of the uncertainty metric of each step in \Cref{lemma:offline-pessimism}. We note that the reward function $r$ here is from a general class, not necessarily a linear function. To this end, we first provide the following lemma.
\begin{lemma}\label{lemma:offline-BV_bound}
If $|(\mathbb{B}_h\widehat{V}_{h+1}-\widehat{\mathbb{B}}_h\widehat{V}_{h+1})(s,a)|\leq \Gamma_h(s,a)$ for all $(h,s,a)\in [H]\times\mathcal{S}\times\mathcal{A}$, then it holds that $(\mathbb{B}_h\widehat{V}_{h+1})(s,a)\leq 1$, $\forall (h,s,a)\in [H]\times\mathcal{S}\times\mathcal{A}$.
\end{lemma}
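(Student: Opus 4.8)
The plan is to prove the statement by a backward induction that establishes the pointwise pessimism $\widehat{V}_{h+1}(s)\le V_{h+1}^{*}(s)$ for all $s$, where $V_{h}^{*}$ and $Q_{h}^{*}$ denote the optimal (action-)value functions of the target MDP $\Mc^{T+1}$ with transition $P^{(\star,T+1)}$ and reward $r$; the desired bound then follows in one line. First I would record the elementary facts that $\widehat{V}_h\in[0,1]$ (from the $\min\{\cdot,1\}^{+}$ truncation in line~\ref{line:alg2-5}), that $P_h^{(\star,T+1)}$ is a monotone operator, and that the reward normalization $\sum_{h'=1}^{H}r_{h'}\in[0,1]$ forces $Q_h^{*}(s,a)\le 1$ for every $(h,s,a)$, since $Q_h^{*}(s,a)$ equals an expected reward-to-go that is bounded by the total reward along any trajectory.

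For the induction I would start from $\widehat{V}_{H+1}=0=V_{H+1}^{*}$ and, assuming $\widehat{V}_{h+1}\le V_{h+1}^{*}$ pointwise, show $\widehat{V}_h\le V_h^{*}$. Fix $(s,a)$. If the truncation sets $\widehat{Q}_h(s,a)=0$, then trivially $\widehat{Q}_h(s,a)\le Q_h^{*}(s,a)$ because rewards are nonnegative; otherwise $\widehat{Q}_h(s,a)\le(\widehat{\mathbb{B}}_h\widehat{V}_{h+1})(s,a)-\Gamma_h(s,a)$, and invoking the hypothesis $(\widehat{\mathbb{B}}_h\widehat{V}_{h+1})(s,a)\le(\mathbb{B}_h\widehat{V}_{h+1})(s,a)+\Gamma_h(s,a)$ the two $\Gamma_h$ terms cancel, leaving $\widehat{Q}_h(s,a)\le(\mathbb{B}_h\widehat{V}_{h+1})(s,a)=r_h(s,a)+(P_h^{(\star,T+1)}\widehat{V}_{h+1})(s,a)$. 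Applying the induction hypothesis together with monotonicity of $P_h^{(\star,T+1)}$ gives $\widehat{Q}_h(s,a)\le r_h(s,a)+(P_h^{(\star,T+1)}V_{h+1}^{*})(s,a)=Q_h^{*}(s,a)$, and maximizing over $a$ yields $\widehat{V}_h(s)\le V_h^{*}(s)$.

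Finally, with pessimism in hand, $(\mathbb{B}_h\widehat{V}_{h+1})(s,a)=r_h(s,a)+(P_h^{(\star,T+1)}\widehat{V}_{h+1})(s,a)\le r_h(s,a)+(P_h^{(\star,T+1)}V_{h+1}^{*})(s,a)=Q_h^{*}(s,a)\le 1$, which is exactly the claim. The point that needs care, and which I expect to be the main subtlety rather than a computational obstacle, is to keep the argument non-circular: the subsequent pessimism/decomposition result (\Cref{lemma:offline-pessimism}) relies on this lemma, so I must verify that the inductive step uses only the definition of $\widehat{Q}_h$, the hypothesis $|(\mathbb{B}_h\widehat{V}_{h+1}-\widehat{\mathbb{B}}_h\widehat{V}_{h+1})|\le\Gamma_h$, and monotonicity, and never the conclusion $(\mathbb{B}_h\widehat{V}_{h+1})\le 1$ itself. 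The case split on the truncation $\min\{\cdot,1\}^{+}$ is the only delicate bookkeeping, and in the $\widehat{Q}_h=0$ branch I rely on nonnegativity of the reward and of $Q_h^{*}$.
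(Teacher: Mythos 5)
Your proof is correct and follows essentially the same route as the paper: a backward induction that cancels the $\Gamma_h$ penalty against the assumed Bellman-error bound to show $\widehat{Q}_{h+1}$ is dominated by the optimal reward-to-go (the paper writes this as $\max_{a_{h+1},\dots,a_H}\mathbb{E}[\sum_{h'}r_{h'}]$ rather than $Q^{*}_{h+1}$, but it is the same quantity), followed by the reward normalization $\sum_h r_h\le 1$. The packaging of the induction hypothesis as $\widehat{V}_{h+1}\le V^{*}_{h+1}$ versus the paper's bound on $\mathbb{B}_{h}\widehat{V}_{h+1}$ is only a cosmetic difference.
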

\begin{proof} 
It suffices to show 
\begin{align*}
(\mathbb{B}_{h}\widehat{V}_{h+1})(s,a)\leq \max_{a_{h+1},\ldots,a_H}\mathbb{E}\left[\sum_{h'=h}^{H}r_{h'}(s_{h'},a_{h'})\middle|s_{h}=s,a_{h}=a\right].
\end{align*}
We prove it by induction. For $h'=H$, since $\widehat{V}_{H+1}=0$, we have
\begin{align*}
(\mathbb{B}_H\widehat{V}_{H+1})(s,a)= r_H(s,a)+(P_{H}^{(\star,T+1)}\widehat{V}_{H+1})(s,a)=r_{H}(s,a).
\end{align*}

Suppose for $h'=h+1$, $h\in[H-1]$, we have 
\begin{align*}
(\mathbb{B}_{h+1}\widehat{V}_{h+2})(s,a)\leq \max_{a_{h+2},\ldots,a_H}\mathbb{E}\left[\sum_{h'=h+1}^{H}r_{h'}(s_{h'},a_{h'})\middle|s_{h+1}=s,a_{h+1}=a\right],
\end{align*}
which is bounded in $[0,1]$ since $r_h\geq 0$, $\forall h$, and for any trajectory it holds that $\sum_{h=1}^{H}r_h\leq 1$.

Further note that 
\begin{align*}
\widehat{Q}_{h+1}(s,a)&=\min\{r_{h+1}(s,a)+\widehat{w}_{h+1}^\top\widehat{\phi}_{h+1}(s,a)-\Gamma_{h+1}(s,a),1\}^+ \\
&\overset{\RM{1}}{\leq}\min\{(\mathbb{B}_{h+1}\widehat{V}_{h+2})(s,a),1\}^+ \\
&\overset{\RM{2}}{\leq}\max\{0,(\mathbb{B}_{h+1}\widehat{V}_{h+2})(s,a)\} \\
&\overset{\RM{3}}{\leq} \max_{a_{h+2},\ldots,a_H}\mathbb{E}\left[\sum_{h'=h+1}^{H}r_{h'}(s_{h'},a_{h'})\middle|s_{h+1}=s,a_{h+1}=a\right] , 
\end{align*}
where $\RM{1}$ follows from the assumption $|(\mathbb{B}_h\widehat{V}_{h+1}-\widehat{\mathbb{B}}_h\widehat{V}_{h+1})(s,a)|\leq \Gamma_h(s,a)$, $\RM{2}$ follows because $(\mathbb{B}_{h+1}\widehat{V}_{h+2})(s,a)\leq 1$ by the induction hypothesis, and $\RM{3}$ follows from the fact that $r_h\geq 0$, $\forall h$.

Therefore, for $h'=h$, we have
\begin{align*}
&(\mathbb{B}_h\widehat{V}_{h+1})(s,a)=r_{h}(s,a)+(P_h^{(*,T+1)}\widehat{V}_{h+1})(s,a) \\
&=r_{h}(s,a)+\int_{s'} P_h^{(*,T+1)}(s'|s,a)\widehat{V}_{h+1}(s')ds' \\
&\leq r_{h}(s,a)+\int_{s'} P_h^{(*,T+1)}(s'|s,a)\max_{a'} \widehat{Q}_{h+1}(s',a') ds' \\
&\leq r_h(s,a)+\int_{s'}ds'P_h^{(*,T+1)}(s'|s,a)\max_{a',a_{h+2},\ldots,a_H}\mathbb{E}\left[\sum_{h'=h+1}^{H}r_{h'}(s_{h'},a_{h'})\middle|s_{h+1}=s',a_{h+1}=a'\right] \\
&\leq r_h(s,a)+\int_{s'}ds'P_h^{(*,T+1)}(s'|s,a)\max_{a_{h+1},a_{h+2},\ldots,a_H}\mathbb{E}\left[\sum_{h'=h+1}^{H}r_{h'}(s_{h'},a_{h'})\middle|s_{h+1}=s'\right] \\
&\leq \max_{a_{h+1},\ldots,a_H}\mathbb{E}\left[\sum_{h'=h}^{H}r_{h'}(s_{h'},a_{h'})\middle|s_{h}=s,a_h=a\right].
\end{align*}
By backward induction from $H$ to $1$, the proof is complete.
\end{proof}

We denote the Bellman update error as $\zeta_h(s,a)=(\mathbb{B}_h\widehat{V}_{h+1})(s,a)-\widehat{Q}_h(s,a)$.
The following lemma shows that it suffices to bound the pessimistic penalty.
\begin{lemma}\label{lemma:offline-pessimism}
Suppose with probability at least $1-\delta$, for all $(h,s,a)\in [H]\times\mathcal{S}\times\mathcal{A}$, it holds that $|(\mathbb{B}_h\widehat{V}_{h+1}-\widehat{\mathbb{B}}_h\widehat{V}_{h+1})(s,a)|\leq \Gamma_h(s,a)$. Denote $\{\widehat{\pi}_h\}_{h=1}^H$ as the output of \Cref{alg: Offline RL}. Then, with probability at least $1-\delta$, for any $(h,s,a)\in[H]\times\mathcal{S}\times\mathcal{A}$, we have $0\leq \zeta_h(s,a)\leq 2\Gamma_h(s,a)$. Moreover, it holds that for any policy $\pi$, with probability at least $1-\delta$,
\begin{align*}
V_{P^{(*,T+1)},r}^\pi(s)-V_{P^{(*,T+1)},r}^{\widehat{\pi}}(s)\leq 2\sum_{h=1}^{H}\mathbb{E}_{\pi}[\Gamma_h(s_h,a_h)|s_1=s].
\end{align*}
\end{lemma}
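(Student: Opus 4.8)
The plan is to first prove the pointwise two-sided bound $0\le\zeta_h(s,a)\le 2\Gamma_h(s,a)$ and then feed it into a standard value-difference telescoping. Throughout I work on the event (of probability at least $1-\delta$) on which the hypothesis $|(\mathbb{B}_h\widehat{V}_{h+1}-\widehat{\mathbb{B}}_h\widehat{V}_{h+1})(s,a)|\le\Gamma_h(s,a)$ holds for all $(h,s,a)$. Recall $(\widehat{\mathbb{B}}_h\widehat{V}_{h+1})(s,a)=r_h(s,a)+\widehat{w}_h^\top\widehat{\phi}_h(s,a)$, so line~\ref{line:alg2-5} reads $\widehat{Q}_h(s,a)=\min\{(\widehat{\mathbb{B}}_h\widehat{V}_{h+1})(s,a)-\Gamma_h(s,a),\,1\}^{+}$. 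For the lower bound $\zeta_h\ge 0$, i.e.\ $\widehat{Q}_h(s,a)\le(\mathbb{B}_h\widehat{V}_{h+1})(s,a)$: when $\widehat{Q}_h(s,a)=0$ this is immediate since $r_h\ge 0$ and $\widehat{V}_{h+1}\ge 0$ force $(\mathbb{B}_h\widehat{V}_{h+1})(s,a)\ge 0$; otherwise the hypothesis gives $(\widehat{\mathbb{B}}_h\widehat{V}_{h+1})(s,a)-\Gamma_h(s,a)\le(\mathbb{B}_h\widehat{V}_{h+1})(s,a)$, while $(\mathbb{B}_h\widehat{V}_{h+1})(s,a)\le 1$ by \Cref{lemma:offline-BV_bound}, so the truncation cannot raise $\widehat{Q}_h$ above $(\mathbb{B}_h\widehat{V}_{h+1})(s,a)$.

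For the upper bound $\zeta_h\le 2\Gamma_h$, I would use $(\mathbb{B}_h\widehat{V}_{h+1})(s,a)\le(\widehat{\mathbb{B}}_h\widehat{V}_{h+1})(s,a)+\Gamma_h(s,a)$ and split into the three truncation branches. On the interior branch $\widehat{Q}_h=(\widehat{\mathbb{B}}_h\widehat{V}_{h+1})-\Gamma_h$, so $\zeta_h\le 2\Gamma_h$ directly. On the branch $\widehat{Q}_h=0$ one has $(\widehat{\mathbb{B}}_h\widehat{V}_{h+1})(s,a)<\Gamma_h(s,a)$, whence $(\mathbb{B}_h\widehat{V}_{h+1})(s,a)<2\Gamma_h(s,a)$. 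On the branch $\widehat{Q}_h=1$ we get $\zeta_h=(\mathbb{B}_h\widehat{V}_{h+1})(s,a)-1\le 0\le 2\Gamma_h(s,a)$ using \Cref{lemma:offline-BV_bound} again. This establishes the first claim.

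For the suboptimality gap I would apply the (extended) value-difference decomposition. Evaluating an arbitrary $\pi$ against the computed $\widehat{V}_1$ yields
\[
V_{P^{(*,T+1)},r}^{\pi}(s)-\widehat{V}_1(s)=\sum_{h=1}^{H}\mathbb{E}_{\pi}\!\left[\zeta_h(s_h,a_h)+\langle\widehat{Q}_h(s_h,\cdot),\pi_h(\cdot|s_h)-\widehat{\pi}_h(\cdot|s_h)\rangle\,\middle|\,s_1=s\right],
\]
and the same identity applied to $\widehat{\pi}$ gives $\widehat{V}_1(s)-V_{P^{(*,T+1)},r}^{\widehat{\pi}}(s)=-\sum_{h}\mathbb{E}_{\widehat{\pi}}[\zeta_h(s_h,a_h)\mid s_1=s]$, since the inner-product term vanishes for $\pi=\widehat{\pi}$. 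Because $\widehat{\pi}_h$ is greedy with respect to $\widehat{Q}_h$ (line~\ref{line:alg2-6}), $\langle\widehat{Q}_h(s_h,\cdot),\pi_h(\cdot|s_h)-\widehat{\pi}_h(\cdot|s_h)\rangle\le 0$; adding the two identities and discarding the nonpositive term gives $V_{P^{(*,T+1)},r}^{\pi}(s)-V_{P^{(*,T+1)},r}^{\widehat{\pi}}(s)\le\sum_h\mathbb{E}_\pi[\zeta_h]-\sum_h\mathbb{E}_{\widehat{\pi}}[\zeta_h]$. Dropping the second sum (nonnegative by $\zeta_h\ge 0$) and bounding the first by $\zeta_h\le 2\Gamma_h$ yields the stated bound $2\sum_h\mathbb{E}_\pi[\Gamma_h(s_h,a_h)\mid s_1=s]$.

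The hard part will be the careful bookkeeping in the two-sided bound across the three truncation branches, together with verifying that the value-difference decomposition remains valid when the reward $r_h$ is drawn from a general, non-linear class $\mathcal{R}$ rather than being linear in $\widehat{\phi}_h$. Since the reward enters both $\mathbb{B}_h$ and $\widehat{\mathbb{B}}_h$ additively and identically, it cancels in $\zeta_h$, so the linear-MDP argument carries over; the only subtlety is confirming that the greedy choice and the $\min\{\cdot,1\}^{+}$ truncation interact so that the policy-difference term keeps the correct sign, which is precisely what \Cref{lemma:offline-BV_bound} and the greedy property supply.
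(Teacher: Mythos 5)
Your proposal is correct and follows essentially the same route as the paper: the same case analysis over the truncation branches (using \Cref{lemma:offline-BV_bound} to show the $\min\{\cdot,1\}$ clip is never binding) to get $0\le\zeta_h\le 2\Gamma_h$, followed by the value-difference decomposition of \Cref{lemma:offline-decom}, dropping the nonpositive greedy-policy term and the nonnegative $\mathbb{E}_{\widehat{\pi}}[\zeta_h]$ sum. No substantive differences.
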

\begin{proof}
First, we show that $\zeta_h(s,a)\geq 0$.
 Recall that
 \begin{align*}
     \widehat{Q}_h(\cdot,\cdot)\hspace{-0.02in}=\hspace{-0.02in}\min\{r_h(\cdot,\cdot)\hspace{-0.02in}+\hspace{-0.02in}\hphi_h(\cdot,\cdot)\widehat{w}_h\hspace{-0.02in}-\hspace{-0.02in}\Gamma_h(\cdot,\cdot),1\}^+.
 \end{align*}

If $r_h(s,a)+\widehat{\phi}_h(s,a)^\top\widehat{w}_h-\Gamma_h(s,a)\leq 0$, then $\widehat{Q}_h(s,a)=0$, which implies that $\zeta_h(s,a)=(\mathbb{B}_h\widehat{V}_{h+1})(s,a)-\widehat{Q}_h(s,a)=(\mathbb{B}_h\widehat{V}_{h+1})(s,a)\geq 0$. 

If $r_h(s,a)+\widehat{\phi}_h(s,a)^\top\widehat{w}_h-\Gamma_h(s,a)> 0$, then $\widehat{Q}_h\leq r_h(s,a)+\widehat{\phi}_h(s,a)^\top\widehat{w}_h-\Gamma_h(s,a)=(\widehat{\mathbb{B}}_h\widehat{V}_{h+1})(s,a)-\Gamma_h(s,a)$, which implies that
\begin{align*}
\zeta_h(s,a)= (\mathbb{B}_h\widehat{V}_{h+1})(s,a)-\widehat{Q}_h(s,a)\geq (\mathbb{B}_h\widehat{V}_{h+1})(s,a)-(\widehat{\mathbb{B}}_h\widehat{V}_{h+1})(s,a)+\Gamma_h(s,a)\geq 0.
\end{align*}

We next show that $\zeta_h(s,a)\leq 2\Gamma_h(s,a)$. Note that
\begin{align*}
r_h(s,a)+\widehat{\phi}_h(s,a)^\top\widehat{w}_h-\Gamma_h(s,a)
\overset{\RM{1}}{=}(\widehat{\mathbb{B}}_h\widehat{V}_{h+1})(s,a)-\Gamma_h(s,a)\overset{\RM{2}}{\leq} (\mathbb{B}_h\widehat{V}_{h+1})(s,a)\overset{\RM{3}}{\leq} 1,
\end{align*}
where $\RM{1}$ follows from the definition of $(\widehat{\mathbb{B}}_h\widehat{V}_{h+1})(s,a)$, $\RM{2}$ follows because $|(\mathbb{B}_h\widehat{V}_{h+1}-\widehat{\mathbb{B}}_h\widehat{V}_{h+1})(s,a)|\leq \Gamma_h(s,a)$, and $\RM{3}$ follows from \Cref{lemma:offline-BV_bound}.

Therefore, 
\begin{align*}
\widehat{Q}_h(s,a)&=\min\{r_h(s,a)+\widehat{\phi}_h(s,a)^\top\widehat{w}_h-\Gamma_h(s,a),1\}^+ \\
&=\max\{r_h(s,a)+\widehat{\phi}_h(s,a)^\top\widehat{w}_h-\Gamma_h(s,a),0\} \\
&\geq r_h(s,a)+\widehat{\phi}_h(s,a)^\top\widehat{w}_h-\Gamma_h(s,a) \\
&=(\widehat{\mathbb{B}}_h\widehat{V}_{h+1})(s,a)-\Gamma_h(s,a).
\end{align*}

By the definition of $\zeta_h$, we have
\begin{align*}
\zeta_h(s,a)&=(\mathbb{B}_h\widehat{V}_{h+1})(s,a)-\widehat{Q}_h(s,a) \\
&\leq (\mathbb{B}_h\widehat{V}_{h+1})(s,a)-(\widehat{\mathbb{B}}_h\widehat{V}_{h+1})(s,a)+\Gamma_h(s,a) \\
&\leq 2\Gamma_h(s,a).
\end{align*}

Then we obtain
\begin{align}
&V_{P^{(*,T+1)},r}^\pi(s)-V_{P^{(*,T+1)},r}^{\widehat{\pi}}(s)\nonumber\\
& \quad \overset{\RM{1}}{\leq}
\sum_{h=1}^{H}\mathbb{E}_\pi[\zeta_h(s_h,a_h)|s_1=s]-\sum_{h=1}^{H}\mathbb{E}_{\widehat{\pi}}[\zeta_h(s_h,a_h)|s_1=s] \label{eqn: offline-BV-1}\\
&\quad \overset{\RM{2}}{\leq} 2\sum_{h=1}^{H}\mathbb{E}_{\pi}[\Gamma_h(s_h,a_h)|s_1=s],\nonumber
\end{align}
where $\RM{1}$ follows from \Cref{lemma:offline-decom} and definition of $\widehat{\pi}$, and $\RM{2}$ follows because with probability at least $1-\delta$, for all $(h,s,a)\times[H]\times\mathcal{S}\times\mathcal{A}$, $0\leq \zeta_h(s,a)\leq 2\Gamma_h(s,a)$ holds.
\end{proof}

\subsection{Bounding Bellman update error $|(\mathbb{B}_h\widehat{V}_{h+1}-\widehat{\mathbb{B}}_h\widehat{V}_{h+1})(s,a)|$}\label{app:theorem2step2}

In this step, we provide an upper bound on the Bellman update error as shown in \Cref{lemma:bellmanerrorbound}, where the main effort lies in analyzing the impact of the misspecification of the representation taken from upstream estimation.
To this end, we first introduce a concentration lemma that upper-bounds the stochastic noise in regression.
\begin{lemma}\label{lemma:offline-term-IV}
Under the setting of Theorem~\ref{thm:offline}, if we choose $\lambda=1$, $\beta(\delta)=c_\beta \left(d \sqrt{\iota(\delta)}+\sqrt{d\Noff }\xioff+\sqrt{p\log{\Noff}}\right)$ where $\iota(\delta)=\log\left(2pdH\Noff \xioffm/\delta\right)$, there exists an absolute constant $\widetilde{C}$ such that with probability at least $1-\delta$, it holds that for all $h\in[H]$, 
\begin{align*}
&\norm{\sum_{\tau=1}^{\Noff}\widehat{\phi}_h(s_h^\tau,a_h^\tau)\left[(P_h^{(*,T+1)}\widehat{V}_{h+1})(s_h^\tau,a_h^\tau)-\widehat{V}_{h+1}(s_{h+1}^\tau)\right]}_{\Lambda_h^{-1}} \leq \widetilde{C}\left[d\sqrt{\iota}+\sqrt{p\log {\Noff}}\right].
\end{align*}
\end{lemma}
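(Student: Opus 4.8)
The plan is to read this as a self-normalized concentration bound for the zero-mean stochastic part of the least-squares regression, made uniform over the (data-dependent) value functions the algorithm can produce. For a \emph{fixed} deterministic $V:\Sc\to[0,1]$, set $\epsilon_h^\tau(V):=(P_h^{(*,T+1)}V)(s_h^\tau,a_h^\tau)-V(s_{h+1}^\tau)$. Since $s_{h+1}^\tau\sim P_h^{(*,T+1)}(\cdot\,|\,s_h^\tau,a_h^\tau)$, the sequence $\{\epsilon_h^\tau(V)\}_\tau$ is a martingale-difference sequence that is conditionally mean-zero and bounded by $1$. I would then invoke a self-normalized concentration inequality (of Abbasi-Yadkori type; cf.\ the matrix-concentration lemma used in \cite{jin2020provably}) to get, for this fixed $V$ and with probability at least $1-\delta'$,
\[
\norm{\sum_{\tau=1}^{\Noff}\hphi_h(s_h^\tau,a_h^\tau)\epsilon_h^\tau(V)}_{\Lambda_h^{-1}}^2\le 2\log\frac{\det(\Lambda_h)^{1/2}}{\det(\lambda I)^{1/2}}+2\log\frac{1}{\delta'}\lesssim d\log(1+\Noff/\lambda)+\log(1/\delta').
\]

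The difficulty is that $\widehat V_{h+1}$ is \emph{data-dependent}, so this fixed-function bound does not apply directly; I would handle this by a covering/union-bound argument. Inspecting lines~\ref{line:alg2-4}--\ref{line:alg2-6} of \Cref{alg: Offline RL}, every $\widehat V_{h+1}$ lies in the parametric class
\[
\mathcal V=\Big\{s\mapsto\max_{a}\min\{r(s,a)+\hphi_h(s,a)^\top w-\xioff-\beta[\hphi_h(s,a)^\top A\,\hphi_h(s,a)]^{1/2},1\}^+:\ \|w\|_2\le L,\ \|A\|\le B_A,\ r\in\Rc\Big\},
\]
where $L=O(\sqrt{d\Noff})$ (from $\widehat w_h=\Lambda_h^{-1}\sum\hphi_h\widehat V_{h+1}$ with $\widehat V_{h+1}\in[0,1]$ and $\lambda=1$) and $A=\beta^2\Lambda_h^{-1}$ has operator norm at most $B_A=O(\beta^2)$. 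I would build a sup-norm net $\mathcal{C}$ of $\mathcal V$ by separately covering $w$ (a Euclidean ball, $\log N\lesssim d\log(L/\varepsilon_{\mathrm{cov}})$), the matrix $A$ (a Frobenius ball, using $|\sqrt{x}-\sqrt{y}|\le\sqrt{|x-y|}$ so the relevant resolution is $\varepsilon_{\mathrm{cov}}^2/\beta^2$, giving $\log N\lesssim d^2\log(B_A\beta^2/\varepsilon_{\mathrm{cov}}^2)$), and the reward $r$ via the covering number $\Nc_{\Rc}(\varepsilon_{\mathrm{cov}})=\widetilde O(\varepsilon_{\mathrm{cov}}^{-p})$ from \Cref{assumption: Linear combination} ($\log N\lesssim p\log(1/\varepsilon_{\mathrm{cov}})$).

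I would then union-bound the fixed-$V$ estimate over $\mathcal{C}$ (replacing $\delta'$ by $\delta/(H|\mathcal{C}|)$, the extra $H$ covering the union over $h\in[H]$) and control the discretization error: if $\bar V$ is the nearest net point to $V$, then $|\epsilon_h^\tau(V)-\epsilon_h^\tau(\bar V)|\le 2\|V-\bar V\|_\infty\le 2\varepsilon_{\mathrm{cov}}$, so the perturbation of the self-normalized norm is at most $2\varepsilon_{\mathrm{cov}}\Noff/\sqrt\lambda=O(1)$ once $\varepsilon_{\mathrm{cov}}=\Theta(1/\Noff)$. With the radii $L=O(\sqrt{d\Noff})$ and $B_A=O(d\Noff\xioff^2)$ entering the logarithms, one gets $\log|\mathcal{C}|\lesssim d^2\iota+p\log\Noff$, and combining the log-det term, the covering term, and the $O(1)$ discretization term yields
\[
\norm{\sum_{\tau=1}^{\Noff}\hphi_h(s_h^\tau,a_h^\tau)\epsilon_h^\tau(\widehat V_{h+1})}_{\Lambda_h^{-1}}^2\lesssim d\log\Noff+d^2\iota+p\log\Noff\lesssim d^2\iota+p\log\Noff,
\]
so taking square roots gives the claimed bound $\tC\,[\,d\sqrt\iota+\sqrt{p\log\Noff}\,]$ (using $\iota\ge\log\Noff$ to absorb the $d\log\Noff$ term).

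The main obstacle I expect is the covering step for $\mathcal V$: the bonus coefficient $\beta$ scales like $\sqrt{d\Noff}\,\xioff$, so both the weight radius $L$ and the matrix radius $B_A=O(\beta^2)$ grow polynomially in $\Noff$, and it is precisely the bookkeeping of these radii against the fine resolution $\varepsilon_{\mathrm{cov}}=\Theta(1/\Noff)$ that produces exactly the factor $\iota=\log(2pdH\Noff\,\xioffm/\delta)$—with $\xioffm$ rather than $\xioff$, so that the logarithm stays well-defined even when the upstream misspecification vanishes. I would emphasize that $\xioff$ does \emph{not} appear in the RHS of this lemma (only inside the logarithm $\iota$), because this step isolates the conditionally mean-zero stochastic noise; the genuinely $\xioff$-dependent (bias) contribution to the Bellman update error is handled separately and combined with this estimate in \Cref{lemma:bellmanerrorbound}.
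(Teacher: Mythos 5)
Your proposal is correct and follows essentially the same route as the paper: the paper likewise treats the term as conditionally mean-zero noise, invokes the uniform self-normalized concentration bound of \cite{jin2020provably} (Lemmas D.3--D.4) together with the covering-number bound for the value-function class (\Cref{lemma: covering number}, covering $w$, the quadratic-form matrix, and $r\in\Rc$ separately), and then chooses $\varepsilon_{\mathrm{cov}}=\Theta(d/\Noff)$ so that the discretization term is absorbed and the total is $O(d^2\iota+p\log\Noff)$ before taking square roots. Your observations that $\xioff$ enters only through the covering radii inside the logarithm and that the bias is deferred to \Cref{lemma:bellmanerrorbound} match the paper exactly.
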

\begin{proof}
Note that our reward functions here are selected from a general function class $\Rc$, not necessarily linear with respect to the feature function $\hphi$. The value function $\widehat{V}_{h+1}$ has the form of
 \begin{align}
V(\cdot):= \min \left\{\max_{a\in\mathcal{A}} w^T\phi(\cdot,a)+r(\cdot,a)+\beta\sqrt{\phi(\cdot,a)^\top\Lambda^{-1}\phi(\cdot,a)},1\right\}
\end{align}
for some $w\in\mathbb{R}^{d}$, $r\in\mathcal{R}$ and positive definite matrix $\Lambda\succeq \lambda I_d$. Let $\mathcal{V}$ be the function class of $V(\cdot)$ and $\mathcal{N}_\varepsilon$ be the $\varepsilon$-covering number of $\mathcal{V}$ with respect to the distance $\mathrm{dist}(V,V')=\sup_s|V(s)-V'(s)|$.

Note that for any $h \in [H], v\in \Rb^d$, we have
\begin{align*}
 	   	\left|v^\top \widehat{w}_h\right| 
 	   	&= \left|v^\top \Lambda_h^{-1}\sum_{\tau=1}^{\Noff}\hphi_h(s_h^\tau,a_h^\tau)\widehat{V}_{h+1}(s_{h+1}^\tau)\right| \\
 	   	& \leq \sum_{\tau=1}^{\Noff}\left|v^\top \Lambda_h^{-1}\hphi_h(s_h^\tau,a_h^\tau)\right| \\
 	   	& {\leq} \sqrt{\left[\sum_{\tau=1}^{\Noff}\norm{v}^2_{\Lambda_h^{-1}}\right]\left[\sum_{\tau=1}^{\Noff}\norm{\hphi_{h}(s_h^\tau,a_h^\tau)}^2_{\Lambda_h^{-1}}\right]}\\
 	   	& \leq \norm{v}_2 \sqrt{d\Noff/\lambda}, 
\end{align*}
where the second inequality follows from Cauchy-Schwarz inequality and the last inequality follows from the fact that $\norm{v}_{\Lambda_h^{-1}}=\norm{\Lambda_h^{-1}}_{\mathrm{op}}^{1/2}\cdot\norm{v}_2\leq \sqrt{1/\lambda}\norm{v}_2$. Here $\norm{\cdot}_{\mathrm{op}}$ is the matrix operator norm and 
\begin{align}
\sum_{\tau=1}^{\Noff}\norm{\widehat{\phi}_h(s_h^\tau,a_h^\tau)}^2_{\Lambda_h^{-1}}=\mathrm{tr}\left(\Lambda_h^{-1}\sum_{\tau=1}^{\Noff}\left(\widehat{\phi}_h(s_h^\tau,a_h^\tau)\widehat{\phi}_h(s_h^\tau,a_h^\tau)^\top\right)\right)\leq \mathrm{tr}(I_d)=d. \label{eqn:offline-new1}
\end{align}
Thus $\norm{\widehat{w}_h}_2=\max_{v:\norm{v}_2=1}	\left|v^\top w_h^n\right|\leq\sqrt{d\Noff/\lambda}$. 

Then, using Lemma D.3, Lemma D.4 in \cite{jin2020provably} and \Cref{lemma: covering number}, we have for any fixed $\varepsilon >0$ that with probability at least $1-\delta$, for all $h\in[H]$:
\begin{align}
 	   	&\norm{\sum_{\tau=1}^{\Noff}\hphi_h(s_h^\tau,a_h^\tau)\left[\widehat{V}_{h+1}(s_{h+1}^\tau)-(P_h^{(\star,T+1)}\widehat{V}_{h+1})(s_{h}^\tau,a_h^\tau)\right]}^2 _{\Lambda_h^{-1}}\nonumber\\
 	   	 &\leq 4\left[\frac{d}{2}\log\left(\frac{\Noff+\lambda}{\lambda}\right)+\log\frac{H\Nc_\varepsilon}{\delta}\right]+\frac{8\Noff^2\varepsilon^2}{\lambda}\nonumber\\
 	   	 &\leq 4\left[\frac{d}{2}\log\left(\frac{\Noff+\lambda}{\lambda}\right)+d\log\left(1+\frac{6\sqrt{d\Noff}}{\varepsilon\sqrt{\lambda}}\right)+d^2\log\left(1+18\frac{d^{1/2}\beta^2}{\varepsilon^2\lambda}\right)\right. \nonumber \\
 	   	 &\qquad\qquad\qquad\qquad \qquad\qquad \qquad\qquad \qquad\qquad \qquad\quad  \left.+\log \Nc_\Rc(\frac{\varepsilon}{3})+\log\frac{H}{\delta}\right]  +\frac{8\Noff^2\varepsilon^2}{\lambda}\nonumber\\
 	   	 & \overset{\RM{1}}{\leq} 4\left[\frac{d}{2}\log\left(\frac{\Noff+\lambda}{\lambda}\right)+d\log\left(1+\frac{6\sqrt{d\Noff}}{\varepsilon\sqrt{\lambda}}\right)+d^2\log\left(1+18\frac{d^{1/2}\beta^2}{\varepsilon^2\lambda}\right) \right. \nonumber \\
 	   	 &\qquad\qquad\qquad\qquad \qquad\qquad \qquad\qquad \qquad\qquad\left.+p\log\left(\frac{3}{\varepsilon}\right)+\log\frac{H}{\delta}\right] +\frac{8\Noff^2\varepsilon^2}{\lambda} \label{ineq: lemma 10 1},
\end{align}
where $\RM{1}$ follows from \Cref{assumption: Linear combination}.

We select the $\varepsilon$-covering number parameters as $R=\sqrt{d\Noff/\lambda}$, $B=\beta$ (see \Cref{lemma: covering number}). Furthermore, we choose $\lambda=1$, $\beta(\delta)=c_\beta \left(d \sqrt{\iota(\delta)}+\sqrt{d\Noff }\xioff+\sqrt{p\log\Noff}\right)$,  $\varepsilon=d/\Noff$ where $\iota(\delta)=\log\left(2pd\Noff H\xioffm/\delta\right)$. Then \Cref{ineq: lemma 10 1} can be bounded by
\begin{align*}
&d\log\left(1+\Noff\right)+d\log\left(1+d^{-1/2}\Noff^{3/2}\right)+p\log (\frac{3\Noff}{d})+\log\frac{H}{\delta} \\ 
&\qquad\qquad +d^2\log(1+d^{-3/2}\Noff^2[\beta(\delta)]^2) \\
&\uset{\sim}{<}d\log\left(\Noff\right)+d\log\left(d^{-1/2}\Noff^{3/2}\right)+p\log (\frac{3\Noff}{d})+\log\frac{H}{\delta} \\
&\qquad\qquad +d^2\log\left( d^{1/2}\iota\Noff^3\xioff^2p^2\right) \\
&\uset{\sim}{<} d^2\iota+p\log {\Noff},
\end{align*}
where the notation $f(x)\uset{\sim}{<} g(x)$ denotes that there exists a universal positive constant c (independent of $x$) such that $f(x)\leq c g(x)$.

Therefore, 
\begin{align*}
&\norm{\sum_{\tau=1}^{\Noff}\hphi_h(s_h^\tau,a_h^\tau)\left[\widehat{V}_{h+1}(s_{h+1}^\tau)-(P_h^{(\star,T+1)}\widehat{V}_{h+1})(s_{h}^\tau,a_h^\tau)\right]}_{\Lambda_h^{-1}}
\uset{\sim}{<}d\sqrt{\iota}+\sqrt{p\log \Noff},
\end{align*}
where we use $\sqrt{x+y}\leq \sqrt{x}+\sqrt{y}$ for all $x,y\geq 0$.
\end{proof}
The following lemma provides our main result which upper-bounds the Bellman update error $|(\mathbb{B}_h\widehat{V}_{h+1}-\widehat{\mathbb{B}}_h\widehat{V}_{h+1})(s,a)|$.
\begin{lemma}\label{lemma:bellmanerrorbound}
Fix $\delta \in (0,1)$. Under the setting of Theorem~\ref{thm:offline}, if we choose $\lambda=1$, $\beta(\delta)=c_\beta \left(d \sqrt{\iota(\delta)}+\right.$ $\left.\sqrt{d\Noff }\xioff+\sqrt{p\log\Noff}\right)$, where $\iota(\delta)=\log\left(2pdH\Noff \xioffm/\delta\right)$, then with probability at least $1-\delta$, the following bound holds:
	\begin{align}
	    \left|(\mathbb{B}_h\widehat{V}_{h+1}-\widehat{\mathbb{B}}_h\widehat{V}_{h+1})(s,a)\right| \leq \beta(\delta)\norm{\widehat{\phi}_h(s,a)}_{\Lambda_h^{-1}}+\xioff. \label{eqn:offline-6}
	\end{align}
\end{lemma}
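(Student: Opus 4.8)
The plan is to cancel the common reward term and reduce the claim to controlling $|(P_h^{(\star,T+1)}\widehat V_{h+1})(s,a)-\widehat\phi_h(s,a)^\top\widehat w_h|$, since both $(\mathbb{B}_h\widehat V_{h+1})(s,a)$ and $(\widehat{\mathbb{B}}_h\widehat V_{h+1})(s,a)$ carry the same $r_h(s,a)$. First I would introduce the population weight $\overline w_h:=\int\widehat\mu_h^\star(s')\widehat V_{h+1}(s')\,ds'$ attached to the $\xioff$-approximate model $\overline P_h$ of \Cref{lemma: Approximate Feature for new task}, so that $(\overline P_h\widehat V_{h+1})(s,a)=\widehat\phi_h(s,a)^\top\overline w_h$ holds exactly. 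Routing $(P_h^{(\star,T+1)}\widehat V_{h+1})(s,a)-\widehat\phi_h(s,a)^\top\widehat w_h$ through this intermediate quantity splits it into an \emph{approximation error} $(P_h^{(\star,T+1)}-\overline P_h)\widehat V_{h+1}$ and an \emph{estimation error} $\widehat\phi_h(s,a)^\top(\overline w_h-\widehat w_h)$. Since $\widehat V_{h+1}\in[0,1]$, the approximation error is at most $\|P_h^{(\star,T+1)}(\cdot|s,a)-\overline P_h(\cdot|s,a)\|_{TV}\le\xioff$ by \Cref{lemma: Approximate Feature for new task}, which is precisely the additive $\xioff$ in \eqref{eqn:offline-6}.

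The core is the estimation error. I would use the normal-equation identity $\Lambda_h^{-1}\sum_\tau\widehat\phi_h(s_h^\tau,a_h^\tau)\widehat\phi_h(s_h^\tau,a_h^\tau)^\top=I_d-\lambda\Lambda_h^{-1}$ to rewrite
\[
\overline w_h-\widehat w_h=\Lambda_h^{-1}\sum_{\tau=1}^{\Noff}\widehat\phi_h(s_h^\tau,a_h^\tau)\Big[(\overline P_h\widehat V_{h+1})(s_h^\tau,a_h^\tau)-\widehat V_{h+1}(s_{h+1}^\tau)\Big]+\lambda\Lambda_h^{-1}\overline w_h,
\]
and then split the bracket at each sample by adding and subtracting $(P_h^{(\star,T+1)}\widehat V_{h+1})(s_h^\tau,a_h^\tau)$. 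This expresses $\overline w_h-\widehat w_h$ as a sum of three pieces: a per-sample approximation term with entries bounded by $\xioff$, a stochastic-noise term $\sum_\tau\widehat\phi_h(s_h^\tau,a_h^\tau)[(P_h^{(\star,T+1)}\widehat V_{h+1})(s_h^\tau,a_h^\tau)-\widehat V_{h+1}(s_{h+1}^\tau)]$, and a ridge term $\lambda\Lambda_h^{-1}\overline w_h$. Applying Cauchy--Schwarz in the $\Lambda_h^{-1}$ inner product factors out the common $\|\widehat\phi_h(s,a)\|_{\Lambda_h^{-1}}$, so it remains to bound the $\Lambda_h^{-1}$-norm of each piece.

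For the three norms I would argue as follows: (i) for the approximation piece, by the triangle inequality, Cauchy--Schwarz, and the trace bound \eqref{eqn:offline-new1}, $\|\sum_\tau\widehat\phi_h(s_h^\tau,a_h^\tau)\,a_\tau\|_{\Lambda_h^{-1}}\le\xioff\sum_\tau\|\widehat\phi_h(s_h^\tau,a_h^\tau)\|_{\Lambda_h^{-1}}\le\xioff\sqrt{\Noff\sum_\tau\|\widehat\phi_h(s_h^\tau,a_h^\tau)\|_{\Lambda_h^{-1}}^2}\le\xioff\sqrt{d\Noff}$; (ii) for the noise piece, \Cref{lemma:offline-term-IV} yields $\widetilde C(d\sqrt\iota+\sqrt{p\log\Noff})$; and (iii) for the ridge piece, $\lambda\|\overline w_h\|_{\Lambda_h^{-1}}\le\sqrt\lambda\,\|\overline w_h\|_2\le C_L\sqrt d$, using $\Lambda_h\succeq\lambda I_d$ together with the measure bound $\|\int\widehat\mu_h^\star(s')g(s')ds'\|_2\le C_L\sqrt d$ of \Cref{lemma: Approximate Feature for new task} with $g=\widehat V_{h+1}\in[0,1]$. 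Summing these and matching them against the three summands of $\beta(\delta)=c_\beta(d\sqrt\iota+\sqrt{d\Noff}\,\xioff+\sqrt{p\log\Noff})$ for a suitable absolute constant $c_\beta$ gives $|\widehat\phi_h(s,a)^\top(\overline w_h-\widehat w_h)|\le\beta(\delta)\|\widehat\phi_h(s,a)\|_{\Lambda_h^{-1}}$; adding the $\xioff$ from the approximation error produces \eqref{eqn:offline-6}.

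I expect the main obstacle to be routing the upstream misspecification $\xioff$ cleanly through the least-squares solution, i.e.\ showing that the per-sample approximation piece scales only as $\sqrt{d\Noff}\,\xioff$ and is absorbed into $\beta(\delta)$ rather than inflating the final bound. This is exactly the departure from the standard linear-MDP analysis of \citet{jin2020provably}, where no representation error is present, and it is what forces the $\sqrt{d\Noff}\,\xioff$ term into $\beta(\delta)$. A secondary technical point is that the stochastic-noise bound of \Cref{lemma:offline-term-IV} must hold uniformly over the data-dependent value class $\mathcal V$, whose covering number incorporates the general (possibly nonlinear) reward class $\Rc$ through $\Nc_{\Rc}(\cdot)$; since that argument is already carried out in the preceding lemma, here I would simply invoke it.
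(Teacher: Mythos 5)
Your proposal is correct and follows essentially the same route as the paper's proof: the same intermediate weight $\hw^\star_h=\int\hmu^\star_h(s')\widehat V_{h+1}(s')\,ds'$, the same split into an approximation error bounded by $\xioff$ via \Cref{lemma: Approximate Feature for new task} and an estimation error $\hphi_h(s,a)^\top(\hw^\star_h-\widehat w_h)$, and the same three-way decomposition of the latter into a ridge term ($\le C_L\sqrt{\lambda d}\,\|\hphi_h(s,a)\|_{\Lambda_h^{-1}}$), a stochastic-noise term controlled by \Cref{lemma:offline-term-IV}, and a per-sample misspecification term of order $\sqrt{d\Noff}\,\xioff\,\|\hphi_h(s,a)\|_{\Lambda_h^{-1}}$ via Cauchy--Schwarz and the trace bound. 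The bookkeeping matching these pieces to the three summands of $\beta(\delta)$ is exactly what the paper does, so no gap remains.
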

\begin{proof}[Proof of \Cref{lemma:bellmanerrorbound}]
 For $h \in [H]$, define $\hw^\star_h=\int_{s'}\widehat{\mu}^\star(s')\widehat{V}_{h+1}(s')ds'$. It is easy to verify that $\widehat{\phi}_h(s,a)^\top \hw^\star_h=(\overline{P}_h\widehat{V}_{h+1})(s,a)$ and $(\overline{B}_h\widehat{V}_{h+1})(s,a)=r_h(s,a)+\widehat{\phi}_h(s,a)^\top \hw^\star_h$. Then we have
\begin{align}
 	&\left|(\mathbb{B}_h\widehat{V}_{h+1}-\widehat{\mathbb{B}}_h\widehat{V}_{h+1})(s,a)\right| \nonumber \\
 	&=\left|(\mathbb{B}_h\widehat{V}_{h+1}-\overline{\mathbb{B}}_h\widehat{V}_{h+1}+\overline{\mathbb{B}}_h\widehat{V}_{h+1}-\widehat{\mathbb{B}}_h\widehat{V}_{h+1})(s,a) \right| \nonumber \\
 	&\leq \left| (P_h^{(\star,T+1)}\widehat{V}_{h+1})(s,a)-(\overline{P}_h\widehat{V}_{h+1})(s,a)\right|+\left|\widehat{\phi}_h(s,a)^\top(\hw^\star_h-\widehat{w}_h)\right| \nonumber \\
 	&\leq \xioff+\left|\widehat{\phi}_h(s,a)^\top(\hw^\star_h-\widehat{w}_h)\right|, \label{eqn:offline-1}
\end{align}
where the last inequality follows because $\left|\widehat{V}_{h+1}(s)\right|\leq 1$ for all $s\in\mathcal{S}$ and from Lemma~\ref{lemma: Approximate Feature for new task}.

 Recall that $\widehat{w}_h=\Lambda_h^{-1}\left(\sum_{\tau=1}^{\Noff}\widehat{\phi}_h(s_h^\tau,a_h^\tau)\widehat{V}_{h+1}(s^\tau_{h+1})\right)$, where $\Lambda_h\hspace{-0.02in}=\sum_{\tau=1}^{\Noff}\widehat{\phi}(s_h,a_h)\widehat{\phi}_h(s_h,a_h)^\top+\hspace{-0.02in}\lambda I_d$. Then the second term in \Cref{eqn:offline-1} can be further decomposed as 
\begin{align}
&\widehat{\phi}_h(s,a)^\top(\hw^\star_h-\widehat{w}_h) \nonumber \\
&=\widehat{\phi}_h(s,a)^\top\hspace{-0.02in}\Lambda_h^{-1}\hspace{-0.02in}\left\{\left(\sum_{\tau=1}^{\Noff}\widehat{\phi}(s_h,a_h)\widehat{\phi}_h(s_h,a_h)^\top\hspace{-0.02in}+\hspace{-0.02in}\hspace{-0.02in}\lambda I_d\right)\hw^\star_h\hspace{-0.02in}-\hspace{-0.02in}\left(\sum_{\tau=1}^{\Noff}\widehat{\phi}_h(s_h^\tau,a_h^\tau)\widehat{V}_{h+1}(s^\tau_{h+1})\right)\right\} \nonumber \\
&=\underbrace{\lambda\widehat{\phi}_h(s,a)^\top\hspace{-0.03in}\Lambda_h^{-1}\hw^\star_h}_{(\mathrm{I})}\hspace{-0.02in}+\hspace{-0.02in}\underbrace{\widehat{\phi}_h(s,a)^\top\hspace{-0.03in}\Lambda^{-1}_h\hspace{-0.03in}\left\{\hspace{-0.02in}\sum_{\tau=1}^{\Noff}\hspace{-0.02in}\widehat{\phi}_h(s_h^\tau,a_h^\tau)\hspace{-0.02in}\left[(P_h^{(*,T+1)}\widehat{V}_{h+1})(s_h^\tau,a_h^\tau)\hspace{-0.02in}-\hspace{-0.02in}\widehat{V}_{h+1}(s_{h+1}^\tau)\right]\right\} }_{(\mathrm{II})} \nonumber \\
&\qquad +\underbrace{\widehat{\phi}_h(s,a)^\top\Lambda_h^{-1}\left\{\sum_{\tau=1}^{\Noff}\widehat{\phi}_h(s_h^\tau,a_h^\tau)\left[(\overline{P}_h\widehat{V}_{h+1}-P_h^{(*,T+1)}\widehat{V}_{h+1})(s_h^\tau,a_h^\tau)\right]\right\}  }_{(\mathrm{III})}. \label{eqn:offline-2}
\end{align}

We next bound the three terms in the above equation individually.

Term (I) is upper-bounded as
\begin{align}
\left|(\mathrm{I})\right|\leq \lambda \norm{w_h}_{\Lambda_h^{-1}}\cdot\norm{\widehat{\phi}_h(s,a)}_{\Lambda_h^{-1}}\leq \sqrt{d\lambda}\norm{\widehat{\phi}_h(s,a)}_{\Lambda_h^{-1}}, \label{eqn:offline-3}
\end{align}
where the first inequality follows from Cauchy-Schwarz inequality and the second inequality follows from the fact that $\norm{w_h}_{\Lambda_h^{-1}}=\norm{\Lambda_h^{-1}}_{\mathrm{op}}^{1/2}\cdot\norm{w_h}_2\leq \sqrt{d/\lambda}$.

Term (II) is upper-bounded as
\begin{align}
|(\mathrm{(II)})|&{\leq} \norm{\widehat{\phi}_h(s,a)}_{\Lambda_h^{-1}}{\norm{\sum_{\tau=1}^{\tau}\widehat{\phi}_h(s_h^\tau,a_h^\tau)\left[(P_h^{(*,T+1)}\widehat{V}_{h+1})(s_h^\tau,a_h^\tau)-\widehat{V}_{h+1}(s_{h+1}^\tau)\right]}_{\Lambda_h^{-1}} } \nonumber\\
& {\leq} \widetilde{C}\left[d\sqrt{\iota}+\sqrt{p\log \Noff}\right]\norm{\widehat{\phi}_h(s,a)}_{\Lambda_h^{-1}}, \label{eqn:offline-new2}
\end{align}
where the first inequality follows from Cauchy-Schwarz inequality and the second inequality follows from \Cref{lemma:offline-term-IV}.

Term (III) is upper-bounded as
\begin{align}
|\mathrm{(III)}|&\leq \left|\widehat{\phi}_h(s,a)^\top\Lambda_h^{-1}\left(\sum_{\tau=1}^{\Noff}\widehat{\phi}_h(s_h^\tau,a_h^\tau)\right)\right|\cdot\xioff \nonumber \\
&\overset{\RM{1}}{\leq} \sum_{\tau=1}^{\Noff}\left|\widehat{\phi}_h(s,a)^\top\Lambda_h^{-1}\widehat{\phi}_h(s_h^\tau,a_h^\tau)\right|\cdot\xioff \nonumber \\
&\overset{\RM{2}}{\leq} \sqrt{\left(\sum_{\tau=1}^{\Noff}\norm{\widehat{\phi}_h(s,a)}^2_{\Lambda_h^{-1}}\right)}\sqrt{\left(\sum_{\tau=1}^{\Noff}\norm{\widehat{\phi}_h(s_h^\tau,a_h^\tau)}_{\Lambda_h^{-1}}^2\right)}\cdot\xioff \nonumber \\
&\overset{\RM{3}}{\leq} \xioff\cdot\sqrt{d\Noff}\norm{\widehat{\phi}_h(s,a)}_{\Lambda_h^{-1}}, \label{eqn:offline-4} 
\end{align}
where $\RM{1}$ follows because $\left|\widehat{V}_{h+1}(s)\right|\leq 1$ for all $s\in\mathcal{S}$ and from Lemma~\ref{lemma: Approximate Feature for new task}, $\RM{2}$ follows from Cauchy-Schwarz inequality, and $\RM{3}$ follows from \Cref{eqn:offline-new1}.

Choosing $\lambda=1$, $\beta(\delta)=c_\beta \left(d \sqrt{\iota(\delta)}+\sqrt{d\Noff }\xioff+\sqrt{p\log \Noff}\right)$, where $\iota(\delta)=\log\left(2pdH\Noff \xioffm/\delta\right)$, and combining \Cref{eqn:offline-1,eqn:offline-2,eqn:offline-3,eqn:offline-4,eqn:offline-new2}, we conclude that with probability at least $1-\delta$, for any $(s,a,h)\in\mathcal{S}\times\mathcal{A}\times[H]$, the following bound holds:
\begin{align}
\left|(\mathbb{B}_h\widehat{V}_{h+1}-\widehat{\mathbb{B}}_h\widehat{V}_{h+1})(s,a)\right| 
\leq \beta(\delta)\norm{\widehat{\phi}_h(s,a)}_{\Lambda_h^{-1}}+\xioff.
\end{align}
\end{proof}

\subsection{Suboptimality gap characterization: proof of \Cref{thm:offline}}\label{app:theorem2step3}

Based on the previous lemmas, we establish the suboptimality gap.

In \Cref{lemma:offline-pessimism}, let $\Gamma_h=\beta\norm{\widehat{\phi}_h(s,a)}_{\Lambda_h^{-1}}+\xioff$. Then \Cref{lemma:offline-pessimism} implies that with probability at least $1-\delta$, 
\begin{align}
V_{P^(*,T+1),r}^*&-V_{P^(*,T+1),r}^{\widehat{\pi}} \nonumber \\
&\leq 2\sum_{h=1}^{H}\mathbb{E}_{\pi^*}[\Gamma_h(s_h,a_h)|s_1=\widetilde{s}_1] \nonumber \\
&\leq 2H\xioff+2\beta\sum_{h=1}^{H}\mathbb{E}_{\pi^*}\left[\norm{\widehat{\phi}_h(s_h,a_h)}_{\Lambda_h^{-1}}\middle|s_1=s\right]. \label{eqn:offline-7}
\end{align}

We next show the second part of \Cref{thm:offline}, which is the suboptimality bound under the feature coverage assumption (see \Cref{assumption:offlinedata}). We first note that Appendix B.4 in \cite{Jin2021IsPP} shows that if $\Noff\geq 40/\kappa_\rho \cdot \log(4dH/\delta)$, then with probability at least $1-\delta/2$, for all $(s,a,h)\in\mathcal{S}\times\mathcal{A}\times[H]$,
\begin{align*}
\norm{\hphi_h(s,a)}_{\Lambda_h^{-1}}\leq \sqrt{\frac{2}{\kappa_\rho}}\cdot\frac{1}{\sqrt{\Noff}}. 
\end{align*}

By selecting $\beta(\delta/2)=c_\beta \left(d \sqrt{\iota(\delta/2)}+\sqrt{d\Noff }\xioff+\sqrt{p\log \Noff}\right)$ (see \Cref{lemma:offline-term-IV}), with probability at least $1-\delta/2$, we have
\begin{align*}
&V_{P^(*,T+1),r}^*-V_{P^(*,T+1),r}^{\widehat{\pi}}\leq 2H\xioff+2\beta\sum_{h=1}^{H}\mathbb{E}_{\pi^*}\left[\norm{\widehat{\phi}_h(s_h,a_h)}_{\Lambda_h^{-1}}\middle|s_1=s\right].
\end{align*}

By a union bound, we have with probability at least $1-\delta$, the following bound holds:
\begin{align}
&V_{P^(*,T+1),r}^*-V_{P^(*,T+1),r}^{\widehat{\pi}} \nonumber \\
&\leq 2H\left(\xioff+\beta(\delta/2)\cdot \sqrt{\frac{2}{\kappa_\rho}}\cdot\frac{1}{\sqrt{\Noff}}\right) \nonumber \\
&=O\left(\kappa_\rho^{-1/2}Hd^{1/2}\xioff+\kappa_\rho^{-1/2}Hd\sqrt{\frac{\log\left(pdH\Noff\xioffm/\delta\right)}{\Noff}}+\kappa_\rho^{-1/2}H\sqrt{\frac{p\log{\Noff}}{\Noff}}\right) .\label{eq:offlinegap1}
\end{align}

Furthermore, if the linear combination misspecification error $\xi$ (\Cref{assumption: Linear combination}) is $\tilde{O}(\sqrt{d}/\sqrt{\Noff})$, and the number of trajectories collected in upstream is as large as $$\widetilde{O}\left({H^3dK^2T\Noff}+\left(H^3d^3K+H^5K^2+H^5d^2K\right)T^2\Noff+\frac{H^5K^3T\Noff}{d^2}\right),$$ then $\xioff$ reduces to $\tilde{O}(\sqrt{d}/\sqrt{\Noff})$ by definition and \Cref{theorem: Upstream sample complexity}. Hence, the suboptimality gap is dominated by the last term in \Cref{eq:offlinegap1} and is given by
\begin{align*}
    \widetilde{O}\left(Hd^{1/2}\Non^{-1/2}\max\{d,\sqrt{p}\}\right).
\end{align*}

\section{Proof of \Cref{theorem: downstream online RL sample complexity}} \label{sec: downstream online}
Recall $\xioff=\xi+\frac{C_LC_RT v\epsilon_u}{\kappa_u}$ and for any $h \in [H]$, we define $P_h^{(\star,T+1)}(\cdot|s,a)=\langle\phi_h^\star(s,a),\mu_h^{(\star,T+1)}(\cdot)\rangle,
\overline{P}_h(\cdot|s,a)=\langle\widehat{\phi}_h(s,a),\widehat{\mu}^*_h(\cdot)\rangle,\widehat{P}_h(\cdot|s,a)=\langle\widehat{\phi}_h(s,a),\widehat{\mu}_h(\cdot)\rangle.$ For any function $f:\mathcal{S}\mapsto \mathbb{R}$ and $h\in[H]$, define 
 	\begin{align*}
 	&P_h^{(\star,T+1)}f(s,a)=\int_{s'}\langle\phi_h^\star(s,a),\mu_h^{(\star,T+1)}(s')\rangle f(s')ds', \\
 	&(\overline{P}_hf)(s,a)=\int_{s'}\widehat{\phi}(s,a)\widehat{\mu}^\star(s')f(s')ds', \\
 	&(\widehat{P}_hf)(s,a)=\int_{s'}\widehat{\phi}(s,a)\widehat{\mu}(s')f(s')ds'.
 	\end{align*}

Throughout this section, denote $\pi^n$ as the greedy policy induced by $\{Q^n_h\}_{h=1}^H$, and note that $\lambda_h^n, w_h^n, Q_h^n, V_h^n$ are defined in \Cref{alg: Online RL}. We further remark that the expectation (for example: $V_h^\pi(s)$) is taken with respect to the transition kernel of the target task, i.e., $P^{(\star,T+1)}$.

{\bf Proof Overview:} The proof of \Cref{theorem: downstream online RL sample complexity} consists of two main steps and a final suboptimality gap analysis. {\bf Step 1:} We bound the difference between the estimated action value function $r_h(s,a)+\langle \hphi_h(s,a),w_h^n\rangle$ in \Cref{alg: Online RL} and the true action value function $Q_h^\pi(s,a)$ under a certain policy $\pi$ recursively as shown in \Cref{lemma: model free simulation}. {\bf Step 2:} We prove the estimated action value function $Q_h^n$ in \Cref{alg: Online RL} is near-optimistic with respect to the optimal true action value function over steps as shown in \Cref{lemma: UCB}. Our main technical contribution lies in capturing the impact of the misspecification of the representation taken from upstream learning on these two steps. {\bf Suboptimality gap analysis:} Based on the first two steps, we first decompose the value function difference recursively, and then obtain a final suboptimality gap.

\subsection{Bounding the action value function difference}
Following the proof similar to that for \Cref{lemma:offline-term-IV}, we introduce the concentration lemma for online RL that upper-bounds the stochastic noise in
regression.	 
\begin{lemma}\label{lemma:online-concentration}
Fix $\delta \in (0,1)$. Under the setting of Theorem~\ref{theorem: downstream online RL sample complexity}, we choose $\lambda=1$, $\beta_n=c_\beta \left(d \sqrt{\iota_n}+\sqrt{dn }\xioff\right.$ $\left.  +\sqrt{p\log{n}}\right)$, where $\iota_n=\log\left(2pdHn \xioffm/\delta\right)$. Then, there exists an absolute constant $\widetilde{C}$ such that with probability at least $1-\delta/2$,  the following inequality holds for any $n \in [N_{on}], h\in[H]$:
\begin{align*}
&\norm{\sum_{\tau=1}^{n-1}\widehat{\phi}_h(s_h^\tau,a_h^\tau)\left[(P_h^{(*,T+1)}{V}^n_{h+1})(s_h^\tau,a_h^\tau)-{V}^n_{h+1}(s_{h+1}^\tau)\right]}_{\Lambda_h^{-1}} \leq \widetilde{C}\left[d\sqrt{\iota_n}+\sqrt{p\log {n}}\right].
\end{align*}
\end{lemma}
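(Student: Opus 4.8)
The plan is to mirror the proof of \Cref{lemma:offline-term-IV}, replacing the fixed offline dataset with the sequentially collected online trajectories and upgrading the concentration step to accommodate the resulting temporal dependence. First I would observe that, by line \ref{line: online-7} of \Cref{alg: Online RL}, the value function $V_{h+1}^n$ belongs to the parametric class $\mathcal{V}$ of functions of the form
\[
V(\cdot)=\min\Big\{\max_{a}\, r(\cdot,a)+w^\top\hphi_{h+1}(\cdot,a)+\beta_n\norm{\hphi_{h+1}(\cdot,a)}_{\Lambda^{-1}},\,1\Big\},
\]
parameterized by $w\in\Rb^d$, $r\in\Rc$, and a positive definite $\Lambda\succeq\lambda I_d$, where $\hphi_{h+1}$ is the fixed feature from upstream. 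Exactly as in the offline proof---using Cauchy--Schwarz together with $\sum_{\tau=1}^{n-1}\norm{\hphi_h(s_h^\tau,a_h^\tau)}_{(\Lambda_h^n)^{-1}}^2=\mathrm{tr}\big((\Lambda_h^n)^{-1}\sum_{\tau}\hphi_h\hphi_h^\top\big)\leq d$---I would bound $\norm{w_h^n}_2\leq\sqrt{dn/\lambda}$, which fixes the radius $R=\sqrt{dn/\lambda}$ of the parameter ball and the bonus level $B=\beta_n$ for the covering-number estimate.

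The central step is a uniform self-normalized concentration over $\mathcal{V}$. For any \emph{fixed} $V\in\mathcal{V}$, the increments $\eta_\tau:=(P_h^{(\star,T+1)}V)(s_h^\tau,a_h^\tau)-V(s_{h+1}^\tau)$ form a martingale difference sequence with respect to the filtration generated by the history up to $(s_h^\tau,a_h^\tau)$, since $s_{h+1}^\tau\sim P_h^{(\star,T+1)}(\cdot|s_h^\tau,a_h^\tau)$ is the fresh randomness and $|\eta_\tau|\leq 1$. I would therefore apply the self-normalized martingale tail bound (Lemma~D.3 and Lemma~D.4 of \cite{jin2020provably}), which remains valid for adaptively collected data, to each fixed $V$ in an $\varepsilon$-net of $\mathcal{V}$, take a union bound over the net, and control the discretization error through $\sup_s|V_{h+1}^n(s)-V(s)|\leq\varepsilon$ together with the trace bound above. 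This yields, for all $h\in[H]$ simultaneously with probability at least $1-\delta/2$,
\[
\norm{\sum_{\tau=1}^{n-1}\hphi_h(s_h^\tau,a_h^\tau)\,\eta_\tau}_{(\Lambda_h^n)^{-1}}^2\lesssim d\log\tfrac{n+\lambda}{\lambda}+\log\tfrac{H\Nc_\varepsilon}{\delta}+\tfrac{n^2\varepsilon^2}{\lambda}.
\]

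To finish I would invoke the covering-number lemma (\Cref{lemma: covering number}) to bound $\log\Nc_\varepsilon\lesssim d\log(1+\tfrac{R}{\varepsilon})+d^2\log(1+\tfrac{d^{1/2}\beta_n^2}{\varepsilon^2\lambda})+\log\Nc_\Rc(\tfrac{\varepsilon}{3})$, then use \Cref{assumption: Linear combination} to replace $\log\Nc_\Rc(\varepsilon/3)\lesssim p\log(3/\varepsilon)$. Substituting $\lambda=1$, $R=\sqrt{dn}$, $B=\beta_n$ and choosing $\varepsilon=d/n$ exactly as in the offline case, every logarithmic factor collapses into $\iota_n=\log(2pdHn\xioffm/\delta)$, so the right-hand side is $\lesssim d^2\iota_n+p\log n$; taking square roots via $\sqrt{x+y}\leq\sqrt{x}+\sqrt{y}$ delivers the claimed bound $\widetilde{C}(d\sqrt{\iota_n}+\sqrt{p\log n})$. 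The main obstacle is precisely the temporal dependence absent from the offline setting: $V_{h+1}^n$ is itself a function of the very data appearing in the sum, so the martingale bound cannot be applied to it directly. The covering argument is what decouples this dependence---one applies the tail bound only to the data-independent net points and pays a union-bound plus discretization cost---and the crux distinguishing this proof from its offline analogue is verifying that the self-normalized inequality tolerates the adaptive online data collection, which holds because each $\eta_\tau$ is conditionally mean-zero and bounded given the history.
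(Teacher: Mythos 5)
Your proposal is correct and follows essentially the same route as the paper: the paper's own proof of this lemma is just the remark that it ``follows the proof similar to'' \Cref{lemma:offline-term-IV}, and you have reproduced exactly that argument (parametric form of $V_{h+1}^n$, bound $\norm{w_h^n}_2\leq\sqrt{dn/\lambda}$, self-normalized martingale bound over an $\varepsilon$-net via Lemma D.3/D.4 of \cite{jin2020provably} and \Cref{lemma: covering number}, with $\varepsilon=d/n$, $R=\sqrt{dn}$, $B=\beta_n$) with the correct online adaptations. Your closing observation---that the covering argument is what decouples the dependence of $V_{h+1}^n$ on the data and that the self-normalized bound tolerates adaptive collection because each increment is conditionally mean-zero and bounded---is precisely the point that justifies transporting the offline proof to the online setting.
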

    \begin{lemma} \label{lemma: model free simulation}
    	Fix $\delta \in (0,1)$. There exists a constant $c_\beta$ such that for $\beta_n=c_\beta \left(d \sqrt{\iota_n}+\sqrt{nd}\xioff+\sqrt{p\log {n}}\right)$ where $\iota_n=\log\left(2dnH\xioff/\delta\right)$, and for any policy $\pi$, with probability at least $1-\delta/2$, we have for any $s\in \Sc, a \in \Ac, h \in [H], n \in [N_{on}]$ that:
        \begin{align*}
        \left(r_h(s,a)+\langle \hphi_h(s,a),w_h^n\rangle\right)-Q_h^\pi(s,a)=P_h^{(\star,T+1)}(V_{h+1}^n-V_{h+1}^\pi)(s,a)+\Delta_h^k(s,a),
        \end{align*}
        for some $\Delta_h^n(s,a)$ that satisfies $\|\Delta_h^k(s,a)\| \leq \beta_n \norm{\hphi_h(s,a)}_{(\Lambda_h^n)^{-1}}+2\xioff$.
    \end{lemma}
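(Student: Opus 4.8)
The plan is to peel off the reward, reduce the statement to controlling a Bellman update error for the \emph{online} iterates, and then reuse the three-term decomposition already developed for the offline analysis. Since the estimated and true action values share the same reward $r_h(s,a)$, and the Bellman equation gives $Q_h^\pi(s,a)=r_h(s,a)+(P_h^{(\star,T+1)}V_{h+1}^\pi)(s,a)$, subtracting yields $\left(r_h(s,a)+\langle\hphi_h(s,a),w_h^n\rangle\right)-Q_h^\pi(s,a)=\langle\hphi_h(s,a),w_h^n\rangle-(P_h^{(\star,T+1)}V_{h+1}^\pi)(s,a)$. Adding and subtracting $(P_h^{(\star,T+1)}V_{h+1}^n)(s,a)$ then produces exactly the claimed identity with $\Delta_h^n(s,a):=\langle\hphi_h(s,a),w_h^n\rangle-(P_h^{(\star,T+1)}V_{h+1}^n)(s,a)$. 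This choice is forced, so the entire lemma reduces to the deterministic estimate $|\Delta_h^n(s,a)|\leq\beta_n\norm{\hphi_h(s,a)}_{(\Lambda_h^n)^{-1}}+2\xioff$.

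To bound $\Delta_h^n$ I would mirror the proof of \Cref{lemma:bellmanerrorbound}. Introduce the ``best'' weight $\bar w_h^n:=\int_{s'}\hmu_h^\star(s')V_{h+1}^n(s')ds'$, so that $\langle\hphi_h(s,a),\bar w_h^n\rangle=(\overline{P}_hV_{h+1}^n)(s,a)$. Writing $\Delta_h^n(s,a)=\langle\hphi_h(s,a),w_h^n-\bar w_h^n\rangle+\big[(\overline{P}_hV_{h+1}^n)(s,a)-(P_h^{(\star,T+1)}V_{h+1}^n)(s,a)\big]$, the second bracket is at most $\xioff$ by \Cref{lemma: Approximate Feature for new task} together with $|V_{h+1}^n|\leq1$ (the latter from the clipping in line~\ref{line: online-7} of \Cref{alg: Online RL}). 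For the first bracket, substituting $w_h^n=(\Lambda_h^n)^{-1}\sum_{\tau<n}\hphi_h(s_h^\tau,a_h^\tau)V_{h+1}^n(s_{h+1}^\tau)$ and $\Lambda_h^n\bar w_h^n=\sum_{\tau<n}\hphi_h(s_h^\tau,a_h^\tau)(\overline{P}_hV_{h+1}^n)(s_h^\tau,a_h^\tau)+\lambda\bar w_h^n$ gives the familiar split into three pieces, obtained by inserting $(P_h^{(\star,T+1)}V_{h+1}^n)(s_h^\tau,a_h^\tau)$ inside the sum.

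Each piece is then controlled exactly as in the offline case, with $\Noff$ replaced by $n-1$ and $\Lambda_h$ by $\Lambda_h^n$: the regularization term $-\lambda\hphi_h(s,a)^\top(\Lambda_h^n)^{-1}\bar w_h^n$ is bounded by $C_L\sqrt{\lambda d}\,\norm{\hphi_h(s,a)}_{(\Lambda_h^n)^{-1}}$ via Cauchy--Schwarz and the norm bound $\norm{\bar w_h^n}_2\leq C_L\sqrt{d}$ from \Cref{lemma: Approximate Feature for new task}; the regression term is bounded by $\norm{\hphi_h(s,a)}_{(\Lambda_h^n)^{-1}}\cdot\widetilde{C}(d\sqrt{\iota_n}+\sqrt{p\log n})$ using Cauchy--Schwarz and the online self-normalized concentration bound \Cref{lemma:online-concentration}; and the feature-misspecification term is bounded by $\xioff\sqrt{dn}\,\norm{\hphi_h(s,a)}_{(\Lambda_h^n)^{-1}}$ by two applications of Cauchy--Schwarz together with $\sum_{\tau<n}\norm{\hphi_h(s_h^\tau,a_h^\tau)}_{(\Lambda_h^n)^{-1}}^2\leq d$. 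Summing these and choosing $\lambda=1$ with $c_\beta$ large enough to absorb $C_L,\widetilde{C}$ collapses the three contributions into $\beta_n\norm{\hphi_h(s,a)}_{(\Lambda_h^n)^{-1}}$ for $\beta_n=c_\beta(d\sqrt{\iota_n}+\sqrt{dn}\xioff+\sqrt{p\log n})$; adding the $\xioff$ from the approximation bracket (and relaxing to $2\xioff$) gives the claim, the union bound over $h\in[H]$ and $n\in[\Non]$ being already absorbed in \Cref{lemma:online-concentration}.

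The main obstacle is the regression term: because the iterate $V_{h+1}^n$ depends on the same trajectories $\{(s_h^\tau,a_h^\tau,s_{h+1}^\tau)\}_{\tau<n}$ used to form $\Lambda_h^n$ and $w_h^n$, the summands are not a martingale difference sequence for a fixed test function, so the noise must be controlled uniformly over a random, data-dependent value-function class. This is precisely what \Cref{lemma:online-concentration} delivers through an $\varepsilon$-net over the optimistic value functions of the form $\min\{\max_a r+\hphi^\top w+\beta\norm{\hphi}_{(\Lambda)^{-1}},1\}$, and the only genuinely new bookkeeping relative to \cite{jin2020provably} is that the cover must simultaneously absorb the general reward class $\Rc$ (yielding the $\sqrt{p\log n}$ term) and the upstream misspecification $\xioff$ (yielding the $\sqrt{dn}\xioff$ term in $\beta_n$); since both are already packaged inside \Cref{lemma:online-concentration}, conditioning on its high-probability event reduces everything else to the deterministic calculation above.
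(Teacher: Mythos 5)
Your proposal is correct, and it reaches the bound through the same three essential estimates as the paper (the $C_L\sqrt{\lambda d}$ regularization term, the self-normalized concentration of \Cref{lemma:online-concentration}, and the $\xioff\sqrt{dn}$ misspecification term via Cauchy--Schwarz and the trace bound $\sum_{\tau<n}\norm{\hphi_h(s_h^\tau,a_h^\tau)}_{(\Lambda_h^n)^{-1}}^2\leq d$). The organization of the decomposition, however, is genuinely different. The paper anchors its comparison at $w_h^\pi=\int V_{h+1}^\pi(s')\hmu_h^\star(s')\,ds'$, i.e.\ at the policy $\pi$ being compared against, and the term $P_h^{(\star,T+1)}(V_{h+1}^n-V_{h+1}^\pi)(s,a)$ only emerges later, from the sub-term $(a)$ inside its piece $\URM{3}$, via the chain $\hphi_h(s,a)^\top\int(V_{h+1}^n-V_{h+1}^\pi)\hmu_h^\star=\overline{P}_h(V_{h+1}^n-V_{h+1}^\pi)(s,a)\leq P_h^{(\star,T+1)}(V_{h+1}^n-V_{h+1}^\pi)(s,a)+\xioff$. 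You instead set $\Delta_h^n(s,a):=\langle\hphi_h(s,a),w_h^n\rangle-(P_h^{(\star,T+1)}V_{h+1}^n)(s,a)$ up front, which makes the claimed identity hold exactly by pure algebra and reduces the whole lemma to a Bellman-update-error bound for the single function $V_{h+1}^n$ --- structurally identical to the offline \Cref{lemma:bellmanerrorbound} with $\Noff$ replaced by $n-1$. Your route is arguably cleaner: it makes the identity exact rather than extracting the transition-difference term through an inequality, it avoids carrying $V_{h+1}^\pi$ through the regression decomposition, and it makes the reuse of the offline machinery explicit; the paper's route keeps the online and offline proofs superficially parallel in notation but pays for it with the extra sub-terms $(a)$ and $(b)$ inside $\URM{3}$. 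One small caveat common to both arguments: the normalization $\norm{\int\hmu_h^\star(s)g(s)\,ds}_2\leq C_L\sqrt{d}$ from \Cref{lemma: Approximate Feature for new task} is stated for $g:\Sc\to[0,1]$, and applying it to $\bar w_h^n$ (your version) or to $\int(V_{h+1}^n-V_{h+1}^\pi)\hmu_h^\star$ (the paper's version) requires the iterates to stay in the appropriate range; this costs at most a constant factor absorbed into $c_\beta$, so it does not affect correctness.
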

    \begin{proof}
        For policy $\pi$, define $w_h^\pi=\int V_{h+1}^\pi(s^\prime)\hmu^{\star}(s^\prime)ds^\prime$. Hence, $\langle\hphi_h(s,a),w_h^\pi\rangle=\overline{P}_hV_{h+1}^\pi(s,a)$ and $\norm{w_h^\pi}_2 \leq C_L\sqrt{d}$ by \Cref{lemma: Approximate Feature for new task}. These facts further yield that for any $s \in \Sc, a \in \Ac, h \in [H]$:
        \begin{align*}
         &\left|Q_h^\pi(s,a)-\left(r_h(s,a)+\langle\hphi_h(s,a),w_h^\pi\rangle\right)\right| = \left|P_h^{(\star,T+1)}V_{h+1}^\pi(s,a)-\overline{P}_h V_{h+1}^\pi(s,a)\right|
         \leq \xioff,
        \end{align*}
        where the last inequality follows from \Cref{lemma: Approximate Feature for new task}.
        
        Then, we further derive
        \begin{align}
        &\left(r_h(s,a)+\langle \hphi_h(s,a),w_h^n\rangle\right)-Q_h^\pi(s,a)\nonumber\\
        &\quad =\left(r_h(s,a)+\langle \hphi_h(s,a),w_h^n\rangle\right)-\left(r_h(s,a)+\langle\hphi_h(s,a),w_h^\pi\rangle\right)+\left(r_h(s,a)+\langle\hphi_h(s,a),w_h^\pi\rangle\right)-Q_h^\pi(s,a)\nonumber\\
        &\quad \leq  \langle\hphi_h(s,a),w_h^n\rangle-\langle\hphi_h(s,a),w_h^\pi\rangle + \left|\left(r_h(s,a)+\langle\hphi_h(s,a),w_h^\pi\rangle\right)-Q_h^\pi(s,a)\right|.\label{ineq: initial RHS}
        \end{align}
        The first term can be bounded by 
        \begin{align*}
        &\langle\hphi_h(s,a),w_h^n\rangle-\langle\hphi_h(s,a),w_h^\pi\rangle\\
        & \quad =\hphi_h(s,a)^\top(\Lambda_h^n)^{-1}\sum_{\tau=1}^{n-1}\hphi(s_h^\tau,a_h^\tau)V_{h+1}^n(s_{h+1}^\tau)-\hphi_h(s,a)^\top w_h^\pi\\
         & \quad =\hphi_h(s,a)^\top(\Lambda_h^n)^{-1}\left\{\sum_{\tau=1}^{n-1}\hphi(s_h^\tau,a_h^\tau)V_{h+1}^n(s_{h+1}^\tau)-\lambda w_h^\pi-\sum_{\tau=1}^{n-1}\hphi(s_h^\tau,a_h^\tau)\overline{P}_h V_{h+1}^\pi\right\}\\
        & \quad =\underbrace{-\lambda\hphi_h(s,a)^\top(\Lambda_h^n)^{-1} w_h^\pi}_{\URM{1}}+\underbrace{\hphi_h(s,a)^\top(\Lambda_h^n)^{-1}\left\{\sum_{\tau=1}^{n-1}\hphi(s_h^\tau,a_h^\tau)\left[V_{h+1}^n(s_{h+1}^\tau)-P_h^{(\star,T+1)}V_{h+1}^n(s_{h}^\tau,a_h^\tau)\right]\right\}}_{\URM{2}}\\
        & \qquad +\underbrace{\hphi_h(s,a)^\top(\Lambda_h^n)^{-1}\left\{\sum_{\tau=1}^{n-1}\hphi(s_h^\tau,a_h^\tau)\overline{P}_h \left(V_{h+1}^n-V_{h+1}^\pi\right)(s_h^\tau,a_h^\tau)\right\}}_{\URM{3}}\\
        & \qquad +\underbrace{\hphi_h(s,a)^\top(\Lambda_h^n)^{-1}\left\{\sum_{\tau=1}^{n-1}\hphi(s_h^\tau,a_h^\tau)\left(P_h^{(\star,T+1)}-\overline{P}_h\right) V_{h+1}^n(s_{h}^\tau,a_h^\tau)\right\}}_{\URM{4}}.
        \end{align*}
        We next bound the above four terms individually. 
        
        For $\URM{1}$, we derive the following bound:
        \begin{align}
        	|\URM{1}| \leq \norm{\hphi_h(s,a)}_{(\Lambda_h^n)^{-1}}\norm{\lambda w_h^\pi}_{(\Lambda_h^n)^{-1}}\leq \sqrt{\lambda}\norm{w_h^\pi}_2 \norm{\hphi_h(s,a)}_{(\Lambda_h^n)^{-1}}=C_L\sqrt{\lambda d}\norm{\hphi_h(s,a)}_{(\Lambda_h^n)^{-1}} \label{ineq: URM1}. 
        \end{align}
        
        For $\URM{2}$, by \Cref{lemma:online-concentration}, we have
        \begin{align*}
        	&|\URM{2}| \\
        	& \quad\leq \norm{\hphi_h(s,a)}_{(\Lambda_h^n)^{-1}}\norm{\sum_{\tau=1}^{n-1}\hphi_h(s_h^\tau,a_h^\tau)\left[V_{h+1}^n(s_{h+1}^\tau)-P_h^{(\star,T+1)}V_{h+1}^n(s_{h}^\tau,a_h^\tau)\right]}_{(\Lambda_h^n)^{-1}} \\
        	& \quad\leq \left(\widetilde{C}d\sqrt{\iota_n}+\sqrt{p\log n}\right)\norm{\hphi_h(s,a)}_{(\Lambda_h^n)^{-1}}.
        \end{align*}
        
        For $\URM{3}$, we have
        \begin{align*}
        |\URM{3}| &\leq \left|\hphi_h(s,a)^\top(\Lambda_h^n)^{-1}\left\{\sum_{\tau=1}^{n-1}\hphi(s_h^\tau,a_h^\tau)\hphi(s_h^\tau,a_h^\tau)^\top \int \left(V_{h+1}^n-V_{h+1}^\pi\right)(s^\prime)\hmu_h^\star(s^\prime) ds^\prime\right\}\right|\\
        & \leq \left|\hphi_h(s,a)^\top(\Lambda_h^n)^{-1}\left(\Lambda_h^n-\lambda I\right)\int \left(V_{h+1}^n-V_{h+1}^\pi\right)(s^\prime)\hmu_h^\star(s^\prime) ds^\prime\right|\\
        & = \underbrace{\left|\hphi_h(s,a)^\top\int \left(V_{h+1}^n-V_{h+1}^\pi\right)(s^\prime)\hmu_h^\star(s^\prime) ds^\prime\right|}_{(a)}+
        \underbrace{\left|\lambda\hphi_h(s,a)^\top(\Lambda_h^n)^{-1}\int \left(V_{h+1}^n-V_{h+1}^\pi\right)(s^\prime)\hmu_h^\star(s^\prime) ds^\prime\right|}_{(b)}.
        \end{align*}
        
        For term $(a)$, we have
        \begin{align*}
        	(a)&
        	=\hphi_h(s,a)^\top\int \left(V_{h+1}^n-V_{h+1}^\pi\right)(s^\prime)\hmu_h^\star(s^\prime) ds^\prime\\
        	&= \overline{P}_h\left(V_{h+1}^n-V_{h+1}^\pi\right)(s,a)\\
        	&\leq P^{(\star,T+1)}\left(V_{h+1}^n-V_{h+1}^\pi\right)(s,a)+\xioff,
        \end{align*}
        where the last inequality follows from \Cref{lemma: Approximate Feature for new task}.
        
        For term (b), similarly to \Cref{ineq: URM1}, we have
        \begin{align*}
        	(b) \leq C_L\sqrt{\lambda d}\norm{\hphi_h(s,a)}_{(\Lambda_h^n)^{-1}}.
        \end{align*}
        
         For $\URM{4}$, we derive
        \begin{align*}
        |\URM{4}| &\leq \left|\hphi_h(s,a)^\top(\Lambda_h^n)^{-1}\left\{\sum_{\tau=1}^{n-1}\hphi_h(s_h^\tau,a_h^\tau)\right\}\right|\xioff\\
        & \leq \sum_{\tau=1}^{n-1}\left|\hphi_h(s,a)^\top(\Lambda_h^n)^{-1}\hphi_h(s_h^\tau,a_h^\tau)\right|\xioff\\
        & \overset{\RM{1}}{\leq} \sqrt{\left[\sum_{\tau=1}^{n-1}\norm{\hphi_{h}(s,a)}^2_{(\Lambda_h^n)^{-1}}\right]\left[\sum_{\tau=1}^{n-1}\norm{\hphi_{h}(s_h^\tau,a_h^\tau)}^2_{(\Lambda_h^n)^{-1}}\right]}\xioff\\
        & \overset{\RM{2}}{\leq} \xioff\sqrt{dn}\norm{\hphi_{h}(s,a)}_{(\Lambda_h^n)^{-1}},
        \end{align*}
        where $\RM{1}$ follows from Cauchy-Schwarz inequality and $\RM{2}$ follows because
        \begin{align*}        	\sum_{\tau=1}^{n-1}\norm{\hphi_{h}(s_h^\tau,a_h^\tau)}^2_{(\Lambda_h^n)^{-1}}={\rm tr}\left((\Lambda_h^n)^{-1}\sum_{\tau=1}^{n-1}\left(\hphi_{h}(s_h^\tau,a_h^\tau)\hphi_{h}(s_h^\tau,a_h^\tau)^\top\right)\right)\leq {\rm tr}(I_d)=d.
        \end{align*}
 Substituting the bounds on $\URM{1},\URM{2},\URM{3},\URM{4}$ into \Cref{ineq: initial RHS}, we finish the proof.
    \end{proof}
 \subsection{Proving optimism of value function}   
    \begin{lemma}\label{lemma: UCB}
        Under the setting of \Cref{theorem: downstream online RL sample complexity}, with probability at least $1-\delta/2$, for any $s \in \Sc,a \in \Ac, h\in[H], n\in[N_{on}]$, we have 
        \begin{align}
            Q_h^n(s,a) \geq Q^\star_h(s,a)-2(H-h+1)\xioff. \label{ineq: near optimistic}
        \end{align}
    \end{lemma}
    \begin{proof}
    We prove this lemma by induction.
    First, for step $H$, by \Cref{lemma: model free simulation}, we have 
    \begin{align*}
        &\left|\left(r_H(s,a)+\langle \hphi_H(s,a),w_H^n\rangle\right)-Q_H^\pi(s,a)\right|\\
        &\quad=\left|P_H^{(\star,T+1)}(V_{H+1}^n-V_{H+1}^\pi)(s,a)+\Delta_H^k(s,a)\right|\\
        &\quad  \leq \beta_n \norm{\hphi_H(s,a)}_{(\Lambda_H^n)^{-1}}+2\xioff.
    \end{align*}
    Thus, 
    \begin{align*}
        Q_H^n(s,a)&= \min\left\{r_H(\cdot,\cdot)+\langle\hphi(\cdot,\cdot),w_H^n\rangle+\beta_n\norm{\hphi(\cdot,\cdot)}_{(\Lambda_H^n)^{-1}},1\right\}\\
        &  \geq Q^\star_H(s,a)-2\xioff.
    \end{align*}
    
    Suppose \Cref{ineq: near optimistic} holds for step $h+1$. Then for step $h$, following from \Cref{lemma: model free simulation}, we have:
    \begin{align*}
        \Big(r_h(s,a)&+\langle \hphi_h(s,a),w_h^n\rangle\Big)-Q_h^\star(s,a)\\
        & = \Delta_h^n(s,a)+P_h^{(\star,T+1)}(V_{h+1}^n-V_{h+1}^\star)(s,a)\\
        &  \geq -\beta_n \norm{\hphi_h(s,a)}_{(\Lambda_h^n)^{-1}}-2\xioff-2(H-h)\xioff \\
        &  \geq -\beta_n \norm{\hphi_h(s,a)}_{(\Lambda_h^n)^{-1}}-2(H-h+1)\xioff. 
    \end{align*}
    
    Therefore,
    \begin{align*}
    Q_h^n(s,a)&= \min\left\{r_h(\cdot,\cdot)+\langle\hphi(\cdot,\cdot),w_h^n\rangle+\beta_n\norm{\hphi(\cdot,\cdot)}_{(\Lambda_h^n)^{-1}},1\right\}\\
    & \geq Q^\star_h(s,a)-2(H-h+1)\xioff,
    \end{align*}
    which finishes the proof.
    \end{proof}
\subsection{Suboptimality gap: proof of \Cref{theorem: downstream online RL sample complexity}}
Before proving \Cref{theorem: downstream online RL sample complexity}, we introduce the following lemma to decompose the value function difference recursively. 
    \begin{lemma}\label{lemma: recursive formula}
    	Fix $\delta \in (0,1)$. Let $\delta_h^n=V_h^n(s_h^n)-V_h^{\pi^n}(s_h^n)$ and $\xi_{h+1}^n=\Eb\left[\delta_{h+1}^n|s_h^n,a_h^n\right]-\delta_{h+1}^n$. Then, with probability at least $1-\delta/2$, for $h \in[H], n\in [N_{on}]$:
    	\begin{align*}
    		\delta_{h}^n \leq \delta_{h+1}^n+ \xi_{h+1}^n+2\beta_n\norm{\hphi_{h}(s_h^n,a_h^n)}_{(\Lambda_h^n)^{-1}}+2\xioff.
    	\end{align*}
    	\begin{proof}
    		By \Cref{lemma: model free simulation}, with probability at least $1-\delta/2$, for any $s\in \Sc, a \in \Ac, h \in [H], n \in [N_{on}]$, we have
    		\begin{align*}
    		&Q_h^n(s,a)-Q_h^{\pi^n}(s,a)\\
    		& \quad = \Delta_h^n(s,a)+P_h^{(\star,T+1)}(V_{h+1}^n-V_{h+1}^{\pi^n})(s,a)\\
    		& \quad \leq \beta_n \norm{\hphi_h(s,a)}_{(\Lambda_h^n)^{-1}}+2\xioff+P_h^{(\star,T+1)}(V_{h+1}^n-V_{h+1}^{\pi^n})(s,a). 
    		\end{align*}
    		By the definition of $\pi^n$ in \Cref{alg: Online RL}, we have $\pi^n(s_h^n)=a_h^n={\arg\max}_{a\in \Ac}Q_h^n(s_h,a)$. Then $Q_h^n(s_h^n,a_h^n)-Q_h^{\pi^n}(s_h^n,a_h^n)=V_h^n(s_h^n)-V_h^{\pi^n}(s_h^n)=\delta_{h}^n$. Thus,
    		\begin{align*}
    			\delta_{h}^n \leq \delta_{h+1}^n+ \xi_{h+1}^n+2\beta_n\norm{\hphi_{h}(s_h^n,a_h^n)}_{(\Lambda_h^n)^{-1}}+2\xioff.
    		\end{align*}
    	\end{proof}
    \end{lemma}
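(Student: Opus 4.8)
The plan is to combine the one-step action-value decomposition of \Cref{lemma: model free simulation} (applied to the greedy policy $\pi^n$) with the optimistic truncation built into $Q_h^n$, and then to peel off one layer of the horizon using the martingale-difference definition of $\xi_{h+1}^n$. First I would invoke \Cref{lemma: model free simulation} with $\pi=\pi^n$, which holds on an event of probability at least $1-\delta/2$ and gives, for every $(s,a,h,n)$, the identity $(r_h(s,a)+\langle\hphi_h(s,a),w_h^n\rangle)-Q_h^{\pi^n}(s,a)=P_h^{(\star,T+1)}(V_{h+1}^n-V_{h+1}^{\pi^n})(s,a)+\Delta_h^n(s,a)$ with $|\Delta_h^n(s,a)|\le\beta_n\norm{\hphi_h(s,a)}_{(\Lambda_h^n)^{-1}}+2\xioff$. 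This is the single place where the upstream representation misspecification $\xioff$ enters, through the gap between $\overline{P}_h$ and $P_h^{(\star,T+1)}$ controlled by \Cref{lemma: Approximate Feature for new task}.

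Next I would pass from $r_h+\langle\hphi_h,w_h^n\rangle$ to $Q_h^n$. Since $Q_h^n(s,a)=\min\{r_h(s,a)+\langle\hphi_h(s,a),w_h^n\rangle+\beta_n\norm{\hphi_h(s,a)}_{(\Lambda_h^n)^{-1}},1\}$, dropping the truncation only increases the value, so $Q_h^n(s,a)-Q_h^{\pi^n}(s,a)\le(r_h(s,a)+\langle\hphi_h(s,a),w_h^n\rangle)-Q_h^{\pi^n}(s,a)+\beta_n\norm{\hphi_h(s,a)}_{(\Lambda_h^n)^{-1}}$. Substituting the previous display and absorbing $|\Delta_h^n|$ yields $Q_h^n(s,a)-Q_h^{\pi^n}(s,a)\le P_h^{(\star,T+1)}(V_{h+1}^n-V_{h+1}^{\pi^n})(s,a)+2\beta_n\norm{\hphi_h(s,a)}_{(\Lambda_h^n)^{-1}}+2\xioff$; this is where the coefficient $2\beta_n$ is produced, one copy from the uncertainty bonus inside $Q_h^n$ and one from $\Delta_h^n$. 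I would then specialize to the visited pair $(s_h^n,a_h^n)$: because $\pi^n$ is greedy with respect to $Q_h^n$, we have $a_h^n=\arg\max_a Q_h^n(s_h^n,a)$, so $V_h^n(s_h^n)=Q_h^n(s_h^n,a_h^n)$ and $V_h^{\pi^n}(s_h^n)=Q_h^{\pi^n}(s_h^n,a_h^n)$, whence the left-hand side becomes exactly $\delta_h^n$.

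Finally I would rewrite the transition term as a recursion. By the linear form of $P_h^{(\star,T+1)}$ and the definitions of $V^n,V^{\pi^n}$, $P_h^{(\star,T+1)}(V_{h+1}^n-V_{h+1}^{\pi^n})(s_h^n,a_h^n)=\Eb[(V_{h+1}^n-V_{h+1}^{\pi^n})(s_{h+1}^n)\mid s_h^n,a_h^n]=\Eb[\delta_{h+1}^n\mid s_h^n,a_h^n]$, and the defining identity $\xi_{h+1}^n=\Eb[\delta_{h+1}^n\mid s_h^n,a_h^n]-\delta_{h+1}^n$ turns this into $\delta_{h+1}^n+\xi_{h+1}^n$. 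Combining the three steps gives $\delta_h^n\le\delta_{h+1}^n+\xi_{h+1}^n+2\beta_n\norm{\hphi_h(s_h^n,a_h^n)}_{(\Lambda_h^n)^{-1}}+2\xioff$, as claimed, on the same probability-$(1-\delta/2)$ event inherited from \Cref{lemma: model free simulation}.

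I do not expect a genuine obstacle, since this is the standard UCB value-difference recursion; the care points are bookkeeping rather than analysis. The subtleties are (i) using the truncation $\min\{\cdot,1\}$ in the correct direction so it may simply be discarded, (ii) tracking both sources of $\beta_n\norm{\hphi_h(s_h^n,a_h^n)}_{(\Lambda_h^n)^{-1}}$ to obtain the coefficient $2\beta_n$, and (iii) carrying the misspecification term through both the bonus step and the bound on $|\Delta_h^n|$ so that it accumulates as $2\xioff$ per layer rather than being lost. All randomness is already captured by the single high-probability event of \Cref{lemma: model free simulation}, so no further concentration argument is needed.
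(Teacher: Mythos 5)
Your proposal is correct and follows essentially the same route as the paper's proof: invoke \Cref{lemma: model free simulation} with $\pi=\pi^n$, specialize to the visited greedy pair $(s_h^n,a_h^n)$ so the left-hand side becomes $\delta_h^n$, and rewrite $P_h^{(\star,T+1)}(V_{h+1}^n-V_{h+1}^{\pi^n})(s_h^n,a_h^n)$ as $\delta_{h+1}^n+\xi_{h+1}^n$. Your bookkeeping is in fact slightly more careful than the paper's own write-up, which elides the step of passing from $r_h+\langle\hphi_h,w_h^n\rangle$ to the truncated, bonus-augmented $Q_h^n$ — precisely the step you make explicit to account for the coefficient $2\beta_n$ (one copy of $\beta_n$ from the bonus inside $Q_h^n$, one from the bound on $|\Delta_h^n|$).
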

	Finally, we combine \Cref{lemma:online-concentration,lemma: model free simulation,lemma: UCB,lemma: recursive formula} to prove \Cref{theorem: downstream online RL sample complexity}.
	\begin{proof}[Proof of \Cref{theorem: downstream online RL sample complexity}]
	The regret can be bounded by
	\begin{align} &\sum_{n=1}^{N_{on}}\left(V^\star_{P^{(\star,T+1)},r}-V^{\pi^n}_{P^{(\star,T+1)},r}\right) \nonumber\\
	&\quad\overset{\RM{1}}{\leq} \sum_{n=1}^{N_{on}}\left\{\left(V^n_1-V^{\pi^n}_{P^{(\star,T+1)},r}\right)+2H\xioff\right\}\nonumber\\
	& \quad\overset{\RM{2}}{\leq} \sum_{n=1}^{N_{on}}\left\{\sum_{h=1}^H\left[\xi_h^n+2\beta_n\norm{\hphi_{h}(s_h^n,a_h^n)}_{(\Lambda_h^n)^{-1}}+2\xioff\right]+2H\xioff\right\}\nonumber\\
	& \quad{\leq} \underbrace{\sum_{n=1}^{N_{on}}\sum_{h=1}^H\xi_h^n}_{\URM{1}}+\underbrace{2\sum_{n=1}^{N_{on}}\sum_{h=1}^H\beta_n\norm{\hphi_{h}(s_h^n,a_h^n)}_{(\Lambda_h^n)^{-1}}}_{\URM{2}}+{4HN_{on}\xioff},\label{ineq: Thm 4.7 Sum}
 	\end{align}
 	where $\RM{1}$ follows from \Cref{lemma: UCB} and $\RM{2}$ follows from \Cref{lemma: recursive formula}.\\
For term $\URM{1}$, note that $\left\{\xi_h^n\right\}_{n=1,h=1}^{N_{on},H}$ is a  martingale difference with $|\xi_h^n| \leq 2$. By Azuma–Hoeffding inequality, with proability at least $1-\delta/4$, we have
 	\begin{align}
 	    \left|\sum_{n=1}^{N_{on}}\sum_{h=1}^H\xi_h^n\right| \leq \sqrt{8N_{on}H\log(8/\delta)}.\label{ineq: Thm 4.7 I}
 	\end{align}
 	For term $\URM{2}$, we derive
 	\begin{align}
 		\URM{2}&=2\sum_{h=1}^H\sum_{n=1}^{N_{on}}\beta_n\norm{\hphi_{h}(s_h^n,a_h^n)}_{(\Lambda_h^n)^{-1}}\nonumber\\
 		&\overset{\RM{1}}{\leq} 2\sum_{h=1}^H\sqrt{\sum_{n=1}^{N_{on}}\beta_n^2}\sqrt{\sum_{n=1}^{N_{on}}\norm{\hphi_{h}(s_h^n,a_h^n)}_{(\Lambda_h^n)^{-1}}^2}\nonumber
 		\\
 		&\overset{\RM{2}}{\leq} 2\sum_{h=1}^H\sqrt{2c_\beta^2\left(d^2\iota_n N_{on}+N_{on}^2d\xioff^2+p\Non\log\Non\right)}\sqrt{2d\log\left(1+\frac{N_{on}}{d\lambda}\right)}\nonumber\\	
 		&{\leq} 2H\sqrt{2c_\beta^2\left(d^2\iota_n N_{on}+N_{on}^2d\xioff^2+p\Non\log\Non\right)}\sqrt{4d\log{N_{on}}}	\nonumber\\
 		&\overset{\RM{3}}{\leq} 4\sqrt{2}c_\beta\left(\sqrt{H^2d^3\iota_n N_{on}\log{N_{on}}}+{HdN_{on}\xioff}\sqrt{\log{N_{on}}}+H\sqrt{dp\Non}\log{\Non}\right),\label{ineq: Thm 4.7 II}
 	\end{align}
 	where $\RM{1}$ follows from Cauchy-Schwarz inequality, $\RM{2}$ follows from \Cref{lemma: Elliptical_potential}, and $\RM{3}$ follows because $\forall x,y \geq 0, \sqrt{x+y}\leq\sqrt{x}+\sqrt{y}$.
 	
 	Combining \Cref{ineq: Thm 4.7 Sum}, \Cref{ineq: Thm 4.7 I} and \Cref{ineq: Thm 4.7 II}, we obtain
 	\begin{align*}
 	\sum_{n=1}^{N_{on}}&\left(V^\star_{P^{(\star,T+1)},r}-V^{\pi^n}_{P^{(\star,T+1)},r}\right) \\
 	 &  \leq 8\sqrt{2}c_\beta\left(\sqrt{H^2d^3\iota_n \Non\log{\Non}}+{Hd\Non\xioff}\sqrt{\log{\Non}}+H\sqrt{dp\Non}\log{\Non}\right)\\
 	 & =\widetilde{O}\left(Hd\Non\xioff+H\sqrt{d^3\Non}+H\sqrt{dp\Non}\right)\\
 	 & =\widetilde{O}\left({HdN_{on}\xioff}+H\sqrt{dN_{on}}\max\{d,\sqrt{p}\}\right).
 	\end{align*}  
 	Dividing both sides by $\Non$, we have 
 	\begin{align}\label{eq:onlinegap1}
 	    V^\star_{P^{(\star,T+1)},r}-V^{\tilde{\pi}}_{P^{(\star,T+1)},r} \leq \widetilde{O}\left({Hd\xioff}+Hd^{1/2}N_{on}^{-1/2}\max\{d,\sqrt{p}\}\right).
 	\end{align}
 	Furthermore, if the linear combination misspecification error $\xi$ (\Cref{assumption: Linear combination}) is $\tilde{O}(\sqrt{d}/\sqrt{\Non})$, and the number of trajectories collected in upstream is as large as $$\widetilde{O}\left({H^3dK^2T\Non}+\left(H^3d^3K+H^5K^2+H^5d^2K\right)T^2\Non+\frac{H^5K^3T\Non}{d^2}\right),$$ then $\xioff$ reduces to $\tilde{O}(\sqrt{d}/\sqrt{\Non})$ by definition and \Cref{theorem: Upstream sample complexity}, and hence the second term in \Cref{eq:onlinegap1} dominates. The suboptimality gap thus becomes
\begin{align*}
    \widetilde{O}\left(Hd^{1/2}\Non^{-1/2}\max\{d,\sqrt{p}\}\right). 
\end{align*} 
	\end{proof}

\section{Auxiliary Lemmas}

In this section, we provide several lemmas that are commonly used for the analysis of MDP problems.

The following lemma \citep{dann2017unifying} will be useful to measure the difference between two value functions under two MDPs and reward functions. We define $P_h V_{h+1}(s_h,a_h) = \Eb_{s\sim P_h(\cdot|s_h,a_h)}\left[V(s)\right]$ as a shorthand notation.
 	\begin{lemma} \label{lemma: Simulation}{\rm(Simulation lemma).}
 		Suppose $P_1$ and $P_2$ are two MDPs and $r_1$, $r_2$ are the corresponding reward functions. Given a policy $\pi$, we have,  
 		\begin{align*}
 		V_{h,P_1,r_1}^{\pi}&(s_h) - V_{h,P_2,r_2}^{\pi}(s_h)\\
 		&= \sum_{\ph=h}^H  \mathop{\Eb}_{s_\ph \sim (P_2,\pi) \atop a_\ph \sim \pi}\left[r_1(s_\ph,a_\ph) - r_2(s_\ph,a_\ph) + (P_{1,\ph} - P_{2,\ph})V^{\pi}_{\ph+1,P_1,r}(s_\ph,a_\ph)|s_h\right]\\
 		& = \sum_{\ph=h}^H  \mathop{\Eb}_{s_\ph \sim (P_1,\pi) \atop a_\ph \sim \pi}\left[r_1(s_\ph,a_\ph) - r_2(s_\ph,a_\ph) + (P_{1,\ph} - P_{2,\ph})V^{\pi}_{\ph+1,P_2,r}(s_\ph,a_\ph)|s_h\right].
 		\end{align*}
 	\end{lemma}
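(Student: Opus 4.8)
The plan is to prove both identities by a telescoping (performance-difference) argument driven by the Bellman recursion, where the only subtlety is which of the two transition kernels we unroll the induced trajectory distribution along. I would set $\Delta_h(s_h) := V_{h,P_1,r_1}^{\pi}(s_h) - V_{h,P_2,r_2}^{\pi}(s_h)$ and first establish a one-step recursion for it. Starting from the Bellman equations $V_{h,P_1,r_1}^{\pi}(s_h) = \Eb_{a_h\sim\pi}[r_1(s_h,a_h) + (P_{1,h}V_{h+1,P_1,r_1}^{\pi})(s_h,a_h)]$ and the analogous one for $(P_2,r_2)$, I would subtract them and insert the cross term $\pm(P_{2,h}V_{h+1,P_1,r_1}^{\pi})(s_h,a_h)$, regrouping into a per-step mismatch plus a propagated residual:
\begin{align*}
\Delta_h(s_h) &= \Eb_{a_h\sim\pi}\big[r_1(s_h,a_h) - r_2(s_h,a_h) + (P_{1,h}-P_{2,h})V_{h+1,P_1,r_1}^{\pi}(s_h,a_h)\big] \\
&\quad + \Eb_{a_h\sim\pi,\, s_{h+1}\sim P_{2,h}(\cdot|s_h,a_h)}\big[\Delta_{h+1}(s_{h+1})\big].
\end{align*}

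Next I would unroll this recursion from $h$ up through $H$ with the base case $\Delta_{H+1}\equiv 0$ (which holds since $V_{H+1}^{\pi}=0$). The key observation is that at every level the residual $\Delta$ is carried forward by the same kernel $P_2$, so composing these single-step $P_2$-transitions from $s_h$ while sampling actions from $\pi$ reproduces exactly the trajectory law $s_\ph\sim(P_2,\pi)$ appearing in the statement. This yields
\begin{align*}
\Delta_h(s_h) = \sum_{\ph=h}^{H}\mathop{\Eb}_{s_\ph\sim(P_2,\pi)\atop a_\ph\sim\pi}\big[r_1(s_\ph,a_\ph) - r_2(s_\ph,a_\ph) + (P_{1,\ph}-P_{2,\ph})V_{\ph+1,P_1,r_1}^{\pi}(s_\ph,a_\ph)\,\big|\,s_h\big],
\end{align*}
which is the first claimed equality (with the paper's $r$ read as $r_1$).

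For the second identity I would run the identical argument but insert the other cross term $\pm(P_{1,h}V_{h+1,P_2,r_2}^{\pi})(s_h,a_h)$, so that the per-step term now carries $V_{h+1,P_2,r_2}^{\pi}$ while the residual is propagated by $P_1$; unrolling then produces the trajectory law $s_\ph\sim(P_1,\pi)$. I do not expect a genuine obstacle: this is a classical simulation/telescoping lemma, and its entire content is bookkeeping. The one point needing care is checking that the nested single-step expectations under the inserted kernel collapse cleanly into the induced trajectory distribution $(P_i,\pi)$ rather than a mixture of the two kernels — which holds precisely because, within each identity, we always insert the \emph{same} cross kernel at every level of the recursion.
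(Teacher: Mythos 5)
Your telescoping argument is correct: the one-step recursion obtained by inserting the cross term $\pm(P_{2,h}V^{\pi}_{h+1,P_1,r_1})$ (resp.\ $\pm(P_{1,h}V^{\pi}_{h+1,P_2,r_2})$) and unrolling with base case $\Delta_{H+1}\equiv 0$ yields exactly the two stated identities, with the residual propagated by a single fixed kernel so that the nested expectations collapse to the trajectory law $(P_2,\pi)$ (resp.\ $(P_1,\pi)$). The paper itself does not prove this lemma but cites it as a known result from the literature; your proof is the standard derivation of that result and is complete.
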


The following lemma is essential in bounding the suboptimality in downstream offline RL (see Lemma~3.1 in~\citet{Jin2021IsPP}).
\begin{lemma}\label{lemma:offline-decom}
Let $\{\widehat{\pi}_h\}_{h=1}^H$ be the policy such that $\widehat{V}_h(s)=\langle\widehat{Q}_h(s,\cdot),\widehat{\pi}_h(\cdot|s)\rangle_{\mathcal{A}}$ and $\zeta_h(s,a)=(\mathbb{B}_h\widehat{V}_{h+1}(s,a))-\widehat{Q}_h(s,a)$. Then for any $\widehat{\pi}$ and $s\in\mathcal{S}$, we have 
\begin{align*}
V_1^\pi(s)-V_1^{\widehat{\pi}}(s)&=\sum_{h=1}^{H}\mathbb{E}_\pi[\zeta_h(s_h,a_a)|s_1=s]-\sum_{h=1}^{H}\mathbb{E}_{\widehat{\pi}}[\zeta_h(s_h,a_h)|s_1=s] \nonumber \\
&\qquad +\sum_{h=1}^{H}\mathbb{E}_{\pi}[\langle\widehat{Q}_h(s_h,\cdot),\pi_h(\cdot|s_h)-\widehat{\pi}(\cdot|s_h)\rangle|s_1=s],
\end{align*}
where the expectation is taken over $s_h,a_h$.
\end{lemma}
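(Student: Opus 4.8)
The plan is to reduce the statement to a single ``extended value difference'' identity and then apply it twice. Fix an arbitrary policy $\pi$, and recall the shorthand $V_h^\pi,Q_h^\pi$ for the value and action-value functions of $\pi$ under the target kernel $P^{(\star,T+1)}$. First I would carry out a one-step decomposition of the gap between the algorithm's value proxy $\widehat V_h$ and the true value $V_h^\pi$. Writing $\widehat V_h(s)=\langle\widehat Q_h(s,\cdot),\widehat\pi_h(\cdot|s)\rangle$ and $V_h^\pi(s)=\langle Q_h^\pi(s,\cdot),\pi_h(\cdot|s)\rangle$ and inserting the auxiliary term $\langle\widehat Q_h(s,\cdot),\pi_h(\cdot|s)\rangle$, I obtain
\begin{align*}
\widehat V_h(s)-V_h^\pi(s)=\langle\widehat Q_h(s,\cdot),\widehat\pi_h(\cdot|s)-\pi_h(\cdot|s)\rangle+\langle\widehat Q_h(s,\cdot)-Q_h^\pi(s,\cdot),\pi_h(\cdot|s)\rangle.
\end{align*}
Since $\widehat Q_h=\mathbb{B}_h\widehat V_{h+1}-\zeta_h=r_h+P_h^{(\star,T+1)}\widehat V_{h+1}-\zeta_h$ while $Q_h^\pi=r_h+P_h^{(\star,T+1)}V_{h+1}^\pi$, the second inner product equals $\mathbb{E}_{a\sim\pi_h}[(P_h^{(\star,T+1)}(\widehat V_{h+1}-V_{h+1}^\pi))(s,a)]-\mathbb{E}_{a\sim\pi_h}[\zeta_h(s,a)]$.

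Next I would unroll this recursion along trajectories generated by $\pi$ under the true kernel. Because $\widehat V$ and $V^\pi$ are propagated by the same transition kernel $P^{(\star,T+1)}$, the term involving $P_h^{(\star,T+1)}(\widehat V_{h+1}-V_{h+1}^\pi)$ telescopes, and using $\widehat V_{H+1}=V_{H+1}^\pi=0$ to terminate the recursion yields the identity
\begin{align*}
\widehat V_1(s)-V_1^\pi(s)=\sum_{h=1}^H\mathbb{E}_\pi\!\left[\langle\widehat Q_h(s_h,\cdot),\widehat\pi_h(\cdot|s_h)-\pi_h(\cdot|s_h)\rangle\,\middle|\,s_1=s\right]-\sum_{h=1}^H\mathbb{E}_\pi[\zeta_h(s_h,a_h)\mid s_1=s].
\end{align*}

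Finally I would apply this identity twice. Specializing to $\pi=\widehat\pi$ annihilates the policy-mismatch inner product, giving $\widehat V_1(s)-V_1^{\widehat\pi}(s)=-\sum_h\mathbb{E}_{\widehat\pi}[\zeta_h(s_h,a_h)\mid s_1=s]$. Subtracting the general-$\pi$ identity from this one cancels the shared $\widehat V_1(s)$ and, after rewriting $-\langle\widehat Q_h,\widehat\pi_h-\pi_h\rangle=\langle\widehat Q_h,\pi_h-\widehat\pi_h\rangle$, produces exactly the claimed decomposition of $V_1^\pi(s)-V_1^{\widehat\pi}(s)$. The only obstacle here is the bookkeeping: I must ensure that the Bellman residual $\zeta_h$ is the sole carrier of model error, that every expectation is taken under $P^{(\star,T+1)}$ so that the two value sequences propagate identically and the transition terms telescope, and that the signs on the policy-mismatch term and the two $\zeta_h$ sums stay consistent through the subtraction. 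There is no analytic estimate involved—the content is purely algebraic—so the result follows immediately once the extended identity is in place.
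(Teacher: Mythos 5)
Your proof is correct. The paper does not prove this lemma itself—it imports it verbatim from Lemma 3.1 of \citet{Jin2021IsPP}—and your argument (one-step decomposition of $\widehat V_h - V_h^\pi$ via the inserted term $\langle\widehat Q_h,\pi_h\rangle$, telescoping the $P_h^{(\star,T+1)}(\widehat V_{h+1}-V_{h+1}^\pi)$ terms with $\widehat V_{H+1}=V_{H+1}^\pi=0$, then applying the resulting identity to both $\pi$ and $\widehat\pi$ and subtracting) is exactly the standard extended-value-difference proof used in that reference, with all signs handled consistently.
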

  	The following lemma is a standard inequality in the regret analysis for linear MDPs in reinforcement learning (see Lemma G.2 in \citet{NEURIPS2020_e894d787} and Lemma 10 in \citet{uehara2021representation}).	
 	\begin{lemma} \label{lemma: Elliptical_potential} {\rm (Elliptical potential lemma).}
 		Consider a sequence of $d \times d$ positive semidefinite matrices $X_1, \dots, X_N$ with ${\rm tr}(X_n) \leq 1$ for all $n \in [N]$. Define $M_0=\lambda_0 I$ and $M_n=M_{n-1}+X_n$. Then
 		\begin{equation*}
 		\sum_{n=1}^N {\rm tr}\left(X_nM_{n-1}^{-1}\right) \leq 2\log \det(M_N)- 2\log \det(M_0) \leq 2d\log\left(1+\frac{N}{d\lambda_0}\right).
 		\end{equation*}
 	\end{lemma}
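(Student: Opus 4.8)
The plan is to prove the two inequalities separately, both by reducing the matrix quantities to eigenvalue computations. For the first (left-hand) inequality I would exploit the telescoping structure of $\log\det(M_n)$. Factoring out $M_{n-1}^{1/2}$ from $M_n=M_{n-1}+X_n$ gives
\[
\log\det(M_n)-\log\det(M_{n-1})=\log\det\!\left(I+M_{n-1}^{-1/2}X_nM_{n-1}^{-1/2}\right).
\]
Setting $A_n:=M_{n-1}^{-1/2}X_nM_{n-1}^{-1/2}\succeq 0$, the right-hand side equals $\sum_i\log(1+\lambda_i(A_n))$, where $\lambda_i(A_n)$ are the eigenvalues of $A_n$, while the cyclicity of the trace and similarity-invariance of the spectrum give $\mathrm{tr}(X_nM_{n-1}^{-1})=\mathrm{tr}(A_n)=\sum_i\lambda_i(A_n)$.

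The key elementary inequality is $\log(1+x)\geq x/2$ for $x\in[0,1]$ (since $f(x)=\log(1+x)-x/2$ satisfies $f(0)=0$ and $f'(x)=(1-x)/(2(1+x))\geq 0$ on $[0,1]$). To invoke it I need $\lambda_i(A_n)\leq 1$ for every $i$; this follows from $M_{n-1}\succeq\lambda_0 I$ (hence $M_{n-1}^{-1}\preceq\lambda_0^{-1}I$) together with $\mathrm{tr}(X_n)\leq 1$, which yields $\lambda_i(A_n)\leq\mathrm{tr}(A_n)\leq\lambda_0^{-1}\mathrm{tr}(X_n)\leq\lambda_0^{-1}\leq 1$ in the standing regime $\lambda_0\geq 1$. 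Applying the bound eigenvalue-by-eigenvalue gives $\mathrm{tr}(X_nM_{n-1}^{-1})\leq 2\big(\log\det(M_n)-\log\det(M_{n-1})\big)$, and summing over $n\in[N]$ telescopes to the first inequality.

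For the second (right-hand) inequality I would use AM-GM on the eigenvalues of $M_N$. Writing $\det(M_N)=\prod_i\sigma_i$ with $\sigma_i>0$ the eigenvalues of $M_N$, AM-GM gives $\det(M_N)\leq(\mathrm{tr}(M_N)/d)^d$. Bounding $\mathrm{tr}(M_N)=d\lambda_0+\sum_{n=1}^N\mathrm{tr}(X_n)\leq d\lambda_0+N$ and using $\log\det(M_0)=d\log\lambda_0$, I obtain
\[
\log\det(M_N)-\log\det(M_0)\leq d\log\!\left(\lambda_0+\tfrac{N}{d}\right)-d\log\lambda_0=d\log\!\left(1+\tfrac{N}{d\lambda_0}\right),
\]
which after multiplying by $2$ is exactly the claimed bound.

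The only subtle point — and the step I would be most careful about — is the eigenvalue bound $\lambda_i(A_n)\leq 1$ required to apply $\log(1+x)\geq x/2$; this is precisely where the hypothesis $\mathrm{tr}(X_n)\leq 1$ and the regularization floor $\lambda_0\geq 1$ (rather than an arbitrary $\lambda_0$) enter. Everything else is routine: trace cyclicity, spectral invariance under similarity, and standard monotonicity/concavity facts for $\det$ and $\log$.
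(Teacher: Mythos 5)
Your proof is correct and follows the standard argument behind this lemma, which the paper does not prove itself but imports from the cited references: telescope $\log\det(M_n)$ via $\log\det(I+M_{n-1}^{-1/2}X_nM_{n-1}^{-1/2})$, apply $x\le 2\log(1+x)$ eigenvalue-by-eigenvalue on $[0,1]$, and finish with AM--GM on the spectrum of $M_N$. Your flagged caveat that $\lambda_0\ge 1$ is genuinely needed is correct rather than a gap in your argument --- the statement as written fails for small $\lambda_0$ (e.g.\ $d=1$, $\lambda_0=0.01$, $X_1=1$ gives $\mathrm{tr}(X_1M_0^{-1})=100$ while $2\log\det(M_1)-2\log\det(M_0)=2\log(101)<10$) --- and is harmless here since the paper only invokes the lemma with regularizers $\lambda,\lambda_n\ge 1$.
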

 	Next, we introduce some useful inequalities that help convert the finite sample error bound into the sample complexity.

\begin{lemma}[$\varepsilon$-Covering Number]\label{lemma: covering number}
 Let $\mathcal{V}$ denote a class of function mapping from $\mathcal{S}$ to $\mathbb{R}$ with the following parametric form
\begin{align*}
V(\cdot)=\min\left\{\max_{a\in\mathcal{A}} r(\cdot,a)+w^\top\phi(\cdot,a)+\alpha\sqrt{\phi(\cdot,a)^\top\Lambda^{-1}\phi(\cdot,a)},1\right\},
\end{align*}
where the parameters $(r,w,\beta,\Lambda)$ satisfy $r\in\mathcal{R}$, $\norm{w}\leq L$, $\alpha\in[0,B]$ and $\Sigma\succeq \lambda I$. Assume $\norm{\phi(s,a)}\leq 1$ for all $(s,a)$ pairs, and let $\mathcal{N}(\varepsilon;r,R,B,\lambda)$ be the $\varepsilon$-covering number of $\mathcal{V}$ with respect to the distance $\mathrm{dist}(V,V')=\sup_s |V(s)-V'(s)|$. Further let $\mathcal{N}_\mathcal{R}(\varepsilon)$ be the $\epsilon$-covering number of function class $\mathcal{R}$. Then 
\begin{align*}
\log|\mathcal{N}(\epsilon;R,B,\lambda)|\leq d\log(1+6R/\varepsilon)+d^2\log(1+18d^{1/2}B^2/(\varepsilon^2\lambda))+\log \Nc_\Rc\big(\frac{\varepsilon}{3}\big)
\end{align*}
where $\Nc_\Rc(\frac{\epsilon}{3})$ is the $\frac{\epsilon}{3}$ covering number with respect to the reward function class $\Rc$.
\end{lemma}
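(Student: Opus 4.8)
The plan is to follow the standard covering-number argument for value-function classes under linear function approximation (Lemma D.6 in \cite{jin2020provably}), adapted to accommodate the general reward class $\Rc$ that enters $V$ additively. The first step is to reparametrize, absorbing the bonus into a single positive semidefinite matrix by writing $A=\alpha^2\Lambda^{-1}$, so that $\alpha\sqrt{\phi(\cdot,a)^\top\Lambda^{-1}\phi(\cdot,a)}=\sqrt{\phi(\cdot,a)^\top A\,\phi(\cdot,a)}$. Under the constraints $\alpha\in[0,B]$ and $\Lambda\succeq\lambda I$, this matrix satisfies $\norm{A}_{\mathrm{op}}\le B^2/\lambda$ and hence $\norm{A}_{\mathrm{F}}\le\sqrt{d}\,B^2/\lambda$. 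Each $V\in\mathcal{V}$ is then completely determined by the triple $(r,w,A)\in\Rc\times\{w\in\Rb^d:\norm{w}\le R\}\times\{A\succeq 0:\norm{A}_{\mathrm{F}}\le\sqrt{d}B^2/\lambda\}$, so it suffices to cover this product parameter space and transfer the cover to $\mathcal{V}$.

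The key reduction is a Lipschitz estimate showing that the sup-distance on $\mathcal{V}$ is controlled by the distances between parameters. Since both $x\mapsto\min\{x,1\}$ and $f\mapsto\max_a f(\cdot,a)$ are $1$-Lipschitz contractions in sup-norm, for $V_1,V_2$ with parameters $(r_1,w_1,A_1),(r_2,w_2,A_2)$ I would bound
\begin{align*}
\sup_s|V_1(s)-V_2(s)|\le\sup_{s,a}\Big|(r_1-r_2)(s,a)+(w_1-w_2)^\top\phi(s,a)+\big(\sqrt{\phi^\top A_1\phi}-\sqrt{\phi^\top A_2\phi}\big)\Big|.
\end{align*}
Using $\norm{\phi(s,a)}\le 1$, the linear term is at most $\norm{w_1-w_2}$; the bonus term is handled by the elementary inequality $|\sqrt{x}-\sqrt{y}|\le\sqrt{|x-y|}$, giving $\sqrt{|\phi^\top(A_1-A_2)\phi|}\le\sqrt{\norm{A_1-A_2}_{\mathrm{F}}}$; and the reward term is at most $\mathrm{dist}_{\Rc}(r_1,r_2)$. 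Consequently, choosing $r_1,r_2$ within $\varepsilon/3$ in $\Rc$, together with $\norm{w_1-w_2}\le\varepsilon/3$ and $\norm{A_1-A_2}_{\mathrm{F}}\le(\varepsilon/3)^2$, guarantees $\mathrm{dist}(V_1,V_2)\le\varepsilon$.

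It then remains to count. I would take an $(\varepsilon/3)$-cover of $\{w:\norm{w}\le R\}$ in $\Rb^d$, of size at most $(1+6R/\varepsilon)^d$; an $(\varepsilon^2/9)$-cover in Frobenius norm of the matrix ball $\{A:\norm{A}_{\mathrm{F}}\le\sqrt{d}B^2/\lambda\}$, which lives in $\Rb^{d^2}$ and thus has size at most $(1+18d^{1/2}B^2/(\varepsilon^2\lambda))^{d^2}$; and an $(\varepsilon/3)$-cover of $\Rc$ of size $\Nc_{\Rc}(\varepsilon/3)$. The product of the three collections of centers yields an $\varepsilon$-cover of $\mathcal{V}$, and taking logarithms gives exactly the claimed additive bound $d\log(1+6R/\varepsilon)+d^2\log(1+18d^{1/2}B^2/(\varepsilon^2\lambda))+\log\Nc_{\Rc}(\varepsilon/3)$.

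The routine pieces are the standard volumetric cover sizes for Euclidean balls. The main obstacle, though a mild one, lies in the bonus term: correctly reparametrizing via $A=\alpha^2\Lambda^{-1}$, tracking the operator-to-Frobenius conversion so the matrix-ball radius comes out as $\sqrt{d}B^2/\lambda$, and invoking the $\sqrt{\cdot}$-Lipschitz inequality so that a quadratic resolution $(\varepsilon/3)^2$ on $A$ suffices to control the bonus to $\varepsilon/3$. The only genuine departure from \cite{jin2020provably} is the additive general reward, but because $r$ enters before two sup-norm contractions its contribution decouples cleanly and merely adds the $\log\Nc_{\Rc}(\varepsilon/3)$ term.
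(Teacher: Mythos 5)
Your proposal is correct and follows essentially the same route as the paper's proof: the reparametrization $A=\alpha^2\Lambda^{-1}$ with Frobenius-norm radius $\sqrt{d}B^2/\lambda$, the contraction-plus-triangle-inequality bound $\mathrm{dist}(V_1,V_2)\leq\sup_{s,a}|r_1-r_2|+\norm{w_1-w_2}+\sqrt{\norm{A_1-A_2}_{\mathrm{F}}}$, and the product of an $(\varepsilon/3)$-cover of $\Rc$, an $(\varepsilon/3)$-cover of the $w$-ball, and an $(\varepsilon^2/9)$-cover of the matrix ball all match the paper's argument exactly. No gaps.
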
 
\begin{proof}
The proof is essentially the same as that in~\cite{jin2020provably} except that the function $r(s,a)$ is not necessarily linear with respect to the representation $\phi(s,a)$, and the function $\phi$ is selected from a function class $\Phi$. 

Reparametrize the function class $\mathcal{V}$ by letting $A=\alpha^2\Lambda^{-1}$, and we have
\begin{align*}
V(\cdot)=\min\left\{\max_{a\in\mathcal{A}} r(\cdot,a)+w^\top\phi(\cdot,a)+\sqrt{\phi(\cdot,a)^\top A\phi(\cdot,a)},1\right\},
\end{align*}
where $r\in\mathcal{R}$, $\norm{w}\leq R$, and $\norm{A}\leq B^2\lambda^{-1}$. For any two functions $V_1,V_2\in\mathcal{V}$, let them take the above form with parameters $(r_1,w_1,A_1)$ and $(r_2,w_2,A_2)$, respectively. Since both $\min\{\cdot,1\}$ and $\max_a$ are contractions, we have
\begin{align}
&\mathrm{dist}(V_1,V_2) \nonumber \\
&\leq \sup_{s,a}\left|\left[r_1(s,a)+w_1^\top\phi(s,a)+\sqrt{\phi(s,a)^\top A_1\phi(s,a)}\right]\right.  \nonumber \\ 
&\qquad \qquad \qquad \qquad  -\left.\left[r_2(s,a)+w_2^\top\phi(s,a)+\sqrt{\phi(s,a)^\top A_2\phi(s,a)}\right]\right| \nonumber \\
&\leq \sup_{s,a}\left|r_1(s,a)-r_2(s,a)\right|+\sup_{\phi:\norm{\phi}\leq 1}\left|\left[w_1^\top\phi+\sqrt{\phi^\top A_1\phi}\right]-\left[w_2^\top\phi+\sqrt{\phi^\top A_2\phi}\right]\right| \nonumber \\
&\leq \sup_{s,a}\left|r_1(s,a)-r_2(s,a)\right|+\sup_{\phi:\norm{\phi}\leq 1}\left|(w_1-w_2)^\top\phi\right|+\sup_{\phi:\norm{\phi}\leq 1}\sqrt{\left|\phi^\top (A_1-A_2)\phi\right|} \nonumber \\
&=\sup_{s,a}\left|r_1(s,a)-r_2(s,a)\right|+\norm{w_1-w_2}+\sqrt{\norm{A_1-A_2}} \nonumber \\
&\leq \sup_{s,a}\left|r_1(s,a)-r_2(s,a)\right|+\norm{w_1-w_2}+\sqrt{\norm{A_1-A_2}_{\mathrm{F}}},  \label{eqn:lemma-cover-1}
\end{align}
where the second to last inequality follows from the fact that $|\sqrt{x}-\sqrt{y}|\leq \sqrt{|x-y|}$ for any $x,y\geq 0$. For matrices, $\norm{\cdot}$ and $\norm{\cdot}_{\mathrm{F}}$ denote the matrix operator norm and Frobenius norm, respectively.

Let $\mathcal{C}_\mathcal{R}$ be an $\frac{\varepsilon}{3}$-cover of $\mathcal{R}$ such that $|\mathcal{C}_\mathcal{R}|= \mathcal{N}_\mathcal{R}(\frac{\varepsilon}{3})$. Let $\mathcal{C}_w$ be an $\frac{\varepsilon}{3}$-cover of $\{w\in\mathbb{R}^d|\norm{w}\leq R\}$ with respect to the $l_2$-norm of a vector, and let $\mathcal{C}_A$ be an $\frac{\varepsilon^2}{9}$-cover of $\{A\in\mathbb{R}^{d\times d}|\norm{A}_{\mathrm{F}}\leq d^{1/2}B^2\lambda^{-1}\}$ with respect to the Frobenius norm. By Lemma D.5 in~\cite{jin2020provably}, it holds that
\begin{align*}
|\mathcal{C}_w|\leq (1+6R/\varepsilon)^d, \qquad |\mathcal{C}_A|\leq [1+18d^{1/2}B^2/(\lambda\varepsilon^2)]^{d^2}.
\end{align*}
By \Cref{eqn:lemma-cover-1}, for any $V_1\in V$, there exists $r_2\in\mathcal{C}_\mathcal{R}$, $w_2\in\mathcal{C}_w$ and $A$ such that $V_2$ parametrized by $(r_2,w_2,A_2)$ satisfies $\mathrm{dist}(V_1,V_2)\leq \varepsilon$. Hence, it holds that $\mathcal{N}(\epsilon;R,B,\lambda)\leq |\mathcal{N}_\mathcal{R}(\frac{\varepsilon}{3})|\cdot|\mathcal{C}_w|\cdot|\mathcal{C}_A|$, which yields the desired result.
\end{proof}

\end{document}